\documentclass{article}

\usepackage[table,dvipsnames]{xcolor}

\usepackage{amsmath}
\usepackage{amssymb}
\usepackage{amsthm}

\definecolor{darkgreen}{RGB}{0,100,0}

\usepackage{graphicx}
\usepackage{svg}
\usepackage{wrapfig}
\usepackage{subcaption}
\usepackage{booktabs}
\usepackage{siunitx}
\usepackage{tabularx}
\usepackage{array}

\usepackage{float}
\usepackage{placeins}

\usepackage{tikz}
\usepackage{tikz-cd}
\usetikzlibrary{positioning,calc,arrows.meta,fit}

\usepackage{multicol}

\usepackage[most]{tcolorbox}

\usepackage[numbers,sort&compress]{natbib}

\usepackage{aliascnt}

\usepackage{amsmath,amsfonts,bm,amssymb}

\def\eqref#1{equation~\ref{#1}}

\def\1{\bm{1}}

\DeclareMathAlphabet{\mathsfit}{\encodingdefault}{\sfdefault}{m}{sl}
\SetMathAlphabet{\mathsfit}{bold}{\encodingdefault}{\sfdefault}{bx}{n}

\newcounter{setting}[section]
\renewcommand{\thesetting}{\thesection.\arabic{setting}}

\newcounter{model}[section]
\renewcommand{\themodel}{\thesection.\arabic{model}}

\newcounter{initialization}[section]
\renewcommand{\theinitialization}{\thesection.\arabic{initialization}}

\usepackage[
    colorlinks=true,
    citecolor=MidnightBlue,
    linkcolor=MidnightBlue,
    urlcolor=MidnightBlue
]{hyperref}

\usepackage{cleveref}

\newcommand{\experimentfigure}[2][]{%
  \IfFileExists{#2}{%
    \includegraphics[#1]{#2}%
  }{%
    \fbox{%
      \parbox[c][0.22\textheight][c]{0.88\linewidth}{%
        \centering
        Figure file not found:\\[0.5em]
        \texttt{\detokenize{#2}}%
      }%
    }%
  }%
}

\newcounter{datasetting}
\renewcommand{\thedatasetting}{\arabic{datasetting}}
\crefname{datasetting}{Data Setting}{Data Settings}
\Crefname{datasetting}{Data Setting}{Data Settings}

\newcounter{trainingsetup}
\renewcommand{\thetrainingsetup}{\arabic{trainingsetup}}
\crefname{trainingsetup}{Training Setup}{Training Setups}
\Crefname{trainingsetup}{Training Setup}{Training Setups}

\newcounter{initializationsetting}
\renewcommand{\theinitializationsetting}{\arabic{initializationsetting}}
\crefname{initializationsetting}{Initialization}{Initializations}
\Crefname{initializationsetting}{Initialization}{Initializations}

\tcbset{
  theorysetting/.style={
    enhanced,
    breakable,
    boxrule=0.8pt,
    arc=2pt,
    left=7pt,
    right=7pt,
    top=6pt,
    bottom=6pt,
    before skip=8pt,
    after skip=8pt,
    fonttitle=\bfseries,
  }
}

\NewDocumentEnvironment{datasetting}{O{}}
{
  \refstepcounter{datasetting}
  \begin{tcolorbox}[
    theorysetting,
    colback=blue!3,
    colframe=blue!45!black,
    title={
      Data Setting~\thedatasetting
      \if\relax\detokenize{#1}\relax\else\ --- #1\fi
    }
  ]
}
{
  \end{tcolorbox}
}

\NewDocumentEnvironment{trainingsetup}{O{}}
{
  \refstepcounter{trainingsetup}
  \begin{tcolorbox}[
    theorysetting,
    colback=green!3,
    colframe=green!40!black,
    title={
      Training Setup~\thetrainingsetup
      \if\relax\detokenize{#1}\relax\else\ --- #1\fi
    }
  ]
}
{
  \end{tcolorbox}
}

\NewDocumentEnvironment{initializationsetting}{O{}}
{
  \refstepcounter{initializationsetting}
  \begin{tcolorbox}[
    theorysetting,
    colback=orange!4,
    colframe=orange!55!black,
    title={
      Initialization~\theinitializationsetting
      \if\relax\detokenize{#1}\relax\else\ --- #1\fi
    }
  ]
}
{
  \end{tcolorbox}
}

\newtheorem{theorem}{Theorem}

\newaliascnt{lemma}{theorem}
\newtheorem{lemma}[lemma]{Lemma}
\aliascntresetthe{lemma}

\newaliascnt{corollary}{theorem}
\newtheorem{corollary}[corollary]{Corollary}
\aliascntresetthe{corollary}

\crefname{theorem}{Theorem}{Theorems}
\Crefname{theorem}{Theorem}{Theorems}

\crefname{lemma}{Lemma}{Lemmas}
\Crefname{lemma}{Lemma}{Lemmas}

\crefname{corollary}{Corollary}{Corollaries}
\Crefname{corollary}{Corollary}{Corollaries}

\crefname{lemma}{Lemma}{Lemmas}

\begin{document}

\begin{center}

\makebox[\textwidth][c]{%
  \parbox[t]{0.84\paperwidth}{%
    \centering
    \Large\bfseries
    Revenge of Monosemanticity: Neuron Specialization\\[-0.05em]
    as a New Form of Feature Learning in MLPs
    \par
  }%
}

\vspace{1.1em}

{\large
\begin{tabular}{cc}
Amirhesam Abedsoltan$^{1}$
&
Enric Boix-Adsera$^{2}$ \\[0.35em]
Fivos Kalogiannis$^{1}$
&
Mikhail Belkin$^{3,1}$
\end{tabular}
}

\vspace{0.9em}

{\small
$^{1}$Department of Computer Science and Engineering, UC San Diego
\par
\makebox[\linewidth][c]{%
$^{2}$Department of Statistics and Data Science, The Wharton School, University of Pennsylvania}
\par
$^{3}$Halıcıoğlu Data Science Institute, UC San Diego
\par}

\end{center}

\vspace{1.0em}

\begin{abstract}
Understanding how neural networks learn and organize features is central to understanding their behavior. Much existing theory of feature learning has focused on the emergence of a global low-dimensional representation. We show that this picture is incomplete. In regression problems with clustered data, we demonstrate that multilayer perceptrons (MLPs) naturally develop monosemantic specialized neurons: individual neurons become strongly aligned with a specific predictive feature relevant to a particular region of the input space. Rather than learning a single global low-dimensional representation, MLPs learn a collection of local low-dimensional representations. We show that this ability to specialize gives MLPs a provable data-efficiency advantage over feature-learning methods based on a global low-dimensional representation.
\end{abstract}

\vspace{1.0em}

\section{Introduction}

A major advance in understanding neural networks was the recognition that neural networks can perform feature learning by exploiting low-dimensional structure in the target, where prediction depends on only a small number of directions in the input space. In such settings, neural networks can adapt their representations to identify these predictive directions~\citep{bach2017breaking,soltanolkotabi2017learning,yehudai2019power,wei2019regularization,ghorbani2019limitations,ghorbani2020when,malach2021quantifying,abbe2022merged,abbe2023sgd,ba2022highdimensional,damian2022neural,dandi2024giant,bruna2025survey}.

In this setting, the response takes the form
\[
f(x)
=
g\!\left(U^{\top}x\right),
\qquad
U\in\mathbb{R}^{d\times r},
\qquad
r\ll d,
\]
where \(g:\mathbb{R}^r\to\mathbb{R}\) is the \emph{link function}. Although \(x\in\mathbb{R}^d\), the response
depends only on the \(r\)-dimensional predictive subspace spanned by the
columns of $U$.
During training, a neural network can learn features aligned with that predictive subspace, effectively reducing the dimensionality of the problem from the ambient dimension $d$ to the intrinsic dimension $r$.

This perspective views feature learning primarily as a mechanism for discovering low-dimensional predictive structure: supervision reshapes the representation so that learning is governed by the intrinsic dimension of the task rather than the ambient dimension of the input.

In our work, we identify a qualitatively different mechanism of feature learning
in multilayer perceptrons (MLPs) that is not tied to recovering a
single global low-dimensional predictive subspace.

\ifdefined\iclrformat
\begin{wrapfigure}{r}{0.37\textwidth}
\else
\begin{wrapfigure}{r}{0.45\textwidth}
\fi
    \centering
    \vspace{-15pt}
    \includegraphics[width=\linewidth]
    {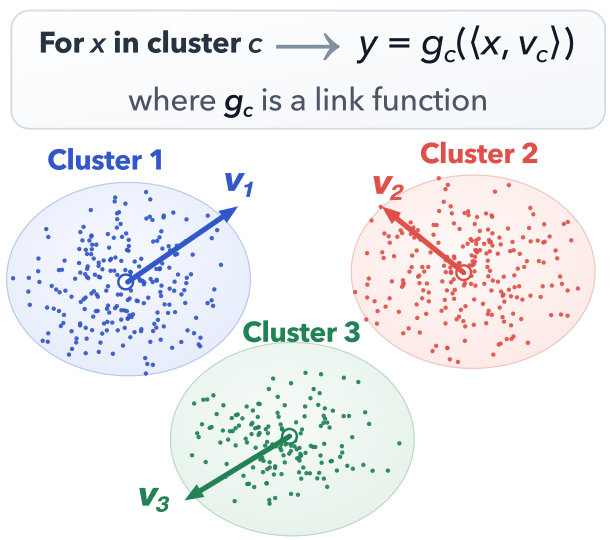}
\end{wrapfigure}

In this setting, the data are drawn from multiple clusters, and each cluster can
have its own predictive directions and link function. Thus, prediction may be
low-dimensional within each cluster even when there is no low-dimensional
global structure. The diagram on the right illustrates the \emph{single-index}
case, where each cluster has one predictive direction.

As we will see, MLPs can simultaneously discover the cluster structure and learn the predictive functions relevant to each cluster. This uncovers a form of feature learning that goes beyond recovering a single global low-dimensional predictive subspace.

\noindent\textbf{Main contributions.}

We show that, in trained MLPs, a substantial fraction of individual neurons become \emph{monosemantic}, 
specializing by aligning predominantly with a single cluster-specific predictive direction. 
This specialization allows the MLP to learn both the relevant local low-dimensional features and an implicit 
clustering that determines where each feature is useful. The resulting behavior resembles a mixture of experts,
 but emerges within a standard MLP without an explicit routing module or expert decomposition.

We show empirically that this specialization leads to increasingly favorable sample-complexity 
scaling as the number of clusters grows, allowing MLPs to maintain strong performance even when 
no single global low-dimensional structure exists. 
While specialization occurs for standard ReLU and GELU~\citep{hendrycks2016gaussian}, 
modern architectures with multiplicative gating, such as ReGLU and SwiGLU~\citep{shazeer2020glu}, 
can substantially improve sample complexity
in this setting.

We also establish a provable advantage of MLPs over methods based on a global low-dimensional representation. In particular, in our theoretical setting, as the number of clusters $K$ increases, the MLP has the ability to achieve optimality (consistency) with only polynomially many training samples, $n=\mathrm{poly}(K)$. In contrast, methods based on a single global low-dimensional representation have a constant error when the number of samples is any polynomial in $K$.

\subsection{Discussion}
The form of feature learning identified in this work differs from classical feature learning, where the main goal
is to recover a single global low-dimensional predictive subspace. In such
settings, methods designed for learning low-dimensional features can perform as well as or
better than MLPs. A prominent example is the Recursive Feature Machine (RFM), a supervised kernel-based method that uses input-response pairs to learn a global low-dimensional predictive subspace~\citep{radhakrishnan2024mechanism,radhakrishnan2025linear}.
RFM is therefore a particularly informative comparison: like an MLP, it learns
features from the data rather than operating with a fixed representation.

\begin{figure}[h]
    \centering
    \includegraphics[width=\linewidth]{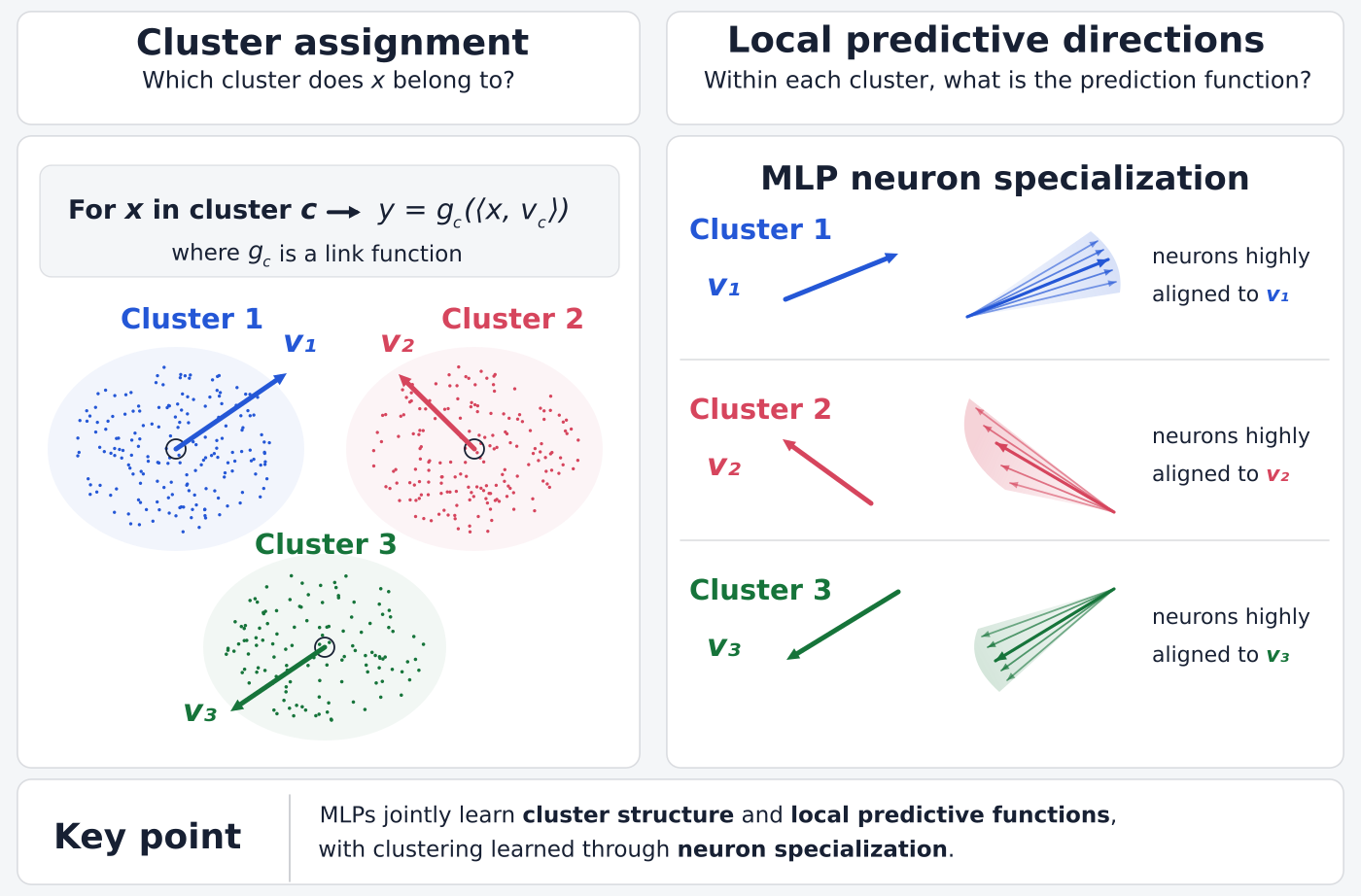}
\end{figure}

To study this phenomenon systematically, we consider a mixture of single-index or
multi-index models. Suppose the data are drawn from \(K\) clusters. For an input
\(x\in\mathbb{R}^d\) belonging to cluster \(c\in[K]\), the target is
\[
f(x)=g_c\!\left(U_c^\top x\right),
\qquad
U_c\in\mathbb{R}^{d\times r}.
\]
Thus, prediction is $r$-dimensional within each cluster, but both the
predictive subspace spanned by the columns of \(U_c\) and the link function
\(g_c\) may vary across clusters. The case \(r=1\) is a single-index model,
whereas \(r>1\) gives a multi-index model.

Note that the standard single- or multi-index model corresponds to \(K=1\). When
\(K>1\), however, the column spaces of \(U_1,\ldots,U_K\) may not generally admit a common
low-dimensional subspace. Hence, prediction can be low-dimensional locally within
each cluster even though there is no useful global low-dimensional predictive
subspace to recover.

\subsection{Prior Work}
\label{sec:prior_work}

\paragraph{Feature learning of global low-dimensional structure.}
A large body of work studies feature learning for multi-index
targets. These works have shown that neural-network models can adapt to an unknown
low-dimensional subspace of the inputs, succeeding where fixed-kernel methods are sample-inefficient
\citep{bach2017breaking,soltanolkotabi2017learning,
yehudai2019power,ghorbani2020when}. Recently, the literature has characterized how gradient-based training succeeds in recovering this subspace for isotropic data
\citep{abbe2022merged,abbe2023sgd,damian2022neural,ba2022highdimensional,dandi2024giant,damian2025generative,bruna2025survey}, and in the related setting of classifying Gaussian-mixture data with a constant number of clusters
\citep{pmlr-v139-refinetti21b,arous2024highdimensional}.

However, in our work, we prove that neuron specialization is a different form of feature learning, allowing MLPs to efficiently tackle settings where a single global low-dimensional representation is insufficient.

\paragraph{Neuron specialization.} The emergence of
specialized neurons has been observed in several settings. For instance, training neural networks to do modular arithmetic leads to specialized neurons, each representing different Fourier components
\citep{nanda2023progress,gromov2023grokking,morwani2024feature,
he2026mechanism,he2026neural}. When learning XOR-type targets, it has also been shown that neurons specialize to four clusters rather than being distributed evenly across a predictive low-dimensional subspace \citep{frei2023random,glasgow2024sgd}. In teacher--student settings, student neurons have also been shown to align frequently with teacher neurons \citep{tian2020student,oostwal2021hidden,zhu2025gradient}, although this depends on initialization \citep{jarvis2025theory}.

These works mainly study specialization to globally relevant features or to cluster directions that themselves determine the target. In our work, neurons specialize to locally predictive features. Thus, an MLP must not only learn the predictive directions, but also preserve their association with the clusters in which they are relevant. Moreover, these works do not isolate a sample-complexity advantage of specialization over an adaptive feature-learning method based on a single global representation, such as the Recursive Feature Machine \citep{radhakrishnan2024mechanism}. We establish such an advantage: when cluster-specific predictive directions collectively span a high-dimensional space, neuron specialization allows an MLP to preserve these local feature-cluster associations and achieve better sample complexity.

\paragraph{Monosemanticity.}
Interpretability works on language models have found that some individual neurons are monosemantic, responding primarily to a single interpretable concept or pattern, while others are \emph{polysemantic}, responding to multiple distinct concepts or patterns \citep{bills2023language,elhage2022solu}. More recent work
has argued that monosemantic features need not align with individual neurons,
and has used sparse dictionary learning to recover more monosemantic feature
directions from neural representations
\citep{bricken2023monosemanticity,templeton2024scaling}. Our work shows that,
at least in simple MLPs, monosemantic features can emerge directly at the level
of individual neurons and that this neuron-level specialization can improve
sample efficiency compared with methods based on a global low-dimensional representation.
\paragraph{Mixture-of-experts models.} A complementary line of work studies specialization and latent-cluster recovery in mixture-of-experts (MoE) architectures. An MoE router can learn cluster-center features that partition a clustered classification problem into simpler subproblems handled by different experts \citep{chen2022towards}. Related theoretical and empirical results show that a learned router can route inputs according to latent clusters \citep{dikkala-etal-2023-benefits}. Particularly close to our statistical setting, recent work studies nonlinear regression with an underlying cluster structure of single-index models and shows that an MoE trained by SGD can detect the latent organization and divide the task into cluster-specific subproblems; under its assumptions, a vanilla neural network does not detect this organization within the same polynomial complexity regime \citep{pmlr-v267-kawata25a}.

These works build specialization into the architecture through an explicit router and separate experts. In contrast, we study a standard MLP with no explicit routing or expert decomposition, and show that both the implicit clustering and the corresponding local predictive-feature specialization can emerge through the specialization of individual neurons.

\subsection{Paper Structure}
The remainder of the paper studies neuron specialization in MLPs empirically and theoretically. Section~\ref{sec:prelim} introduces the setting. Section~\ref{sec:Experiments} shows empirically that MLPs jointly learn cluster structure and cluster-specific predictive functions, achieving increasingly favorable sample complexity over global low-rank methods as the number of clusters grows. We also show that gated activations such as ReGLU and SwiGLU improve sample complexity over ReLU and GELU. Finally, Section~\ref{sec:specialization-theory} theoretically analyzes the emergence of neuron specialization in simplified settings.

\section{Preliminaries}\label{sec:prelim}

Throughout our synthetic experiments and theoretical analysis, we consider
data drawn from a Gaussian mixture model with \(K\) clusters. The cluster
index \(c\) is drawn uniformly from \(\{1,\ldots,K\}\). For each cluster \(c\),
let \(\mu_c\in\mathbb{R}^d\) denote its mean and
\(\Sigma_c\in\mathbb{R}^{d\times d}\) its covariance matrix. Conditioned on
cluster \(c\), the covariates \(x\in\mathbb{R}^d\) are distributed as
\[
x \mid c \sim \mathcal{N}(\mu_c, \Sigma_c).
\]
In all settings considered in this paper, we choose the cluster means so that
distinct clusters are well separated. The majority of our experiments, as well as the
theoretical analysis, use a cluster-specific single-index model with additive
Gaussian noise:
\[
y = g_c\!\left(\langle x, u_c\rangle\right) + \varepsilon,
\qquad
\varepsilon \sim \mathcal{N}(0,\sigma^2),
\]
where \(\varepsilon\) is independent of \(x\) and \(c\),
\(u_c\in\mathbb{R}^d\), with \(\|u_c\|_2=1\), is the cluster-specific
predictive direction, and \(g_c:\mathbb{R}\to\mathbb{R}\) is a
nonlinear link function that may vary across clusters.

In the last experiment, we also consider a multi-index
setting in which each cluster \(c\) has three orthonormal predictive
directions \(u_{c,1},u_{c,2},u_{c,3}\). The same cluster-specific link
function is applied along each direction and the resulting components are
aggregated:
\[
y
=
\frac{1}{\sqrt 3}
\sum_{r=1}^{3}
g_c\!\left(\langle x,u_{c,r}\rangle\right)
+\varepsilon.
\]

Throughout, we use mean squared error (MSE) as the regression loss,
\[
\mathcal{L}(f)
=
\mathbb{E}\!\left[(f(x)-y)^2\right],
\]
where \(f:\mathbb{R}^d\to\mathbb{R}\) denotes the learned predictor.

\section{Experiments}
\label{sec:Experiments}
In this section, we empirically show that MLPs can learn both the cluster structure and the predictive function within each cluster through monosemantic specialized neurons. This gives rise to a form of feature learning that goes beyond recovering a single low-dimensional predictive subspace. We demonstrate these phenomena through four experiments, each presented in a separate subsection.

\subsection{MLPs Develop Specialized First-Layer Neurons}
\label{sec:coordinate-specialization}

We first show that individual first-layer neurons specialize to the
cluster-specific predictive directions. We use the clustered single-index
model introduced in Section~\ref{sec:prelim}, and for this experiment choose
the predictive directions to coincide with distinct coordinate axes. This
choice makes specialization directly visible in the learned first-layer
weights.

\paragraph{Main findings.}

Table~\ref{tab:coordinate-specialization} shows that a significant number of
first-layer neurons in trained MLPs become \emph{monosemantic}, specializing to
individual cluster-specific predictive directions. We illustrate this specialization in
Fig.~\ref{fig:coordinate-specialization-hermite3-neurons} by showing
the first-layer weights of selected neurons with high cosine similarity to their corresponding predictive directions.

\begin{figure}[H]
    \centering
    \includegraphics[width=1.0\textwidth]
    {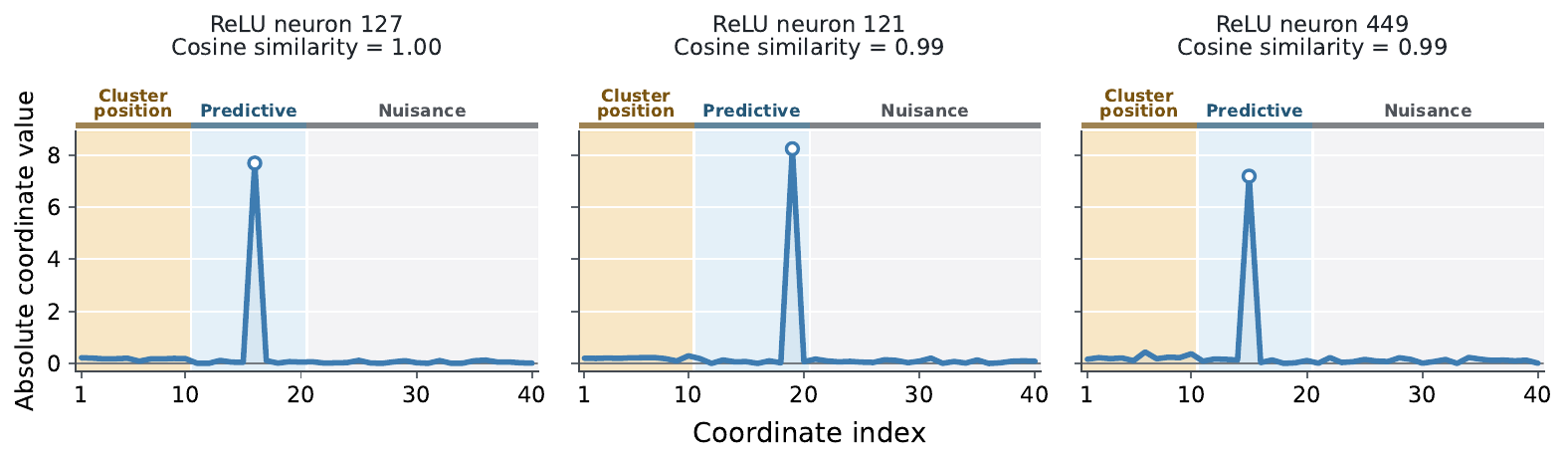}
    \caption{\textbf{Representative specialized neurons.}
     These specialized neurons concentrate
    nearly all of their weights on one cluster-specific predictive coordinate,
    with little weight on the cluster-position and nuisance coordinates.
    Because every predictive direction \(v_c\) is a standard basis vector, concentration on a
    predictive coordinate is equivalent to alignment with the corresponding
    predictive direction.}
    \label{fig:coordinate-specialization-hermite3-neurons}
\end{figure}

\paragraph{Data setting.}
Following the notation of Section~\ref{sec:prelim}, we use \(K=10\) clusters
in \(d=40\). We choose the cluster centers and predictive directions to lie
along separate coordinate axes:
\[
[\mu_c]_j =
\begin{cases}
20, & j=c,\\
0, & \text{otherwise},
\end{cases}
\qquad
[v_c]_j =
\begin{cases}
1, & j=K+c,\\
0, & \text{otherwise},
\end{cases}
\qquad
j\in\{1,\ldots,d\}.
\]
We further set
\[
\Sigma_c = I_d/d,
\qquad
\varepsilon \sim \mathcal{N}(0,0.005^2).
\]
Every cluster uses the normalized third-Hermite polynomial as its link
function. We make this choice to match the theoretical setting analyzed in
Section~\ref{sec:early-specialization}.

Thus, the first \(K\) coordinates identify the clusters, the next \(K\)
coordinates contain the predictive directions, and the remaining coordinates
are nuisance dimensions. Because each \(v_c\) is a standard basis vector,
concentration on a predictive coordinate is equivalent to alignment with the
corresponding predictive direction.

\paragraph{Specialization measures.}
For a neuron's first-layer weight \(w_j\), we measure its maximum
predictive-direction cosine by
\[
    \max_{1\leq c\leq K}
    \frac{|\langle w_j,v_c\rangle|}
    {\|w_j\|_2\,\|v_c\|_2}.
\]
Table~\ref{tab:coordinate-specialization} reports the mean and standard
deviation of the percentage of active neurons for which this cosine is at
least \(0.71\) or \(0.90\). The first threshold means that at
least approximately half of the squared weight norm lies along a single predictive
direction, while \(0.90\) is a stricter measure of alignment. We also report
whether the largest predictive coordinate exceeds every other coordinate and
whether it is more than twice as large as every other coordinate. These
coordinate-dominance measures capture specialization when many small
coefficients reduce cosine similarity; see
Appendix~\ref{app:coordinate-specialization-details} for more detail.

\begin{table}[H]
    \centering
    \small
    \renewcommand{\arraystretch}{1.15}
    \setlength{\tabcolsep}{5pt}

    \resizebox{\textwidth}{!}{%
    \begin{tabular}{@{}lccccc@{}}
        \toprule
        & & \multicolumn{4}{c}{\textbf{Specialization measures
        (\% of active neurons)}} \\
        \cmidrule(lr){3-6}
        \textbf{Model/weight}
        & \shortstack{\textbf{Active}\\\textbf{neurons}\\
          \textbf{(out of 2048)}}
        & \shortstack{\textbf{Max. predictive}\\
          \textbf{cosine} \(\mathbf{\geq .71}\,\uparrow\)}
        & \shortstack{\textbf{Max. predictive}\\
          \textbf{cosine} \(\mathbf{\geq .90}\,\uparrow\)}
        & \shortstack{\textbf{Predictive coord.}\\
          \textbf{is largest} \(\mathbf{\uparrow}\)}
        & \shortstack{\textbf{Predictive coord.}\\
          \(\mathbf{>2\times}\) \textbf{every other} \(\mathbf{\uparrow}\)} \\
        \midrule
        ReLU & \(469\pm57\) & \(56.75\pm3.91\) & \(33.22\pm1.92\) & \(92.6\pm0.9\) & \(58.8\pm3.1\) \\
        GELU & \(1{,}747\pm83\) & \(30.15\pm2.48\) & \(18.40\pm1.20\) & \(71.2\pm0.7\) & \(34.6\pm2.2\) \\
        ReGLU gate & \(200\pm56\) & \(51.89\pm1.22\) & \(33.82\pm1.27\) & \(92.7\pm0.6\) & \(54.2\pm1.3\) \\
        ReGLU value & \(200\pm56\) & \(24.98\pm0.52\) & \(3.94\pm2.24\) & \(69\pm1\) & \(22.0\pm1.4\) \\
        SwiGLU gate & \(432\pm5\) & \(25.06\pm2.29\) & \(12.26\pm1.27\) & \(72.3\pm0.9\) & \(23.3\pm2.0\) \\
        SwiGLU value & \(432\pm5\) & \(17.88\pm2.88\) & \(4.62\pm0.96\) & \(86.5\pm2.6\) & \(17.9\pm2.1\) \\
        \bottomrule
    \end{tabular}%
    }

    \caption{
    \textbf{First-layer neurons are monosemantic.}
    Across activations, many neurons align with a single cluster-specific predictive direction. The cosine similarity columns report the percentage of active neurons exceeding the indicated alignment threshold. Appendix~\ref{app:coordinate-specialization-details} describes the activeness criterion, predictive-coordinate dominance, and additional experiments. Entries show mean $\pm$ standard deviation over three runs, resampling the training, validation, and test data, label noise, network initialization, and optimization minibatches.
}
    \label{tab:coordinate-specialization}
\end{table}

\paragraph{Details and additional experiments.}
Appendix~\ref{app:specialization-measures-examples} motivates
predictive-coordinate dominance as a complementary specialization measure
and illustrates it with ReLU and GELU neurons.
Appendix~\ref{app:shared-specialization-analysis} defines the active-neuron
criterion, while Appendix~\ref{app:specialization-null} compares alignment
with the true predictive directions against a random-direction baseline.
Training details are provided in
Appendix~\ref{app:hermite3-specialization-experiment}, and
Appendix~\ref{app:hermite2-specialization-experiment} presents an additional
link-function experiment showing specialization beyond the setting of our
theoretical analysis.

\subsection{MLPs Jointly Learn Cluster Structure and Cluster-Specific Predictive Functions}
\label{sec:local-routing}

Following the clustered single-index model introduced in
Section~\ref{sec:prelim}, we consider regression problems in
\(d=20\) in which both the predictive direction \(v_c\) and link function
\(g_c\) may differ across clusters. We vary the number of clusters over
\(K\in\{1,2,10,50\}\).

\paragraph{Main findings.}
Figure~\ref{fig:local-sample-complexity} and
Table~\ref{tab:singleindex-mixed-k10-specialization-null} highlight three main
conclusions:
\begin{enumerate}
    \item \textbf{Sample complexity as the number of clusters grows.}
    MLPs outperform other methods as the number of clusters increases.
    Remarkably, MLPs achieve performance close to that of the local predictors,
    which are given the true cluster identities and fit a separate RFM or
    Laplace predictor within each cluster. Thus, MLPs recover much of the
    benefit of knowing the cluster structure without ever observing the
    cluster identities.

    \item \textbf{Gated activations enhance cluster-dependent feature learning.}
    ReLU already learns useful cluster-dependent structure and substantially
    outperforms global RFM when many clusters are present. ReGLU improves
    further in this regime. Its separate gate and value branches allow the
    network to select cluster-specific predictive features more directly than
    an ordinary ReLU hidden layer.

    \item \textbf{Specialization beyond axis-aligned orthogonal
    directions.}
     Table~\ref{tab:singleindex-mixed-k10-specialization-null} shows that a substantial fraction of active first-layer neurons specialize to the cluster-specific predictive directions. In this experiment, the predictive
    directions are neither axis-aligned nor constrained to be mutually orthogonal. As a baseline, we show that the alignment of the
    same active neurons with random directions in the same predictive
    subspace is negligible compared with their alignment with the true
    predictive directions.
    Appendix~\ref{app:singleindex-specialization-null} provides the full
    construction and analysis.
\end{enumerate}

\begin{figure}[H]
    \centering

    \includegraphics[width=\textwidth]
    {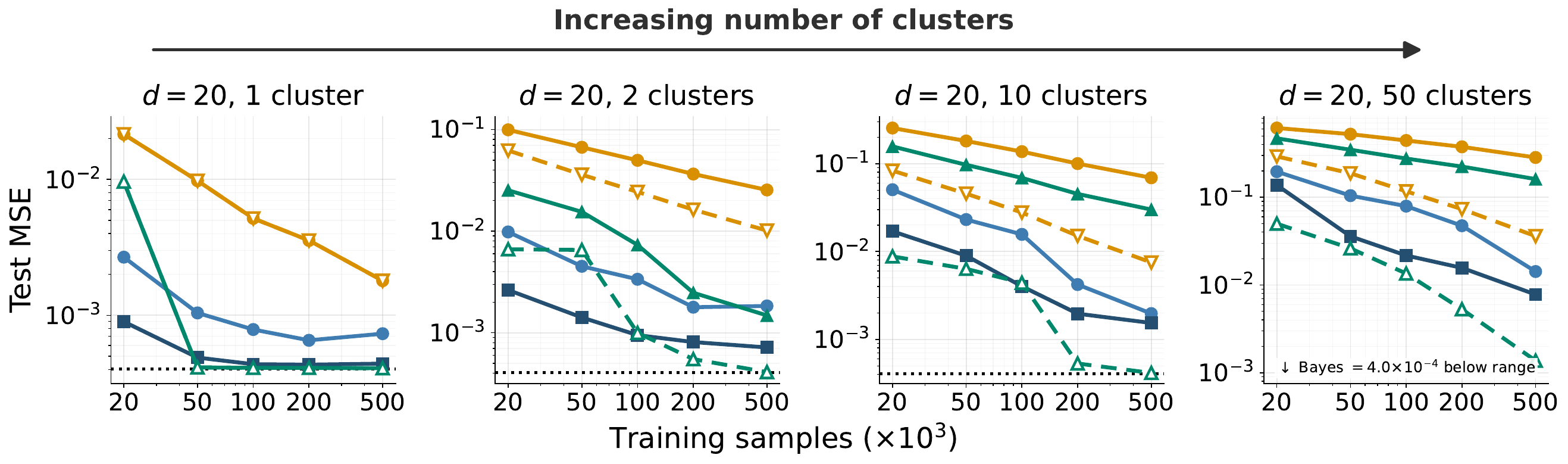}

    \par\vspace{-0.2em}

    \includegraphics[width=0.84\textwidth]
    {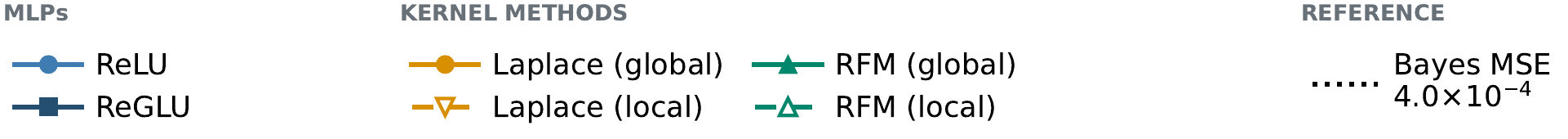}

    \vspace{-0.3em}

    \caption{
    \textbf{MLPs jointly learn clustering and local predictive
    functions.}
    As the number of clusters grows, MLPs, especially ReGLU, maintain strong performance close to local Laplace and RFM, while global RFM becomes increasingly less sample efficient. The local methods are given the true cluster identities and fit a separate predictor within each cluster. Curves show loss averaged over three runs with different link-function assignments, data, label noise, model initialization, and optimization minibatches.
}
    \label{fig:local-sample-complexity}
\end{figure}

\begin{table}[H]
    \centering
    \small
    \renewcommand{\arraystretch}{1.15}
    \setlength{\tabcolsep}{4pt}
    \resizebox{\textwidth}{!}{%
    \begin{tabular}{@{}lccccc@{}}
        \toprule
        & & \multicolumn{4}{c}{\textbf{Specialization measures
        (\% of active neurons)}} \\
        \cmidrule(lr){3-6}
        \textbf{Model/weight}
        & \shortstack{\textbf{Active}\\\textbf{neurons}\\\textbf{(out of 4096)}}
        & \shortstack{\textbf{True}\\\textbf{directions}\\
          \textbf{cosine} \(\mathbf{\geq .71}\,\uparrow\)}
        & \shortstack{\textbf{Random}\\\textbf{directions}\\
          \textbf{cosine} \(\mathbf{\geq .71}\)}
        & \shortstack{\textbf{True}\\\textbf{directions}\\
          \textbf{cosine} \(\mathbf{\geq .90}\,\uparrow\)}
        & \shortstack{\textbf{Random}\\\textbf{directions}\\
          \textbf{cosine} \(\mathbf{\geq .90}\)} \\
        \midrule
        ReLU & \(558\pm36\) & \(12.72\pm3.54\) & \(0.29\pm0.49\) & \(3.02\pm0.79\) & \(0.00\pm0.02\) \\
        GELU & \(287\pm15\) & \(16.87\pm0.79\) & \(0.83\pm0.85\) & \(8.01\pm0.66\) & \(0.00\pm0.04\) \\
        ReGLU gate & \(133\pm44\) & \(16.87\pm4.31\) & \(0.56\pm1.08\) & \(5.02\pm1.31\) & \(0.00\pm0.04\) \\
        ReGLU value & \(133\pm44\) & \(79.65\pm4.15\) & \(6.76\pm7.44\) & \(60.12\pm6.44\) & \(0.05\pm0.46\) \\
        SwiGLU gate & \(248\pm6\) & \(5.39\pm1.15\) & \(0.17\pm0.29\) & \(2.00\pm0.77\) & \(0.00\pm0.02\) \\
        SwiGLU value & \(248\pm6\) & \(16.14\pm0.79\) & \(0.67\pm0.85\) & \(7.16\pm2.43\) & \(0.00\pm0.04\) \\
        \bottomrule
    \end{tabular}%
    }

    \caption{\textbf{First-layer neurons align specifically with the true
    cluster-dependent predictive directions.}
    In the single-index experiment of Fig.~\ref{fig:local-sample-complexity},
    the fraction of active neurons aligned with a true predictive direction is
    substantially larger than the fraction aligned with
    random directions in the same predictive subspace. Results use \(K=10\) clusters and \(n=500{,}000\) training samples.
True-direction entries and active-neuron counts report mean \(\pm\) sample standard deviation over the
same three runs as Fig.~\ref{fig:local-sample-complexity}.
Random-direction entries report mean \(\pm\) sample standard deviation
over \(15{,}000\) pooled comparisons (\(5{,}000\) random direction sets per
trained model).
Appendix~\ref{app:singleindex-specialization-null}
    defines the active-neuron criterion and randomization protocol.}
    \label{tab:singleindex-mixed-k10-specialization-null}
\end{table}

\paragraph{Data setting.}
Following Section~\ref{sec:prelim}, we consider
\(K\in\{1,2,10,50\}\) clusters in \(d=20\). We divide the input into
ten cluster-identifying coordinates and ten predictive coordinates. The cluster centers are chosen as well-separated unit vectors $s_c\in\mathbb S^9:=\{s\in\mathbb R^{10}:\|s\|_2=1\}$. Details of their construction are provided in Appendix~\ref{appendix:main_exp_detail}. They are embedded as
\[
    \mu_c=
    \begin{bmatrix}
        s_c\\
        0
    \end{bmatrix}.
\]
The cluster-specific predictive directions are sampled independently and
uniformly from the unit sphere in the predictive subspace:
\[
    \widetilde v_c\sim\operatorname{Unif}(\mathbb S^9),
    \qquad
    v_c=
    \begin{bmatrix}
        0\\
        \widetilde v_c
    \end{bmatrix},
    \qquad
    \Sigma_c=
    \begin{pmatrix}
        \sigma_K^2 I_{10} & 0\\
        0 & I_{10}
    \end{pmatrix},
    \qquad
    \varepsilon\sim\mathcal N(0,0.02^2).
\]
Here, \(\sigma_K\) is adjusted with \(K\) so that the clusters remain well
separated. Full details of the center construction and the choice of
\(\sigma_K\) are provided in
Appendix~\ref{appendix:main_exp_detail}.

Each cluster is also assigned a link function $g_c$, sampled uniformly with
replacement from four choices: the normalized second-order Hermite polynomial,
the normalized third-order Hermite polynomial, the sine function, and the
hyperbolic tangent function. Thus, both the predictive direction $v_c$ and the
link function $g_c$ may vary across clusters.

\paragraph{Details and additional experiments.}
Appendix~\ref{appendix:main_exp_detail} provides the experimental protocol
and extends the comparison in Figure~\ref{fig:local-sample-complexity}
to include GELU and SwiGLU.
Appendix~\ref{app:singleindex-specialization-null} defines the
specialization measure and the random-direction baseline used to assess
alignment with the true predictive directions.
Appendix~\ref{app:importance-pruning} examines whether active neurons
retain predictive performance after pruning, through full and
readout-only fine-tuning.
Appendix~\ref{app:fixed-h3-single-index-details} considers a common
third-Hermite link function to match the setting of our theoretical analysis.

\subsection{Trained MLPs Encode Cluster Structure}
\label{sec:learned-gating}

\begin{figure}[H]
  \centering
  \experimentfigure[width=1.0\linewidth]{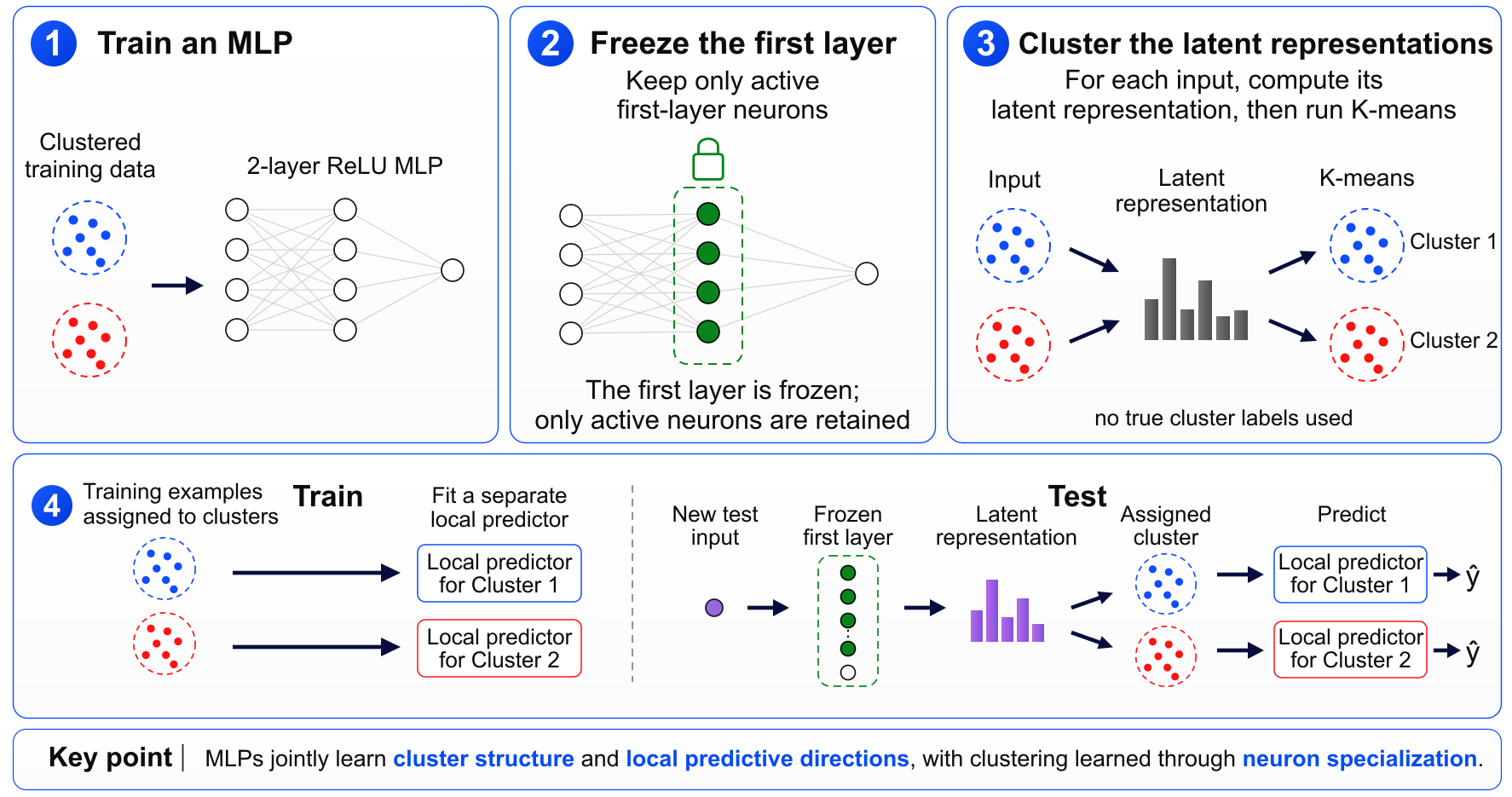}
  \caption{\textbf{Overview of the MLP-gated local-prediction procedure.} A ReLU
  MLP is trained without cluster labels, K-means is applied to its first-layer
  representations, and a local predictor is fit within each learned cluster.
  Test inputs are routed using the nearest learned centroid.}
  \label{fig:mlp-gating-procedure}
\end{figure}

In this experiment, we show that an MLP's first-layer representation captures both cluster identity and cluster-specific predictive structure. We train a ReLU MLP without cluster labels and construct representations using
only its active first-layer neurons. We apply K-means to these
representations. For each learned cluster, we use the corresponding active
neurons' first-layer weights to construct features for an independent
local Laplace or RFM predictor. At test time, an input is assigned to its
nearest learned centroid and evaluated by the corresponding local predictor.
Figure~\ref{fig:mlp-gating-procedure} summarizes this procedure.
Appendix~\ref{app:learned-gating-details} describes the active-neuron
selection criterion used in this experiment and provides the data,
training, representation-clustering, and evaluation protocols.

\paragraph{Main findings.}
Figure~\ref{fig:local-sample-complexity_kmeans} shows that using clusters
extracted from the MLP representation substantially improves both Laplace and
RFM over their global counterparts. In particular, MLP-gated RFM consistently
outperforms global RFM and approaches local RFM, which is fitted using the
true clusters. Thus, the
first-layer representation both separates the clusters and preserves their
local predictive features.

\begin{figure}[H]
    \centering
    \experimentfigure[width=\linewidth]
    {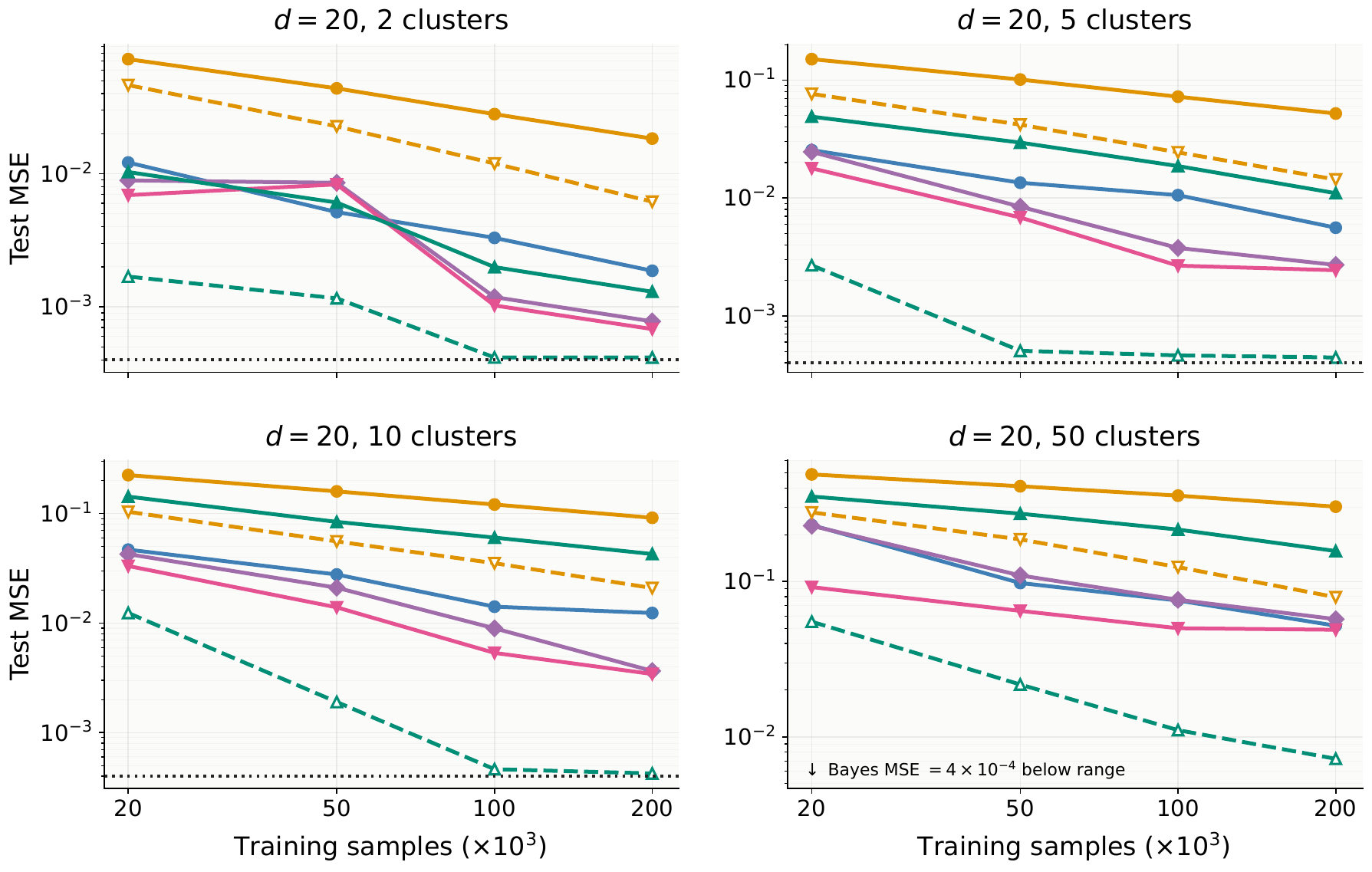}

    \vspace{0.3em}

    \experimentfigure[width=0.82\linewidth]
    {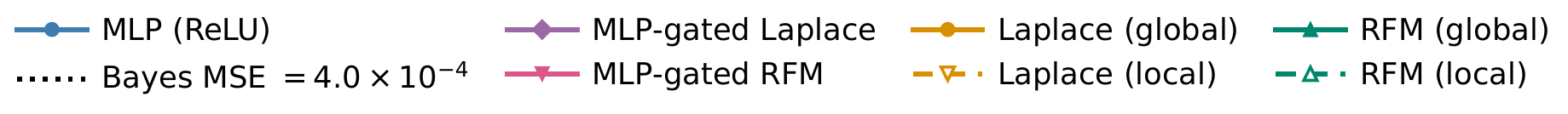}

    \caption{\textbf{Trained MLP representations encode cluster
    structure.}
    Clustering representations extracted from a trained ReLU MLP and fitting
    separate local predictors within the resulting groups improve both
    Laplace and RFM. In particular, MLP-gated RFM consistently outperforms
    global RFM and narrows the gap to local RFM, which is fitted using the
    true clusters. Curves show averaged loss over three runs with different cluster-specific link assignments, training and test examples, label noise, model and \(K\)-means initializations, and optimization minibatch orders.}
    \label{fig:local-sample-complexity_kmeans}
\end{figure}

\paragraph{Data setting.}
We use the clustered-data construction of
Section~\ref{sec:local-routing} with
\(K\in\{2,5,10,50\}\). Each cluster's link function is sampled uniformly from
the normalized third-Hermite polynomial, sine, and hyperbolic tangent. We
average results over three independent seeds and evaluate each run on
\(10{,}000\) test examples.

\subsection{MLPs Retain Their Advantage for Clustered Multi-Index Models}
\label{sec:clustered-multi-index}

We extend the experiment of Section~\ref{sec:local-routing} to clustered
multi-index models and show that MLPs retain their advantage over other methods.

\paragraph{Main findings.}
Figure~\ref{fig:clustered-multi-index-results} shows that MLPs retain their
advantage over global Laplace and RFM as the number of clusters increases.
The gated architectures generally perform best and approach local RFM at
larger sample sizes. Thus, the
benefit of cluster-dependent feature learning is not limited to
single-index targets. Table~\ref{tab:multiindex-mixed-k10-span-specialization} further shows that
active neurons align with cluster-specific predictive spans. Across all activations,
alignment with the true cluster-specific
predictive spans exceeds alignment with random three-dimensional
subspaces drawn within the entire predictive subspace.

\begin{figure}[H]
    \centering
    \includegraphics[width=\linewidth]
    {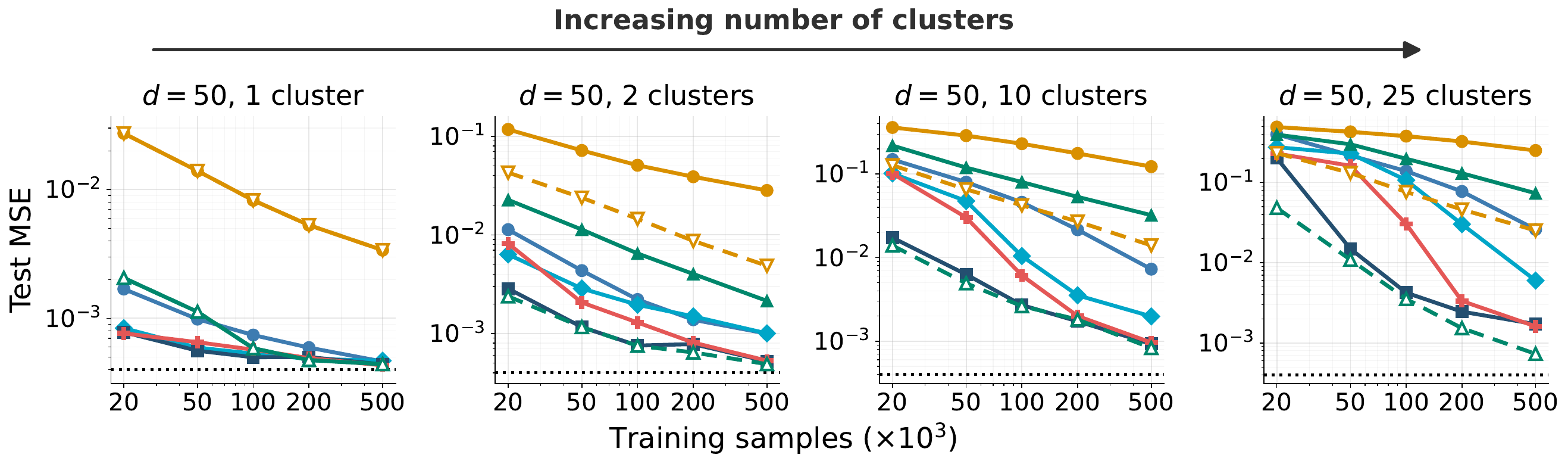}
    \includegraphics[width=\linewidth]
    {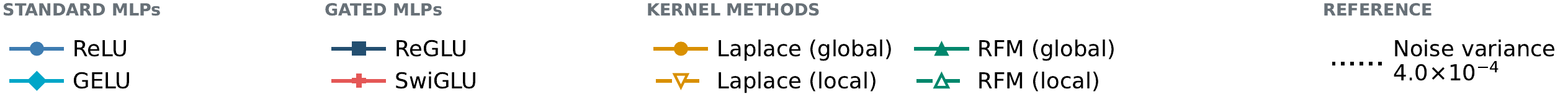}

    \caption{\textbf{MLPs retain their advantage for clustered multi-index
    targets.}
    As the number of clusters grows, MLPs, particularly ReGLU and SwiGLU,
    remain more sample efficient than global Laplace and RFM and approach
    local RFM at larger sample sizes. The local methods are given the true
    cluster identities and fit a separate predictor within each cluster. Curves show mean
    test MSE over three runs with different link-function assignments,
    data, label noise, model initialization, and optimization minibatches.
    Cluster centers and predictive spans are held fixed across these runs.
    The dotted line marks the observation-noise variance.}
    \label{fig:clustered-multi-index-results}
\end{figure}

\begin{table}[H]
    \centering
    \small
    \renewcommand{\arraystretch}{1.15}
    \setlength{\tabcolsep}{4pt}
    \resizebox{\linewidth}{!}{%
    \begin{tabular}{@{}lccccc@{}}
        \toprule
        & &
        \multicolumn{2}{c}{\textbf{Span cosine} \(\boldsymbol{\geq0.71}\)}
        & \multicolumn{2}{c}{\textbf{Span cosine} \(\boldsymbol{\geq0.90}\)}
        \\
        \cmidrule(lr){3-4}
        \cmidrule(lr){5-6}
        \textbf{Model/weight}
        & \shortstack{\textbf{Active neurons}\\\textbf{(out of 2048)}}
        & \shortstack{\textbf{True subspace}\\\textbf{(\%)}}
        & \shortstack{\textbf{Random subspaces}\\\textbf{(\%)}}
        & \shortstack{\textbf{True subspaces}\\\textbf{(\%)}}
        & \shortstack{\textbf{Random subspaces}\\\textbf{(\%)}}
        \\
        \midrule
        ReLU & \(1875\pm126\) & \(13.07\pm6.69\) & \(0.05\pm0.08\) & \(1.80\pm1.33\) & \(0.00\pm0.00\) \\
        GELU & \(661\pm332\) & \(11.35\pm5.62\) & \(0.12\pm0.20\) & \(4.68\pm3.47\) & \(0.00\pm0.00\) \\
        ReGLU gate & \(274\pm10\) & \(8.59\pm2.34\) & \(0.06\pm0.15\) & \(1.59\pm0.87\) & \(0.00\pm0.00\) \\
        ReGLU value & \(274\pm10\) & \(47.46\pm3.14\) & \(1.32\pm0.97\) & \(27.13\pm7.45\) & \(0.00\pm0.01\) \\
        SwiGLU gate & \(487\pm21\) & \(7.29\pm0.88\) & \(0.10\pm0.16\) & \(3.17\pm0.73\) & \(0.00\pm0.00\) \\
        SwiGLU value & \(487\pm21\) & \(12.70\pm1.67\) & \(0.12\pm0.19\) & \(4.75\pm1.09\) & \(0.00\pm0.00\) \\
        \bottomrule
    \end{tabular}%
    }
    \caption{\textbf{First-layer neurons specialize to cluster-specific
    predictive spans in the multi-index targets.}
    A substantially larger fraction of active neurons aligns with the true
    predictive spans than with random subspaces. Results use \(d=50\),
    \(K=10\) clusters, and \(n=500{,}000\) training samples.
    True-span entries and active-neuron counts report mean \(\pm\) sample standard
    deviation over the same three runs as
    Fig.~\ref{fig:clustered-multi-index-results}. Random-subspace entries
    report mean \(\pm\) sample standard deviation over \(15{,}000\) pooled
    comparisons (\(5{,}000\) random subspace collections per trained model).
    Appendix~\ref{app:multiindex-specialization-null} defines the
    active-neuron criterion, alignment measure, and random-subspace control.}
    \label{tab:multiindex-mixed-k10-span-specialization}
\end{table}

\paragraph{Data setting.}
We use the multi-index model defined in Section~\ref{sec:prelim}, with
\(d=50\), \(K\in\{1,2,10,25\}\), three predictive directions per cluster,
and noise standard deviation \(\sigma=0.02\). The input comprises twenty
cluster-identifying coordinates, twenty predictive coordinates, and ten
nuisance coordinates.
Each cluster independently receives one link function, sampled uniformly
from the normalized second-order Hermite polynomial, sine, and hyperbolic
tangent.
Only the local Laplace and RFM methods receive the true cluster identities.

Appendix~\ref{app:clustered-multi-index-details} provides the data,
training, evaluation, and specialization-analysis protocols.

\clearpage
\section{Theory}
\label{sec:specialization-theory}

In this section, we prove two main results describing how
MLPs learn data with cluster structure. In
Theorem~\ref{thm:neurons-specialize}, we prove that neurons in the MLP
specialize during training. In Theorem~\ref{thm:rfm-mlp-gap}, we prove that,
as the number of clusters tends to infinity, MLPs outperform standard kernel
methods and Recursive Feature Machines (RFM) in terms of sample complexity. This comes as a consequence of the fact that the latter two methods cannot compute specialized features for each
cluster.

\subsection{MLP Neurons Specialize When Learning on Gaussian Mixture Model Data}
\label{sec:early-specialization}

In this subsection, we show that, under small initialization,
neurons trained on well-separated Gaussian mixture data specialize to
cluster-specific predictive directions.

We consider $K$-cluster Gaussian mixture data in $d=2K$ dimensions. Let
$e_1,\ldots,e_K$ denote the standard basis of $\mathbb R^K$. We use the first
$K$ coordinates to separate the cluster means and the last $K$ coordinates
for the cluster-specific predictive directions.\footnote{Note that independent orthogonal changes of basis in the
two blocks preserve the isotropic Gaussian noise and all conclusions below.}

\begin{datasetting}[Symmetric Gaussian mixture]
\label{data:neuron-specialization}

Let $R > 0$ be a cluster separation parameter and, for $c\in[K]$, let
\[
    \mu_c=
    \begin{bmatrix}
        Re_c\\
        0
    \end{bmatrix},
    \qquad
    v_c=
    \begin{bmatrix}
        0\\
        e_c
    \end{bmatrix}.
\]
Set each cluster to be isotropic with $\Sigma_c=I_d$ and use the common link function $g_c=h_3$, where $h_3(t)=(t^3-3t)/\sqrt6$ is the third Hermite polynomial.
\\

In the notation of Section~\ref{sec:prelim}, the data distribution is
\[
    c\sim\operatorname{Unif}([K]),
    \qquad
    x\mid c\sim\mathcal N(\mu_c,\Sigma_c),
    \qquad
    y=g_c\!\left(\langle x,v_c\rangle\right)
      =h_3\!\left(\langle x,v_c\rangle\right).
\]
\end{datasetting}

Thus the routing coordinates encode cluster identity through the means
$Re_c$, where $R$ should be thought of as a large cluster-separation
parameter, so that the clusters are well separated, while within cluster $c$ the response depends only on the
cluster-specific predictive direction $v_c$. The cubic Hermite link function is chosen because there is an explicit expression for the expected product $\mathbb{E}[y\,\phi(\omega^\top x)]$ between the response and the ReLU neuron's activation, allowing us to characterize the directions to which neurons converge.
Next, we consider training a neural network to learn this data distribution.

\begin{trainingsetup}[Two-layer ReLU population gradient flow]
\label{train:neuron-specialization}
We train the two-layer ReLU network
\[
    f_\theta(x)=\sum_{j=1}^m a_j\phi(w_j^\top x),
    \qquad
    \phi(t)=t_+,
\]
by population gradient flow on the squared loss
\[
    \mathcal L(\theta)
    =
    \frac12\mathbb E\left[
        \bigl(f_\theta(x)-y\bigr)^2
    \right].
\]
\end{trainingsetup}

\begin{initializationsetting}[Small random hidden weights and zero output layer]
\label{init:neuron-specialization}

The hidden directions are initialized independently and uniformly at random, scaled by a parameter $\varepsilon > 0$,
while the output layer is initialized at zero:
\[
    \omega_j^0
    \stackrel{\mathrm{iid}}{\sim}
    \operatorname{Unif}(\mathbb S^{d-1}),
    \qquad
    w_j(0)=\varepsilon\omega_j^0,
    \qquad
    a_j(0)=0.
\]

\end{initializationsetting}

Under this initialization and data distribution, we are able to prove that the neurons in the MLP specialize to the clusters of the Gaussian mixture.
This result is consistent with, and provides theoretical support for, our empirical observations of neuron specialization in \Cref{sec:coordinate-specialization}.

\begin{theorem}[Randomly initialized neurons specialize]
\label{thm:neurons-specialize}
Under \cref{data:neuron-specialization}, \cref{train:neuron-specialization},
and \cref{init:neuron-specialization}, there are universal constants
$R_0,C<\infty$ such that the following holds
for every $R\geq R_0$. For almost every draw of
$\omega_1^0,\ldots,\omega_m^0$, there are cluster labels
$J_1,\ldots,J_m\in[K]$ and orientations
$\tau_1,\ldots,\tau_m\in\{\pm1\}$ such that, for every $\delta>0$, there
exist a finite time $T_\delta$ and $\varepsilon_0>0$ for which
\[
    0<\varepsilon\leq\varepsilon_0
    \quad\Longrightarrow\quad
    \left\|
        \frac{w_j(T_\delta)}{\|w_j(T_\delta)\|_2}
        -\tau_j v_{J_j}
    \right\|_2
    \leq
    \delta+\frac{C}{R}
    \qquad\text{for every }j\in[m].
\]
Thus every neuron specializes, up to orientation and a vanishing
$O(R^{-1})$ routing component, to the predictive direction of one cluster.

The selected labels $J_1,\ldots,J_m$ are independent and uniformly
distributed on $[K]$. Consequently,
\[
    \mathbb P\left(
        \text{every cluster is covered by a specialized neuron}
    \right)
    \geq
    1-Ke^{-m/K}.
\]
In particular, if
\[
    m\geq K\log\left(\frac{K}{\eta}\right),
\]
then all $K$ clusters are covered with probability at least $1-\eta$.
\end{theorem}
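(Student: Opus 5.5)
The argument follows the standard small-initialization analysis: in the early phase the network output is negligible, so each neuron runs its own gradient flow on a fixed correlation functional. We identify the attractors of that flow and then take $\varepsilon\to0$.

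\textbf{Reduction to decoupled dynamics.} With $a_j(0)=0$ and $\|w_j(0)\|=\varepsilon$, the gradient flow reads
\[
\dot a_j=\mathbb E[(y-f_\theta)\phi(w_j^\top x)],\qquad \dot w_j=a_j\mathbb E[(y-f_\theta)\phi'(w_j^\top x)x].
\]
As long as all parameters are $O(\varepsilon^{\gamma})$ for some $\gamma>0$, $f_\theta$ is of higher order. The dynamics are therefore, up to lower-order corrections, the decoupled system $\dot a_j=G(w_j)$, $\dot w_j=a_j\nabla G(w_j)$, where $G(w)=\mathbb E[y\,\phi(w^\top x)]$. Since $G$ is $1$-homogeneous, writing $w_j=r_j\omega_j$ shows two things:
\begin{itemize}
\item the direction $\omega_j$ follows, after a time reparametrization, the spherical gradient flow $\dot\omega=\pm(I-\omega\omega^\top)\nabla G(\omega)$;
\item the sign is that of $a_j$, which is eventually $\operatorname{sign}G(\omega_j)$, so the flow ascends $|G|$.
\end{itemize}
The conservation law $a_j^2-\|w_j\|^2=\text{const}=-\varepsilon^2$ controls the norms. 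Following the Maennel et al.\ / Boursier-type argument, for any fixed $T$ in the rescaled time $\log(1/\varepsilon)$-scale, choosing $\varepsilon$ small makes the true trajectory of each direction $\delta/2$-close to the limiting spherical flow started from $\omega_j^0$.

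\textbf{Computing $G$ and its attractors.} This is the main computational step. Write $\omega=(\alpha,\beta)$ in the routing and predictive blocks. Conditional on cluster $c$, $\omega^\top x\sim\mathcal N(R\alpha_c,1)$ and $\langle x,v_c\rangle$ has covariance $\beta_c$ with it. By Stein/Hermite expansion, $\mathbb E[h_3(z)\phi(u)]$ for jointly Gaussian $(z,u)$ with correlation $\beta_c$ equals $\beta_c^3\,\mathbb E[\phi'''(u)]$. This is an explicit Gaussian-density term $\beta_c^3\,\varphi'(R\alpha_c)$ (up to constants).

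So $G(\omega)=\frac1K\sum_c\beta_c^3\,q(R\alpha_c)$, with $q(s)=-s\varphi(s)/\sqrt6$ or similar. Each term is localized around $|\alpha_c|\approx 1/R$: $q$ peaks at $|s|=1$ and decays like a Gaussian. The maximizers of $|G|$ on the sphere are therefore, up to $O(1/R)$ perturbations, $\beta=\pm e_c$ with $\alpha_c=O(1/R)$ of the appropriate sign and the other $\alpha$ coordinates zero. This gives the $C/R$ term.

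We then need the following properties of the flow:
\begin{enumerate}
\item these $2K$ points are strict local maxima (hyperbolic sinks);
\item all other critical points of the spherical flow (saddles, e.g.\ mixtures over several clusters) are unstable, with stable manifolds of measure zero;
\item the flow is gradient-like, so trajectories converge.
\end{enumerate}
Together these give convergence of $\omega_j(t)$ to some $\tau_jv_{J_j}+O(1/R)$ for almost every $\omega_j^0$. I expect this to be the main obstacle: classifying the critical points of $\sum_c\beta_c^3q(R\alpha_c)$ on $\mathbb S^{2K-1}$ uniformly in $K$, and showing the saddles are strict. I would try a Lagrange-multiplier analysis coordinate-wise. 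The KKT conditions decouple per cluster into $(\alpha_c,\beta_c)$ pairs. One then shows that any critical point supported on at least two clusters has a direction of increase, via a perturbation moving mass to the cluster with larger per-mass value, which exploits the cubic (superlinear) dependence on $\beta_c$. Real-analyticity of $G$ plus the Łojasiewicz inequality gives convergence, and the stable-manifold theorem gives the almost-every statement. Fix $T_\delta$ as the time at which all $m$ limiting directions are within $\delta/2$ of their limits, then take $\varepsilon_0$ from the reduction step.

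\textbf{Uniformity of labels and coverage.} The map $\omega^0\mapsto(J,\tau)$ is equivariant under the symmetry group of \cref{data:neuron-specialization}: simultaneous permutations of the $K$ clusters acting on both blocks. Since $\operatorname{Unif}(\mathbb S^{d-1})$ is invariant under this group, each $J_j$ is uniform on $[K]$. Independence across $j$ follows because $J_j$ depends only on $\omega_j^0$ in the decoupled limit. Then
\[
\mathbb P(\text{cluster }c\text{ uncovered})=(1-1/K)^m\le e^{-m/K},
\]
and a union bound gives $1-Ke^{-m/K}$. Finally, $m\ge K\log(K/\eta)$ gives $Ke^{-m/K}\le\eta$.
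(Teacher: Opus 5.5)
Your proposal follows essentially the same route as the paper: decoupled teacher-only neuron dynamics from the small, zero-readout initialization, tracked by Gr\"onwall (the paper uses your conservation law to write $a=\varepsilon\sinh u$, $\|w\|_2=\varepsilon\cosh u$); the explicit cubic identity $\Phi(\omega)=-\frac{1}{K\sqrt6}\sum_c b_c\varphi(b_c)\rho_c^3$, with $b_c=R\alpha_c$ and $\rho_c=\beta_c$; a Lagrange-multiplier mass-transfer argument showing that every positive multi-cluster critical point is a strict saddle; convergence plus the center-stable manifold theorem; and permutation equivariance with a union bound for coverage. The differences are minor: the paper obtains convergence from finiteness of the positive critical points rather than from \L{}ojasiewicz, and because the $(u,\omega)$ teacher-only system does not depend on $\varepsilon$, $T_\delta$ is a fixed $O(1)$ time on which the network output is $O(m\varepsilon^2)$, not a time on a $\log(1/\varepsilon)$ scale.
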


\paragraph{Proof sketch.}
We employ a proof strategy of studying feature learning
through effectively independent neuron dynamics, which has been used in prior
work on early-time and small-initialization regimes
\citep{abbe2022merged,min2024early,glasgow2024sgd}.
At initialization, the network output is zero and the hidden weights are
stationary. The initial derivative of each output weight is proportional to
the population correlation
\[
    \Phi(\omega)
    =\mathbb E\left[y\,
      \phi(\omega^\top x)\right]
\]
of its own randomly initialized hidden direction. Except on a measure-zero
set, this correlation is nonzero, so the output weight immediately acquires
the appropriate sign. ReLU homogeneity then turns the neuron's subsequent
directional dynamics into a positive time reparameterization of gradient
ascent on the corresponding signed correlation objective.

For the cubic Hermite link function, the Hermite calculation makes this objective a sum of
clusterwise cubic terms. Its positive local maxima are one-cluster solutions:
a neuron aligns with $\pm v_c$ and uses only an $O(R^{-1})$ component
in the associated routing direction to place its ReLU threshold within
cluster $c$. Indeed, we prove that any positive mixed-cluster critical point has an unstable
direction. Analytic-gradient-flow convergence and strict-saddle avoidance
therefore imply that a random neuron specializes almost surely.

Permutation symmetry makes the selected cluster uniform on $[K]$, and
independence of the initial hidden directions makes the selected labels
independent. The coverage estimate is then the usual coupon-collector union
bound. Finally, on every fixed early-time interval the full network output is
$O(m\varepsilon^2)$, so the coupled network dynamics are a vanishing
perturbation of the isolated-neuron dynamics. A Gr\"onwall argument transfers
the specialization result to population gradient flow. The complete proof is
given in Appendix~\ref{app:proof-neurons-specialize}.

\subsection{Specialization Yields a Sample-Complexity Gap Between MLPs and Adaptable Kernel Methods}
\label{sec:mlp-rfm-separation}

In this subsection, we establish a sample-complexity
separation between two-layer MLPs and the Recursive Feature Machines (RFM) adaptive kernel method. Our clustered data distribution is similar to that of the previous subsection, but we let the number of clusters $K$ tend to infinity.

We show that fitting a norm-constrained MLP with a
polynomial number of training samples yields a consistent estimator. Our proof
of this fact relies on showing that the MLP can have specialized neurons that
approximate the predictive function separately within each cluster. In
contrast, we show that RFM is not consistent under any fixed polynomial scaling of
the sample size in $K$.

Concretely, the data distribution we use for this result is similar to that in the previous subsection, but we suppose full separation between clusters for
ease of analysis.

\begingroup
\tcbset{theorysetting/.append style={breakable=false}}
\begin{datasetting}[Gaussian mixture with noiseless routing]
\label{data:rfm-mlp-gap}
We consider the same data distribution as in \cref{data:neuron-specialization}, except that the clusters can now be separated noiselessly. 
In other words, we set the means to $\mu_c=[e_c;0]$ and the predictive directions to
$v_c=[0;e_c]$ in $d=2K$ dimensions, and make the first $K$ coordinates
noiseless by setting
\[
    \Sigma_c=
    \begin{bmatrix}
        0_{K\times K} & 0\\
        0 & I_K
    \end{bmatrix}.
\]

Additionally, we consider a non-constant link function $g:\mathbb R\to\mathbb R$ which is independent of $K$, and satisfies boundedness and Lipschitzness
\[
    \|g\|_\infty\leq G_0,
    \qquad
    |g(s)-g(t)|\leq L|s-t|
    \quad\text{for all }s,t\in\mathbb R,
\]
for constants $G_0<\infty$ and $L>0$. We use the same link in every
cluster, $g_c=g$. Thus, in the notation of Section~\ref{sec:prelim},
\[
    c\sim\operatorname{Unif}([K]),
    \qquad
    x\mid c\sim\mathcal N(\mu_c,\Sigma_c),
    \qquad
    y=g\!\left(\langle x,v_c\rangle\right).
\]

\end{datasetting}
\endgroup

The routing block now reveals the cluster exactly, while the response within
cluster $c$ depends only on its cluster-specific predictive direction.

Instead of studying a network trained through gradient-based
updates, we study a two-layer MLP fit by empirical risk minimization subject
to a Frobenius-norm constraint on its weights.

\begingroup
\tcbset{theorysetting/.append style={breakable=false}}

\begin{trainingsetup}[Frobenius-constrained MLP ERM]
\label{train:mlp-gap}
We fit the two-layer ReLU network
\[
    f_\theta(x)=\sum_{j=1}^m a_j\phi(w_j^\top x),
    \qquad
    \phi(t)=t_+,
\]
using empirical risk minimization. For a width $m$ and budget $B>0$, define
\[
    \mathcal F_{B,m}
    :=
    \left\{
        f_\theta:
        \|\theta\|_{\mathrm F}^2
        =\frac12\sum_{j=1}^m
        \left(a_j^2+\|w_j\|_2^2\right)
        \leq B
    \right\}.
\]
Given $n$ samples, squared-loss empirical risk minimization (ERM) gives
\[
    \widehat f_{\mathrm{MLP}}
    \in
    \arg\min_{f\in\mathcal F_{B,m}}
    \frac1n\sum_{i=1}^n\bigl(f(x_i)-y_i\bigr)^2.
\]
\end{trainingsetup}

\endgroup

We compare this MLP ERM with standard rotationally invariant kernel ridge
regression and with RFM, a supervised kernel method
\citep{radhakrishnan2024mechanism,radhakrishnan2025linear}.\footnote{
We give the RFM comparator the ground-truth population average gradient outer
product (AGOP), thereby removing metric-estimation error and isolating the
limitation of using one global feature metric. An analysis of RFM using an
empirically estimated AGOP would require controlling empirical fluctuations
and is not implied by our theorem.}

\begingroup
\tcbset{theorysetting/.append style={breakable=false}}

\begin{trainingsetup}[Kernel methods and RFM with ground-truth AGOP]
\label{train:rfm-gap}

Let $\mathcal K$ be a rotationally invariant kernel. Given $n$ samples,
$\widehat f_{\mathrm{Kernel}}$ is kernel ridge regression with kernel
$\mathcal K(x,x')$. We give the RFM comparator the ground-truth population
AGOP\footnotemark
\[
    M
    :=\mathbb E\left[
        \nabla_x f^\star(x)\nabla_x f^\star(x)^\top
      \right].
\]
For $\rho\geq0$, define the regularized AGOP and corresponding kernel
\[
    M_\rho:=M+\rho I_d,
    \qquad
    \mathcal K_{M_\rho}(x,x')
    :=\mathcal K\bigl(\sqrt{M_\rho}x,\sqrt{M_\rho}x'\bigr).
\]
$\widehat f_{\mathrm{RFM}}$ is kernel ridge regression with kernel
$\mathcal K_{M_\rho}$. Both estimators may use any kernel ridge parameter
$\lambda_n\geq0$.

\end{trainingsetup}

\footnotetext{Because $P_x$ is supported on a union of affine subspaces,
the regression function $f^\star$ is intrinsically defined only on that
support, and its derivatives normal to the support are not determined.
Throughout, $\nabla_x f^\star(x)$ denotes the gradient along the affine
support component containing $x$, embedded in $\mathbb R^d$ with zero normal
component. Since $g$ is Lipschitz, it is differentiable almost everywhere,
and for an input $x$ from cluster $c$,
\[
    \nabla_x f^\star(x)
    =g'\!\left(\langle x,v_c\rangle\right)v_c
\]
almost everywhere.}

\endgroup

In the theorem below, we show that a two-layer MLP learns the
cluster-structured target with polynomial sample complexity for every fixed,
bounded, nonconstant Lipschitz link function $g$. In contrast, the standard rotationally
invariant kernel estimator and RFM with the ground-truth AGOP remain
inconsistent under every fixed polynomial sample-size scaling. The appendix
also proves the MLP upper bound in a more general fixed-dimensional
multi-index setting; the single-index result stated here is a
corollary for $r=1$, which is simpler to state.

\begin{theorem}[Specialization enables data-efficient learning]
\label{thm:rfm-mlp-gap}
Suppose \cref{data:rfm-mlp-gap} holds. Then the target function $f^\star$ admits approximation by cluster-specialized two-layer
ReLU networks with polynomially growing width and Frobenius norm,
and with approximation error tending to zero as $K\to\infty$.

Consequently, there are polynomially growing width
and norm-budget sequences $(m_K)$ and $(B_K)$ such that, with polynomially many samples $n_K=\Omega(K^{7})$,
the MLP estimator in \cref{train:mlp-gap} is consistent:
\[
    \lim_{K\to\infty}
    \mathbb E\left[
        \|\widehat f_{\mathrm{MLP}}-f^\star\|_{L^2(P_x)}^2
    \right]
    =0.
\]

On the other hand, for every fixed $A<\infty$, every sequence
$n_K=O(K^A)$, and every sequence $\rho_K\geq0$,
\[
    \liminf_{K\to\infty}
    \mathbb E\left[
        \|\widehat f-f^\star\|_{L^2(P_x)}^2
    \right]
    >0,
    \qquad
    \widehat f\in
    \left\{
        \widehat f_{\mathrm{Kernel}},
        \widehat f_{\mathrm{RFM}}
    \right\}.
\]
\end{theorem}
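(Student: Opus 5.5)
The proof splits into an MLP upper bound (approximation plus estimation) and a lower bound for the kernel and RFM estimators, handled separately.

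\textbf{MLP upper bound.} First we build an explicit cluster-specialized network. Since the routing block is noiseless, on cluster $c$ we have $x_{1:K}=e_c$ exactly. So for any one-dimensional ReLU approximant $\tilde g(t)=\sum_{\ell=1}^{N}\alpha_\ell(\beta_\ell t-b_\ell)_+$ of $g$, we can gate it into cluster $c$ by writing
\[
    w_{c,\ell}=\bigl[\,b'\,e_c - b''\mathbf 1_K\,;\,\beta_\ell e_c\,\bigr].
\]
The routing part adds a large offset $S$ on cluster $c$ and subtracts $S$ elsewhere. Concretely, take the routing block $S e_c - S\sum_{c'\neq c}e_{c'}$ and absorb the bias through the coordinate $x_c=1$, using $(2S e_c-S\mathbf 1)^\top e_{c'}=\pm S$.

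The hard step is that $\langle x,v_c\rangle$ is Gaussian and unbounded, so no finite $S$ kills the neuron on every other cluster. We handle this by truncation. Approximate $g$ uniformly on $[-T,T]$, with $T\sim\sqrt{\log K}$, by a piecewise-linear interpolant built from $N=O(LT/\eta)$ ReLUs; this is possible because $g$ is $L$-Lipschitz. Choose $S\gtrsim\beta T$, and add clipping so that $|\tilde g|\le G_0$. The truncation tail then costs $O(G_0^2e^{-T^2/2})$ in $L^2$.

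We then sum over the $K$ clusters, giving width $m_K=KN$. Balancing $a$ against $w$ makes the Frobenius norm $B_K=\mathrm{poly}(K)$. The approximation error goes to $0$ as $\eta\to0$ and $T\to\infty$ slowly.

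Estimation error follows from a standard Rademacher bound. For two-layer ReLU nets with $\sum_j|a_j|\|w_j\|\le B$ (implied by the Frobenius budget via AM--GM), the complexity is $\lesssim B\,\max_i\|x_i\|/\sqrt n$, and $\|x\|\lesssim\sqrt K$ with high probability. We truncate outputs at $G_0$; clipping only helps, and can be implemented by an extra ReLU pair or by truncating the estimator. The squared loss is then Lipschitz on the bounded range. This gives excess risk
\[
    \lesssim \inf_{f\in\mathcal F}\|f-f^\star\|^2 + \frac{\mathrm{poly}(K)}{\sqrt n}.
\]
Tracking exponents (with $B_K\sim K^{3}$ or so) yields the $n_K=\Omega(K^7)$ requirement.

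\textbf{Lower bound.} We compute the AGOP:
\[
    M=\frac1K\sum_c\mathbb E[g'(z)^2]\,v_cv_c^\top=\kappa K^{-1}\,\mathrm{diag}(0_K,I_K).
\]
So $\sqrt{M_\rho}$ acts as a scalar on each block, and $\mathcal K_{M_\rho}$ is again invariant under the group $O(K)$ acting on the predictive block. It is also invariant under permutations of the routing block. RFM with the true AGOP therefore reduces to a kernel with only block-wise rotation invariance, and it suffices to treat kernels invariant under $\mathrm{id}\times O(K)$ on the predictive block.

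For such kernels, on each cluster the input is $(e_c,z)$ with $z\sim\mathcal N(0,I_K)$, and the kernel between points depends only on the cluster pair, $\|z\|$, $\|z'\|$ and $\langle z,z'\rangle$. The target component within cluster $c$ is $g(z_c)-\mathbb E g$; this is nonconstant, hence has variance $\ge v_0>0$. Decompose it in spherical harmonics of degree $\ell\ge1$ in $z$.

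We then argue as in the known lower bounds for rotationally invariant kernels (Ghorbani--Mei--Misiakiewicz--Montanari type). With $n=O(K^A)$ samples, the kernel estimator restricted to each cluster effectively fits only polynomials of degree $\le A'$ in the $K$-dimensional $z$. The mass of $g(z_c)$ on degree-$\le A'$ harmonics that is not "explained" by such fits is negligible, but the projection onto the degree-$\ell$ harmonic $z_c$-direction component is fine. The issue is rather that the degree-$\ell$ harmonics in dimension $K$ have dimension $\sim K^\ell$, whereas high-degree content of $g$ (beyond degree $A'$) persists because $g$ is non-polynomial. If $g$ were a low-degree polynomial, this route would fail. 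I would instead handle that case via the routing block: across $K$ clusters, each fit uses only $n/K$ samples, and a degree-$\ell$ fit in dimension $K$ needs $K^\ell$ samples per cluster. Both cases must be combined carefully: for $\ell\ge1$ the needed $K\cdot K^{\ell}$ samples exceed $K^A$ only when $\ell\ge A$.

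This last point is the main obstacle. A bounded Lipschitz nonconstant $g$ is never a polynomial, so $g$ has nonzero Hermite mass at degrees $>A$. I would try to show that the kernel estimator's error is at least this high-degree Hermite mass, $\sum_{k>A}\hat g_k^2>0$, uniformly in $K$, via the GMMM hypercontractivity/diagonal-concentration argument applied per cluster.
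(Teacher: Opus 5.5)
\textbf{The genuine gap is the kernel/RFM lower bound.} You compute the AGOP correctly. You reduce both estimators to kernels invariant under $O(K)$ acting on the predictive block, and you identify the right target $\sum_{r>A}\widehat g_r^{\,2}>0$ from the fact that a bounded nonconstant $g$ is not a polynomial. But you never give an argument that actually produces this bound.

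The GMMM hypercontractivity and kernel-matrix-concentration results are proved under regularity and non-degeneracy assumptions on the kernel, and only in specific sample-size windows. Here the bound must hold for every rotationally invariant $\mathcal K$, every ridge $\lambda_n\geq0$ (including interpolation), and every $\rho_K\geq0$. The kernel matrix also couples all $K$ clusters. So there is no ``per cluster'' version to invoke, and the heuristic of $n/K$ samples per cluster is beside the point.

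The missing idea is a pure symmetry argument that needs no concentration at all:
\begin{enumerate}
\item By the representer theorem, on cluster $c$ the predictor, viewed as a function of the predictive coordinates $\xi\in\mathbb R^K$, lies in the span of $n$ kernel sections.
\item Project this span onto $\mathcal V_{r,K}$, the traceless part of the $r$th Gaussian chaos. This is a single irreducible $O(K)$-module of dimension $H_{r,K}=\Theta_r(K^r)$.
\item The training predictive coordinates are $\mathcal N(0,I_K)$, and both $I_d$ and $M_\rho$ commute with predictive rotations. Hence the projected span is a random subspace of dimension at most $n$ whose law is $O(K)$-invariant.
\item Schur's lemma then gives $\mathbb E\,\Pi_S=\beta I$ with $\beta\leq n/H_{r,K}$. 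So the expected squared distance from any fixed vector to the subspace is at least $(1-n/H_{r,K})_+$ times that vector's squared norm.
\item Apply this to the traceless part of $\widehat g_r h_r(\xi_c)$, whose squared norm is $\widehat g_r^{\,2}\alpha_{r,K}$ with $\alpha_{r,K}\to1$. You need this computation because $h_r(\xi_c)$ is not itself harmonic.
\item Sum over $r$ using orthogonality of distinct chaoses, and let $K\to\infty$ with $n=O(K^A)$. Every order $r>A$ is then asymptotically unrecovered, uniformly in $\rho_K$ and $\lambda_n$.
\end{enumerate}

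\textbf{Upper bound.} Your approximation construction is a workable alternative, but it is heavier than necessary. Gating by a large negative offset on all other routing coordinates forces routing vectors of norm about $S\sqrt K$, and the Gaussian tail then forces you to truncate. The paper instead writes the approximant as ridge ramps $\psi(t)=\phi(t)-\phi(t-1)$. Each ramp on cluster $c$ is two ReLUs with the same predictive weight and routing components only along $\mu_c$, differing by $\mu_c/R^2$. On any other cluster the two arguments coincide, so the pair cancels exactly and no truncation is needed. $L^2$ density of ramp expansions plus a slowly growing approximation index then gives $m_K,B_K\leq K^{9/8}$.

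\textbf{Estimation step.} More seriously, your estimation step does not address the stated estimator. $\widehat f_{\mathrm{MLP}}$ is unclipped ERM over $\mathcal F_{B,m}$. A bias-free two-layer network cannot clip its own output by adding a ReLU pair in the same layer, and truncating the estimator proves consistency for a different estimator.

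You need the unclipped, sample-dependent bound. Since $|f(x_i)|\leq B\max_i\|x_i\|_2$, the loss is $2(B\max_i\|x_i\|_2+G_0)$-Lipschitz on the sample. Together with $\mathbb E\max_i\|x_i\|_2^2\lesssim K+\log n$, this gives excess risk of order $B^2\sqrt{K(K+\log n)/n}+BG_0\sqrt{K/n}$. That is what makes $n_K=K^7$ suffice with $B_K=K^{9/8}$.

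Your exponent claim does not follow from your own setup:
\begin{itemize}
\item With $B_K\sim K^3$, even the clipped term $B\sqrt{K/n}$ is of order one at $n=K^7$.
\item Your construction's $B\gtrsim K^{3/2}$, without clipping, would need $n\gg K^8$.
\end{itemize}
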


\paragraph{Proof sketch.}
The MLP claim follows from a more general multi-index result proved in
Appendix~\ref{app:proof-mlp-rfm-separation}. We first show that the link function can be approximated arbitrarily well
by a finite shallow network. By allowing the approximation accuracy to
improve sufficiently slowly with $K$, we obtain a sequence of
cluster-specialized MLPs whose approximation error tends to zero while
their width and Frobenius norm grow only polynomially in $K$. This construction follows by using neurons specialized to each cluster as follows. Since
$g$ is Lipschitz and bounded, it can be approximated in Gaussian $L^2$ by a
finite shallow network of ridge-ramp functions, each of which is a
difference of two ReLUs. We can thus approximate $g$ on each cluster by using routing
coordinates to convert every ridge-ramp unit into a pair of ReLU neurons whose
contributions cancel on all nonselected clusters.

The construction that we provide implies that there is a Frobenius-norm-bounded network with good approximation error. Thus, standard Barron- and path-norm Rademacher-complexity bounds control the
estimation error
\citep{e2022barron,bach2017breaking,golowich2018size}.
Because the Gaussian covariates are unbounded, we derive the required
sample-dependent version carefully, using Gaussian concentration to control
the MLP predictions on typical inputs.

For the kernel lower bound, expand the link function in terms of normalized Hermite
polynomials,
\[
    g(t)=\sum_{q=0}^\infty\widehat g_qh_q(t),
\]
where $h_q$ denotes the normalized probabilists' Hermite polynomial of degree
$q$, and $\widehat g_q$ is the corresponding scalar Hermite coefficient of $g$.
A bounded, continuous, nonconstant function cannot have only finitely many
nonzero Hermite coefficients: otherwise it would equal a polynomial, and a
bounded polynomial on $\mathbb R$ is constant. Thus every finite Hermite tail
of $g$ has positive energy.

The ground-truth population AGOP is a rescaling of
$P_V:=\sum_{c=1}^K v_cv_c^\top$, the orthogonal projector onto the span of
the cluster-specific predictive directions:
\[
    M=\frac{\alpha}{K}P_V,
    \qquad
    \alpha=\mathbb E[g'(G)^2]>0.
\]
Hence the standard kernel metric and every regularized AGOP metric remain
rotationally invariant within the predictive coordinates. At Hermite order
$q$, the clusterwise target contains a component in an irreducible harmonic
subspace of dimension $\Theta_q(K^q)$. The representer theorem restricts a
kernel predictor to the span of at most $n$ kernel sections, and the projected
span is an invariant random subspace. Therefore, when $n=O(K^A)$, every fixed
order $q>A$ leaves asymptotically all of its target energy unrecovered. Since
$g$ has nonzero Hermite energy beyond every finite order, both kernel
estimators retain nonzero error under every fixed polynomial sample scaling. This degree-by-degree harmonic obstruction is closely related to prior
high-dimensional analyses of rotationally invariant kernel methods and
invariant kernel models
\citep{ghorbani2021linearized,ghorbani2020outperform,mei2021invariances}.
The complete proof is given in
Appendix~\ref{app:proof-mlp-rfm-separation}.

\newpage

\section*{Acknowledgements}

We gratefully acknowledge support from the National Science Foundation (NSF)
under grants CCF-2112665 and MFAI 2502258, the Office of Naval Research
(ONR N000142412631), and the Defense Advanced Research Projects Agency
(DARPA) under Contract No. HR001125CE020.
This work used the Delta system at the National Center for Supercomputing Applications through allocation TG-CIS220009 from the Advanced Cyberinfrastructure Coordination Ecosystem: Services \& Support (ACCESS) program, which is supported by National Science Foundation grants \#2138259, \#2138286, \#2138307, \#2137603, and \#2138296.

AI tools were used to assist with aspects of the experiments and theoretical analysis.

\bibliographystyle{plainnat}
\bibliography{references_feature_learning}

\newpage
\appendix

\section{Additional Experimental Details}
\label{app:experimental-details}

\subsection{Additional Details for First-Layer Neuron Specialization}
\label{app:coordinate-specialization-details}

This appendix supplements Section~\ref{sec:coordinate-specialization}.
Section~\ref{app:specialization-measures-examples} motivates the complementary
specialization measures and illustrates them with representative ReLU and GELU
neurons. Section~\ref{app:shared-specialization-analysis} introduces the
active-neuron criterion and the shared analysis protocol.
Section~\ref{app:specialization-null} introduces controls based on random
directions and subspaces to assess specialization, and applies this
comparison to the axis-aligned experiment.
Finally, Section~\ref{app:hermite3-specialization-experiment} gives the
third-Hermite training protocol, and
Section~\ref{app:hermite2-specialization-experiment} presents the additional
second-Hermite experiment.

\subsubsection{Specialization Measures and Representative Neurons}
\label{app:specialization-measures-examples}

\paragraph{Why predictive-coordinate dominance?}
For an incoming weight vector \(w_j\), define its maximum
predictive-direction cosine as
\[
    A_j
    =
    \max_{1\leq c\leq K}
    \frac{|\langle w_j,v_c\rangle|}
    {\|w_j\|_2\,\|v_c\|_2}.
\]
Cosine alignment measures how much of a neuron's total weight norm lies in a
single predictive direction. It can therefore be conservative when one
predictive coordinate is large but many individually small coefficients
collectively contribute appreciably to the norm. Let \(\widehat c_j\) index
the predictive direction with the largest absolute inner product with
\(w_j\). Because \(v_c\) is supported on coordinate \(K+c\), define the
predictive-coordinate dominance ratio as
\[
    D_j
    =
    \frac{|[w_j]_{K+\widehat c_j}|}
    {\displaystyle
     \max_{\substack{1\leq \ell\leq d\\
                     \ell\neq K+\widehat c_j}}
     |[w_j]_\ell|}.
\]
Thus, \(D_j>1\) means that the largest coordinate is predictive, while
\(D_j>2\) means that this coordinate is more than twice as large as every
competing coordinate. Unlike cosine alignment, this ratio asks whether a
predictive coordinate dominates each alternative coordinate individually.
The two measures therefore provide complementary views of specialization.

\paragraph{ReLU examples.}
Figure~\ref{fig:low-cosine-dominance} illustrates this distinction for ReLU
neurons trained with the normalized third-Hermite link function. Although the
displayed
neurons have only moderate cosine alignment because their norms include many
smaller coefficients, each has a pronounced spike on one cluster-specific
predictive coordinate. Predictive-coordinate dominance therefore captures
clear specialization that a strict cosine threshold can miss.

\begin{figure}[H]
    \centering
    \includegraphics[width=\textwidth]
    {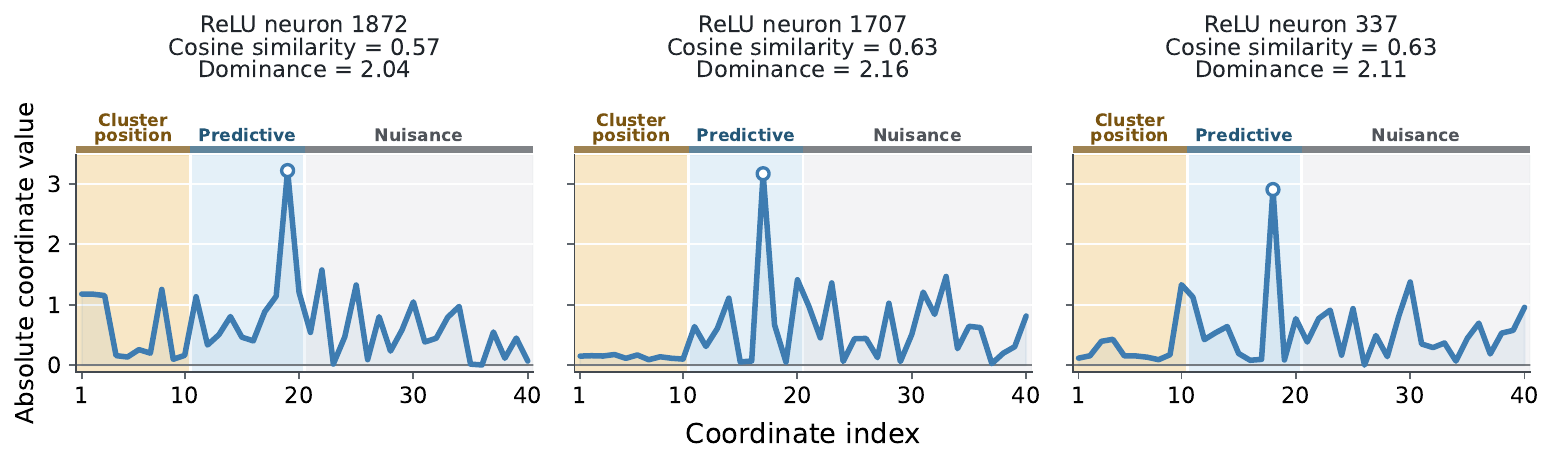}
    \caption{\textbf{Predictive-coordinate dominance complements cosine
    alignment.}
    This plot illustrates why predictive-coordinate dominance can be a more informative measure than cosine similarity with a strict threshold. A neuron may have only moderate cosine alignment because its norm includes many smaller coefficients, while still exhibiting a pronounced spike along a single cluster-specific predictive direction $v_c$. Recall that here, each $v_c$ is chosen as a standard basis vector, so concentration on a predictive coordinate is equivalent to alignment with the corresponding predictive direction.
}
    \label{fig:low-cosine-dominance}
\end{figure}

\paragraph{GELU examples.}
Figure~\ref{fig:gelu-hermite3-specialization-examples} shows the corresponding
behavior for GELU. The top row contains neurons with high cosine alignment,
whereas the bottom row contains neurons with moderate cosine alignment but a
single strongly dominant predictive coordinate. This confirms that the
distinction between the two specialization measures is not specific to ReLU.

\begin{figure}[H]
    \centering
    \includegraphics[width=1.0\textwidth]
    {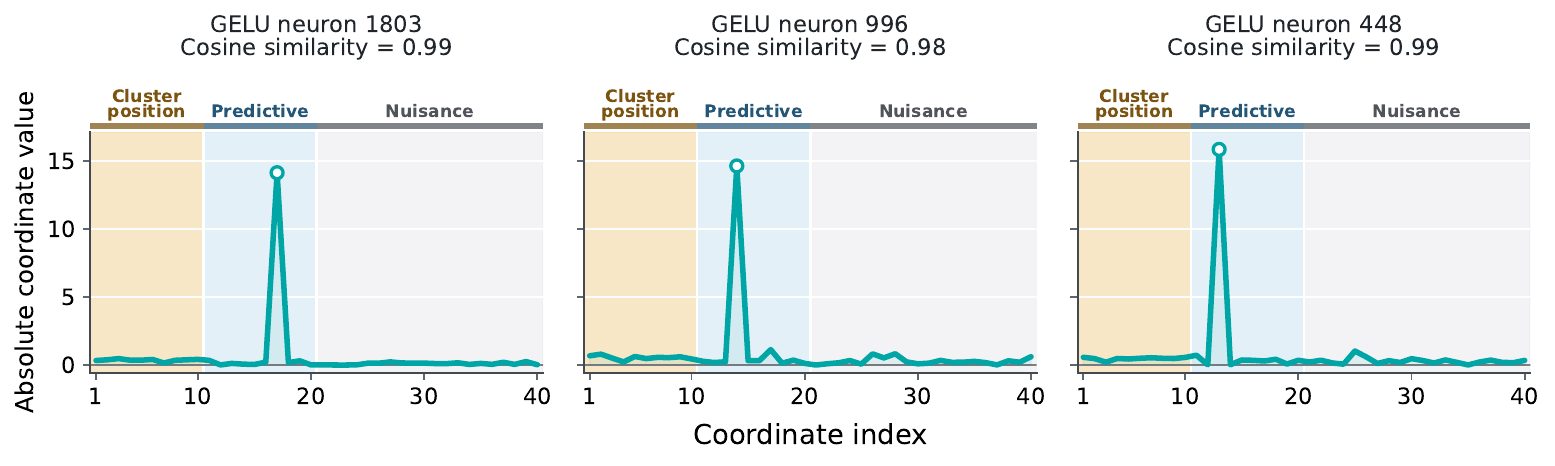}

    \vspace{0.25em}

    \includegraphics[width=1.0\textwidth]
    {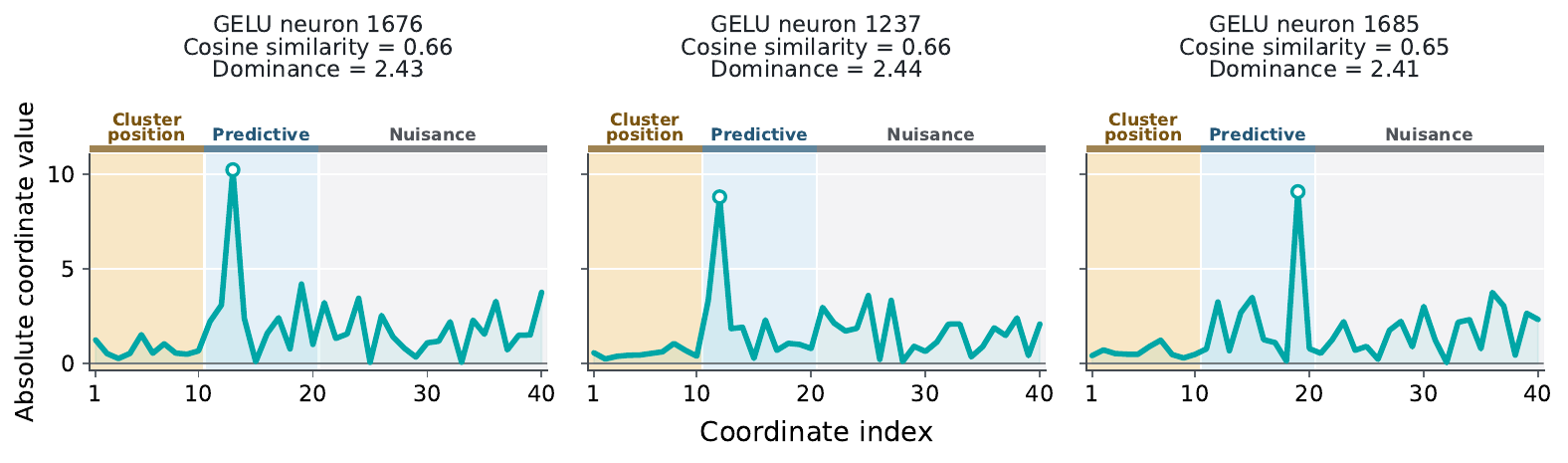}
    \caption{\textbf{Normalized third-Hermite specialization in GELU
    neurons.}
    Top: examples whose first-layer weights are nearly entirely concentrated
    on one predictive coordinate. Bottom: examples for which many smaller
    coefficients reduce cosine alignment, while one predictive coordinate
    remains more than twice as large as every competing coordinate.}
    \label{fig:gelu-hermite3-specialization-examples}
\end{figure}

\clearpage

\subsubsection{Active Neurons and Specialization}
\label{app:shared-specialization-analysis}

\paragraph{Active-neuron criterion.}
We restrict the specialization analysis to neurons that contribute materially
to the network output. For an ordinary ReLU or GELU neuron, we define its
importance as
\[
    I_j=|a_j|\,\|w_j\|_2,
\]
where \(a_j\) is its output weight and \(w_j\) is its incoming weight. For a
gated ReGLU or SwiGLU neuron, we use
\[
    I_j=|a_j|\,\|w_{g,j}\|_2\,\|w_{v,j}\|_2,
\]
where \(w_{g,j}\) and \(w_{v,j}\) denote its gate and value weights.

We sort neurons in decreasing order of importance \(I_j\), then add their
importance scores in that order until the cumulative sum reaches at least
\(99.9\%\) of the total importance, \(\sum_j I_j\), summed over all neurons.
We retain all neurons up to and including the one that reaches this threshold
and refer to them as \emph{active}. For gated architectures, the same active-neuron set is used
when analyzing the gate and value weights separately.

The \(99.9\%\) threshold is a descriptive sparsification rule rather than a
statistical significance threshold. Its purpose is to prevent the
specialization statistics from being dominated by neurons with negligible
influence on the network output.

\paragraph{Specialization statistics and reporting.}
For each active neuron, we compute the cosine-alignment score \(A_j\) and the
predictive-coordinate dominance score \(D_j\) defined above. Absolute inner
products are used because alignment with \(v_c\) and \(-v_c\) represents the
same predictive direction. Table~\ref{tab:coordinate-specialization} and the
top panel of Fig.~\ref{fig:coordinate-specialization-hermite2} report the
fractions of active neurons satisfying
\[
    A_j\geq0.71,\qquad
    A_j\geq0.90,\qquad
    D_j>1,\qquad
    D_j>2.
\]
For ReGLU and SwiGLU, all statistics are computed and reported separately for
the gate and value weights.

The individual neurons displayed in the bottom panel of
Fig.~\ref{fig:coordinate-specialization-hermite2} are the
highest-importance active GELU neurons satisfying \(A_j\geq0.90\). Their
coordinate-wise absolute incoming weights are shown so that specialization is
visible independently of sign. For the normalized third-Hermite link function,
Fig.~\ref{fig:coordinate-specialization-hermite3-neurons} and the top panel of
Fig.~\ref{fig:gelu-hermite3-specialization-examples} show high-alignment ReLU
and GELU examples, respectively. Figure~\ref{fig:low-cosine-dominance} and the
bottom panel of Fig.~\ref{fig:gelu-hermite3-specialization-examples} instead
show neurons with only moderate \(A_j\) but large \(D_j\), making the
complementary roles of the two measures explicit for both activations.

\subsubsection{Comparing Alignment with True and Random Directions}
\label{app:specialization-null}

In this section, we introduce a control experiment to determine whether the
observed neuron alignment reflects specialization to the true predictive
directions. If we replace these directions with random directions in the same
predictive subspace, would a similar fraction of active neurons still exceed
the same alignment threshold?

We answer this question by comparing alignment
with the true directions against alignment with randomly generated
alternatives. For multi-index models, we make the analogous comparison using
predictive subspaces. The random directions or subspaces match their true
counterparts in number, dimension, and coordinate support, while respecting
the geometric constraints of each experiment. A substantially larger fraction
of neurons aligned with the true directions or subspaces provides evidence
that the learned weights capture the specific predictive structure of the
problem.

We use this idea in the specialization experiments of
Sections~\ref{sec:coordinate-specialization}, \ref{sec:local-routing}, and
\ref{sec:clustered-multi-index}. The appropriate random object depends on the
experiment: it is a set of directions for the single-index models and a set
of three-dimensional subspaces for the multi-index model.

\paragraph{Shared post-hoc protocol.}
This is a post-hoc analysis: we first train each network and select its active
neurons using the criterion reported for that experiment. We then keep both
the learned weights and this active set fixed. For each trained seed and
alignment threshold, we compare the observed percentage of active neurons
exceeding the threshold with the corresponding percentages under
\(B=5{,}000\) random sets of directions or subspaces. Thus, the control changes
only the directions or subspaces against which the learned neurons are
evaluated; it does not retrain
or otherwise modify the network. For each trained seed \(s\in\{1,2,3\}\)
and threshold \(\tau\in\{0.71,0.90\}\), we compute the one-sided Monte Carlo
\(p\)-value
\[
    p_{\mathrm{MC},s}(\tau)
    =
    \frac{1+\sum_{b=1}^{B}
    \mathbf 1\!\left\{T_{s,\mathrm{null}}^{(b)}(\tau)\geq T_{s,\mathrm{obs}}(\tau)\right\}}
    {B+1}.
\]
Here, \(T_{s,\mathrm{obs}}(\tau)\) is the observed percentage of active
neurons meeting the threshold, and \(T_{s,\mathrm{null}}^{(b)}(\tau)\) is
the corresponding percentage for random set \(b\).
We report the largest per-seed value,
\[
    p_{\max}(\tau)=\max_{s\in\{1,2,3\}}p_{\mathrm{MC},s}(\tau).
\]
Thus, p-values are computed within each seed, not from the pooled random
percentages. If none of the random sets matches or exceeds the observed
percentage, the per-seed value is \(1/(B+1)=1/5001\approx2.0\times10^{-4}\),
the finite Monte Carlo resolution limit, not zero.
These dimensionless values describe conditional random-geometry comparisons
for fixed learned weights; they do not test the complete training procedure
or an improvement due to fine-tuning. We apply no multiplicity adjustment
across rows or thresholds.

Within each training seed, the same random directions or subspaces are used
for every architecture, so model comparisons are not affected by different
Monte Carlo draws. The coordinate-model construction is given next; the
non-axis-aligned single-index and multi-index constructions are given in
Sections~\ref{app:singleindex-specialization-null} and
\ref{app:multiindex-specialization-null}, respectively.

True-direction or true-span entries report means and sample standard
deviations over three trained models. Random-baseline entries report the
mean and sample standard deviation of all \(15{,}000\) specialization
percentages pooled across the three models and their \(5{,}000\) random
draws each. These are neither standard errors nor standard deviations of
seed-level means. Specialization entries are percentages, with standard
deviations in percentage points. Displayed values are rounded to two
decimal places; a displayed zero may therefore represent a small nonzero
value.

\paragraph{Application to the axis-aligned experiment.}
For the experiment in Section~\ref{sec:coordinate-specialization}, we use the
active-neuron criterion defined in
Section~\ref{app:shared-specialization-analysis}. The true predictive
directions form an orthonormal coordinate basis of the ten-dimensional
predictive subspace, so the randomized directions must preserve this joint
geometry rather than being sampled independently.
For each null draw, we sample a Haar-random orthogonal matrix
\(Q\in\mathbb R^{10\times10}\) and embed its columns in the same ten
predictive coordinates as the true directions. The randomized directions are
therefore mutually orthonormal and span exactly the same predictive subspace
as the true directions. This construction preserves each learned weight's
norm and its total mass in the predictive subspace, while breaking its
alignment with the particular direction assigned to each cluster.

\begin{table*}[t]
    \centering
    \small
    \renewcommand{\arraystretch}{1.12}
    \setlength{\tabcolsep}{5pt}
    \resizebox{\textwidth}{!}{%
    \begin{tabular}{@{}lccccc@{}}
        \toprule
        \textbf{Model/weight}
        & \shortstack{\textbf{Active neurons}\\\textbf{(out of 2048)}}
        & \multicolumn{2}{c}{\textbf{Cosine \(\mathbf{\geq .71}\) (\%)}}
        & \multicolumn{2}{c}{\textbf{Cosine \(\mathbf{\geq .90}\) (\%)}} \\
        \cmidrule(lr){3-4}\cmidrule(lr){5-6}
        & & \textbf{True directions} \(\mathbf{\uparrow}\)
          & \textbf{Random directions}
          & \textbf{True directions} \(\mathbf{\uparrow}\)
          & \textbf{Random directions} \\
        \midrule
        ReLU & \(469\pm57\) & \(56.75\pm3.91\) & \(5.80\pm3.96\) & \(33.22\pm1.92\) & \(0.03\pm0.24\) \\
        GELU & \(1{,}747\pm83\) & \(30.15\pm2.48\) & \(2.56\pm2.05\) & \(18.40\pm1.20\) & \(0.02\pm0.12\) \\
        ReGLU gate & \(200\pm56\) & \(51.89\pm1.22\) & \(5.58\pm4.33\) & \(33.82\pm1.27\) & \(0.05\pm0.37\) \\
        ReGLU value & \(200\pm56\) & \(24.98\pm0.52\) & \(1.08\pm1.07\) & \(3.94\pm2.24\) & \(0.00\pm0.03\) \\
        SwiGLU gate & \(432\pm5\) & \(25.06\pm2.29\) & \(3.30\pm1.80\) & \(12.26\pm1.27\) & \(0.02\pm0.11\) \\
        SwiGLU value & \(432\pm5\) & \(17.88\pm2.88\) & \(2.83\pm1.19\) & \(4.62\pm0.96\) & \(0.01\pm0.04\) \\
        \bottomrule
    \end{tabular}%
    }
    \caption{\textbf{Predictive-direction specialization exceeds a
    geometry-matched random-direction baseline.}
    Observed alignment with the true cluster-specific directions is
    substantially stronger than alignment with randomized orthonormal
    directions in the same predictive subspace. True-direction entries are
    means and sample standard deviations over three trained models.
    Random-direction entries summarize \(15{,}000\) pooled comparisons
    (\(5{,}000\) random bases for each trained model).
    Every row and threshold has \(p_{\max}=1/5001\), as defined in the shared
    protocol in Section~\ref{app:specialization-null}.}
    \label{tab:coordinate-specialization-null}
\end{table*}

We apply this randomization only to cosine-based specialization. The
coordinate-dominance statistic depends on the chosen coordinate system and
therefore does not have a rotation-invariant interpretation under this null.

\clearpage

\subsubsection{Normalized Third-Hermite Experiment}
\label{app:hermite3-specialization-experiment}

Let \(\operatorname{He}_3\) denote the third probabilists' Hermite
polynomial. We use its unit-variance normalization,
\[
    \operatorname{He}_3(t)=t^3-3t,
    \qquad
    h_3(t)
    =
    \frac{\operatorname{He}_3(t)}{\sqrt{3!}}
    =
    \frac{t^3-3t}{\sqrt6}.
\]
For an example from cluster \(c\), the response is
\[
    y
    =
    h_3\!\left(\sqrt d\,\langle v_c,x\rangle\right)
    +\varepsilon,
    \qquad
    \varepsilon\sim\mathcal N(0,0.005^2).
\]
Under the data distribution in Section~\ref{sec:coordinate-specialization},
\(\langle v_c,x\rangle\) has conditional variance \(1/d\). Thus,
\(\sqrt d\,\langle v_c,x\rangle\) has unit variance, and for
\(Z\sim\mathcal N(0,1)\), the normalization above gives
\(\mathbb E[h_3(Z)^2]=1\).

\paragraph{Training protocol.}
All four architectures use width \(2{,}048\), a batch size of \(2{,}048\),
trainable biases, the width-aware parameterization, and
\(\mu\mathrm{P}\)-scaled Adam. Each run uses \(100{,}000\) training examples,
\(4{,}096\) validation examples, and \(2{,}048\) test examples. The
activation-specific optimization configurations are given in
Table~\ref{tab:hermite3-training-protocol}.

\begin{table}[H]
    \centering
    \small
    \caption{Training configurations for the normalized third-Hermite
    specialization experiment. Each learning rate follows a cosine schedule
    from the first value to the second.}
    \label{tab:hermite3-training-protocol}
    \begin{tabularx}{\textwidth}{@{}lXXX@{}}
        \toprule
        Activation & Cosine LR & Weight decay & Maximum steps \\
        \midrule
        ReLU   & \(2\times10^{-3}\to2\times10^{-4}\)     & \(10^{-6}\) & \(50{,}000\) \\
        GELU   & \(1.6\times10^{-2}\to1.6\times10^{-3}\) & \(0\)       & \(20{,}000\) \\
        ReGLU  & \(1.6\times10^{-3}\to1.6\times10^{-4}\) & \(0\)       & \(20{,}000\) \\
        SwiGLU & \(2\times10^{-3}\to2\times10^{-4}\)     & \(0\)       & \(50{,}000\) \\
        \bottomrule
    \end{tabularx}
\end{table}

For every run, we restore the
checkpoint with the lowest validation MSE and use the test set only after
model selection. The specialization statistics in
Table~\ref{tab:coordinate-specialization} are reported as the mean and
standard deviation over three independent seeds.

\clearpage

\subsubsection{Normalized Second-Hermite Experiment}
\label{app:hermite2-specialization-experiment}

Figure~\ref{fig:coordinate-specialization-hermite2} provides an additional
experiment using a normalized second-Hermite link function. First-layer
neurons remain strongly specialized across all four activation families, with
the ReGLU gate weights exhibiting especially strong alignment. Thus,
specialization also occurs beyond the third-Hermite setting analyzed in
Section~\ref{sec:early-specialization}.

\paragraph{Experimental setting and link function.}
This experiment uses exactly the same data geometry, input distribution, noise level, model architectures, and width as Section~\ref{sec:coordinate-specialization}. Within the data-generating setup, the only change is the link function: in every cluster, we replace the normalized third-Hermite link function with the unit-variance normalized second-Hermite link function. Optimization hyperparameters are specified separately below.
\[
    h_2(t)=\frac{\operatorname{He}_2(t)}{\sqrt{2!}}
          =\frac{t^2-1}{\sqrt{2}},
    \qquad
    \operatorname{He}_2(t)=t^2-1.
\]
Thus, for an example from cluster \(c\),
\[
    y=h_2\!\left(\sqrt d\,\langle v_c,x\rangle\right)+\varepsilon,
    \qquad
    \varepsilon\sim\mathcal N(0,0.005^2).
\]
For \(Z\sim\mathcal N(0,1)\), this normalization gives
\(\mathbb E[h_2(Z)^2]=1\), matching the scale of the normalized
third-Hermite link function. We generate the training, validation, and test
sets independently; they contain \(100{,}000\), \(4{,}096\), and \(2{,}048\)
examples, respectively.

\begin{figure}[H]
    \centering

    \small
    \renewcommand{\arraystretch}{1.12}
    \setlength{\tabcolsep}{5pt}

    \resizebox{\textwidth}{!}{%
    \begin{tabular}{@{}lccccc@{}}
        \toprule
        & & \multicolumn{4}{c}{\textbf{Specialization measures
        (\% of active neurons)}} \\
        \cmidrule(lr){3-6}
        \textbf{Model/weight}
        & \shortstack{\textbf{Active}\\\textbf{neurons}\\\textbf{(out of 2048)}}
        & \(\mathbf{A_j\geq .71}\,\uparrow\)
        & \(\mathbf{A_j\geq .90}\,\uparrow\)
        & \(\mathbf{D_j>1}\,\uparrow\)
        & \(\mathbf{D_j>2}\,\uparrow\) \\
        \midrule
        ReLU         & 613   & 45.5 & 33.4 & 71.1 & 49.1 \\
        GELU         & 1,710 & 49.9 & 28.2 & 68.2 & 43.1 \\
        ReGLU gate   & 453   & 73.1 & 56.1 & 96.7 & 73.5 \\
        ReGLU value  & 453   & 28.5 &  3.8 & 69.1 & 24.7 \\
        SwiGLU gate  & 551   & 43.6 & 19.1 & 91.7 & 38.1 \\
        SwiGLU value & 551   & 46.3 & 14.9 & 87.5 & 41.4 \\
        \bottomrule
    \end{tabular}%
    }

    \vspace{1em}

    \includegraphics[width=\textwidth]{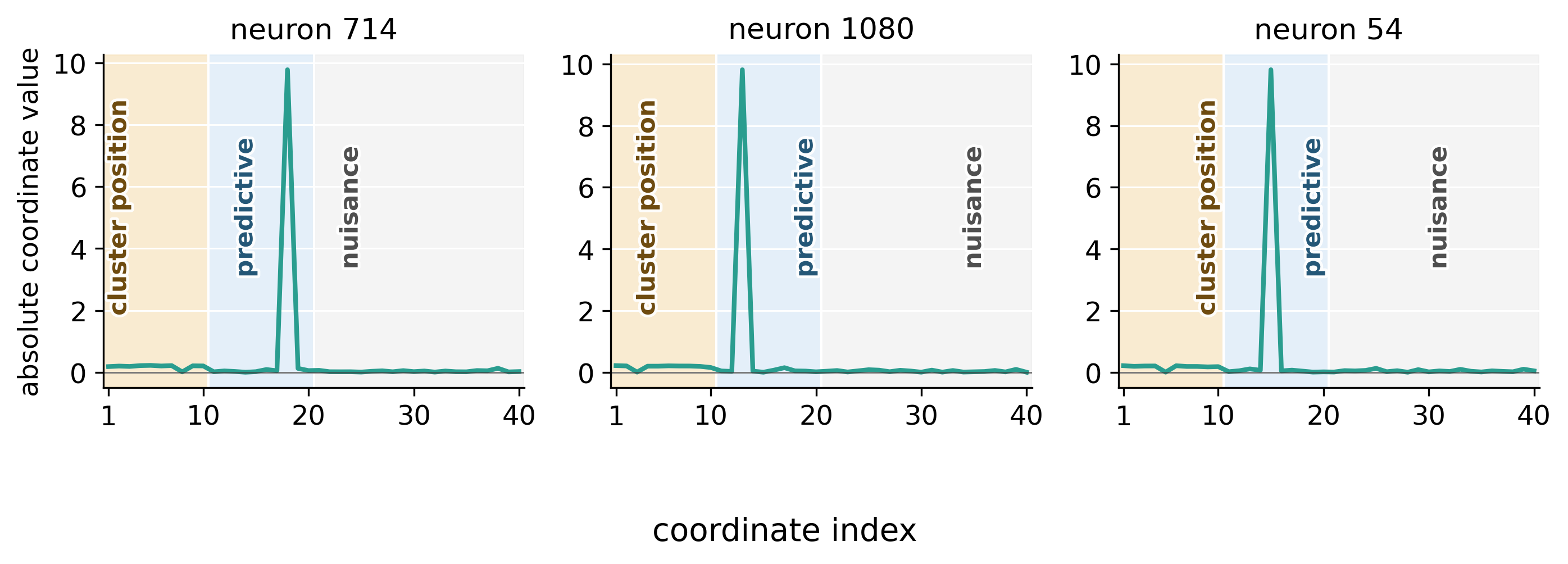}

    \caption{\textbf{Strong neuron specialization also occurs for the normalized
    second-Hermite link function.}
    Top: percentage of specialized neurons among active neurons. Across
    standard and gated activations, many neurons align strongly with a single
    cluster-specific predictive direction. Bottom: absolute first-layer
    weights of selected GELU neurons. Each neuron concentrates on one
    predictive coordinate, with comparatively little weight on the
    cluster-position and nuisance coordinates.}
    \label{fig:coordinate-specialization-hermite2}
\end{figure}

\paragraph{Training protocol.}
All models use Adam, width \(2048\), batch size \(2048\), zero weight decay,
and the width-aware parameterization of
Section~\ref{app:hermite3-specialization-experiment}. The cosine
learning-rate schedules are
\begin{center}
\begin{tabular}{@{}r@{\;}l@{}}
    ReLU:   & \(4\times10^{-3}\to4\times10^{-4}\), \\
    GELU:   & \(1.6\times10^{-2}\to1.6\times10^{-3}\), \\
    ReGLU:  & \(1.6\times10^{-3}\to1.6\times10^{-4}\), \\
    SwiGLU: & \(2.56\times10^{-2}\to2.56\times10^{-3}\).
\end{tabular}
\end{center}
ReLU is trained for at most \(40{,}000\) steps; the remaining models use
at most \(20{,}000\) steps. Each reported model uses its lowest-validation-MSE
checkpoint, at step \(37{,}000\) for ReLU. Test data are used only for
final evaluation.

\FloatBarrier

\subsection{Additional Details for MLPs Jointly Learning Cluster Structure
and Cluster-Specific Predictive Functions}
\label{app:local-routing-details}

This appendix provides the experimental details and additional results
supporting Section~\ref{sec:local-routing}.
Section~\ref{app:singleindex-specialization-null} details the specialization
measure and random-direction control used in
Table~\ref{tab:singleindex-mixed-k10-specialization-null}.
Section~\ref{app:importance-pruning} evaluates whether active neurons
retain predictive performance after pruning, comparing fine-tuning the
active neurons with updating only the readout while keeping the
hidden parameters fixed.
Section~\ref{appendix:main_exp_detail} gives the complete protocol
for the single-index experiment with mixed link functions and extends the
comparison in Fig.~\ref{fig:local-sample-complexity} to all four MLP
activations---ReLU, GELU, ReGLU, and SwiGLU---by adding GELU and SwiGLU
(Fig.~\ref{fig:mixed_link_functions_all_activations}).
Finally, Section~\ref{app:fixed-h3-single-index-details} presents the
experiment in Fig.~\ref{fig:fixed-h3-single-index-results}, which uses only
the normalized third-order Hermite link function, as in our theoretical
analysis in Section~\ref{sec:early-specialization}.

\subsubsection{Neuron Specialization and Random-Direction Control}
\label{app:singleindex-specialization-null}

Table~\ref{tab:singleindex-mixed-k10-specialization-null} complements the
predictive-performance results in Fig.~\ref{fig:local-sample-complexity} by
testing whether first-layer neurons align with the true cluster-specific
predictive directions. The analysis uses the \(K=10\), \(n=500{,}000\)
models and the active-neuron criterion defined in
Section~\ref{app:shared-specialization-analysis}. For gated architectures, the
same active set is used to analyze gate and value weights separately.

\paragraph{Specialization measure.}
For each active first-layer weight \(w_j\), we compute its maximum absolute
cosine with the ten true predictive directions,
\[
    \max_{1\leq c\leq K}
    \frac{|\langle w_j,v_c\rangle|}
    {\|w_j\|_2\,\|v_c\|_2}.
\]
The observed columns in
Table~\ref{tab:singleindex-mixed-k10-specialization-null} report the percentage
of active neurons whose maximum cosine is at least \(0.71\) or \(0.90\).
The denominator uses the full twenty-dimensional weight norm.

\paragraph{Random-direction control.}
We use the shared post-hoc protocol of
Section~\ref{app:specialization-null}. Unlike the coordinate experiment in
Section~\ref{sec:coordinate-specialization}, the true directions here are
independently sampled, are not axis-aligned, and are not constrained to be
mutually orthogonal. We therefore sample random directions from the same
distribution as the true directions. For each trained model, we generate
\(B=5{,}000\) independent sets of random directions, indexed by
\(b\in\{1,\ldots,B\}\). Within each set \(b\), we independently sample one
direction for each cluster \(c\):
\[
    \widetilde q_c^{(b)}\sim\operatorname{Unif}(\mathbb S^9),
    \qquad
    q_c^{(b)}
    =
    \begin{bmatrix}
        0\\
        \widetilde q_c^{(b)}
    \end{bmatrix}
    \in\mathbb R^{20}.
\]
For every active weight, we replace the maximum cosine with the true
directions by the maximum cosine with
\(q_1^{(b)},\ldots,q_K^{(b)}\), and then recompute the percentage exceeding
each threshold. This experiment-specific construction preserves the sampling
distribution of the predictive directions in addition to the properties
preserved by the shared protocol.

We aggregate the results as specified in
Section~\ref{app:specialization-null}. At both thresholds, alignment with the true directions is
substantially stronger than alignment with the randomized directions for
every architecture and weight type.
For every row and threshold in
Table~\ref{tab:singleindex-mixed-k10-specialization-null}, the maximum
per-seed Monte Carlo p-value is \(p_{\max}=1/5001\), using the shared
definition in Section~\ref{app:specialization-null}.

\subsubsection{Importance-Based Pruning and Fine-Tuning}
\label{app:importance-pruning}

The active-neuron criterion in
Section~\ref{app:shared-specialization-analysis} is based on weight
importance. We evaluate the selected neurons' predictive role through two pruning
experiments: fine-tuning only the active neurons recovers approximately
the full network's test MSE with similar neuron specialization. In the
second experiment, we freeze both the first-layer weights and biases and
fine-tune only the readout. This recovers much of the performance lost
through pruning without changing the retained neurons' directional alignment.
In both experiments, fine-tuning uses exactly the same training data as
the original model, and evaluation uses the same held-out test data.

\paragraph{Fine-tuning the retained neurons.}
We retain and fine-tune only the active neurons.
Table~\ref{tab:importance-pruning-performance} shows that fine-tuning
recovers similar or better test MSE than the full model, with similar
neuron specialization (Table~\ref{tab:importance-pruning-alignment}).

\begin{table}[H]
    \centering
    \small
    \renewcommand{\arraystretch}{1.15}
    \setlength{\tabcolsep}{5pt}
    \resizebox{\linewidth}{!}{%
    \begin{tabular}{@{}lcccc@{}}
        \toprule
        & & \multicolumn{3}{c}{\textbf{Test MSE} \(\boldsymbol{(\times10^{-3})}\,\downarrow\)} \\
        \cmidrule(lr){3-5}
        \textbf{Model}
        & \shortstack{\textbf{Active neurons}\\\textbf{(out of 4096)}}
        & \textbf{Full model} & \textbf{Pruned} & \textbf{Fine-tuned} \\
        \midrule
        ReLU   & \(558\pm36\) & \(1.97\pm0.30\) & \(2.09\pm0.29\) & \(1.94\pm0.29\) \\
        GELU   & \(287\pm15\) & \(1.42\pm0.30\) & \(3.93\pm0.75\) & \(1.38\pm0.32\) \\
        ReGLU  & \(133\pm44\) & \(1.54\pm0.31\) & \(2.02\pm0.19\) & \(1.49\pm0.38\) \\
        SwiGLU & \(248\pm6\)  & \(0.74\pm0.13\) & \(0.92\pm0.09\) & \(0.73\pm0.14\) \\
        \bottomrule
    \end{tabular}%
    }
    \caption{\textbf{Fine-tuning active neurons recovers performance after pruning.}
    Removing the remaining neurons increases test error, but fine-tuning
    all retained parameters restores approximately the full model's performance.
    Results use the same \(K=10\), \(n=500{,}000\) models as
    Table~\ref{tab:singleindex-mixed-k10-specialization-null}. Entries report
    mean \(\pm\) sample standard deviation over the same three runs.}
    \label{tab:importance-pruning-performance}
\end{table}

\begin{table}[H]
    \centering
    \small
    \renewcommand{\arraystretch}{1.15}
    \setlength{\tabcolsep}{5pt}
    \resizebox{\linewidth}{!}{%
    \begin{tabular}{@{}lcccc@{}}
        \toprule
        & \multicolumn{2}{c}{\textbf{Cosine} \(\boldsymbol{\geq0.71}\) \textbf{(\%)}}
        & \multicolumn{2}{c}{\textbf{Cosine} \(\boldsymbol{\geq0.90}\) \textbf{(\%)}} \\
        \cmidrule(lr){2-3}\cmidrule(lr){4-5}
        \textbf{Model/weight}
        & \textbf{Before} & \textbf{After} & \textbf{Before} & \textbf{After} \\
        \midrule
        ReLU         & \(12.72\pm3.54\) & \(12.18\pm3.57\) & \(3.02\pm0.79\)  & \(3.14\pm0.73\) \\
        GELU         & \(16.87\pm0.79\) & \(17.21\pm0.79\) & \(8.01\pm0.66\)  & \(8.01\pm0.66\) \\
        ReGLU gate   & \(16.87\pm4.31\) & \(16.87\pm4.31\) & \(5.02\pm1.31\)  & \(4.99\pm1.25\) \\
        ReGLU value  & \(79.65\pm4.15\) & \(79.65\pm4.15\) & \(60.12\pm6.44\) & \(59.92\pm5.63\) \\
        SwiGLU gate  & \(5.39\pm1.15\)  & \(5.39\pm0.93\)  & \(2.00\pm0.77\)  & \(1.87\pm0.98\) \\
        SwiGLU value & \(16.14\pm0.79\) & \(16.14\pm0.79\) & \(7.16\pm2.43\)  & \(7.30\pm2.67\) \\
        \bottomrule
    \end{tabular}%
    }
    \caption{\textbf{Active neurons retain similar specialization after fine-tuning.}
    Percentages are measured among the same active neurons before and after
    fine-tuning all retained parameters, using alignment with the original predictive directions.
    Entries report mean \(\pm\) sample standard deviation over three runs.
    The before-fine-tuning values and retained-neuron counts match
    Table~\ref{tab:singleindex-mixed-k10-specialization-null}.
    Every row, stage, and threshold has \(p_{\max}=1/5001\) against its
    fixed-weight random-direction baseline; these are not tests of the
    before-to-after change.}
    \label{tab:importance-pruning-alignment}
\end{table}

\paragraph{Readout-only fine-tuning after pruning.}
To isolate whether the active neurons already provide useful predictive
features, we again retain only the active neurons, freeze both their first-layer
weights and biases, and update only the output weights and output bias.
For ReGLU and SwiGLU, both the gate and value branches are frozen.
Table~\ref{tab:active-neuron-readout-only} shows that readout adjustment
recovers much of the performance lost through pruning.

\begin{table}[H]
    \centering
    \small
    \renewcommand{\arraystretch}{1.15}
    \setlength{\tabcolsep}{5pt}
    \resizebox{\linewidth}{!}{%
    \begin{tabular}{@{}lcccc@{}}
        \toprule
        & & \multicolumn{3}{c}{\textbf{Test MSE} \(\boldsymbol{(\times10^{-3})}\,\downarrow\)} \\
        \cmidrule(lr){3-5}
        \textbf{Model}
        & \shortstack{\textbf{Active neurons}\\\textbf{(out of 4096)}}
        & \textbf{Full model} & \textbf{Pruned} & \shortstack{\textbf{Readout-only}\\\textbf{fine-tuning}} \\
        \midrule
        ReLU   & \(558\pm36\) & \(1.97\pm0.30\) & \(2.09\pm0.29\) & \(1.94\pm0.27\) \\
        GELU   & \(287\pm15\) & \(1.42\pm0.30\) & \(3.93\pm0.75\) & \(1.53\pm0.31\) \\
        ReGLU  & \(133\pm44\) & \(1.54\pm0.31\) & \(2.02\pm0.19\) & \(1.57\pm0.28\) \\
        SwiGLU & \(248\pm6\)  & \(0.74\pm0.13\) & \(0.92\pm0.09\) & \(0.79\pm0.11\) \\
        \bottomrule
    \end{tabular}%
    }
    \caption{\textbf{Readout adjustment recovers much of the performance lost through pruning.}
    Updating only the output weights and output bias reduces test error
    while keeping all retained hidden parameters fixed.
    Results use the same \(K=10\), \(n=500{,}000\) models and active-neuron
    sets as Table~\ref{tab:importance-pruning-performance}.
    Entries report mean \(\pm\) sample standard deviation over the same
    three data and model-initialization runs.}
    \label{tab:active-neuron-readout-only}
\end{table}

\paragraph{Shared experimental setting and pruning.}
Both experiments use the same trained models as
Table~\ref{tab:singleindex-mixed-k10-specialization-null}: ReLU, GELU,
ReGLU, and SwiGLU networks of width \(4096\), with \(d=20\), \(K=10\),
and \(n=500{,}000\). The data and original training protocols are given
in Section~\ref{appendix:main_exp_detail}.

In both experiments, we retain the active neurons defined in
Section~\ref{app:shared-specialization-analysis}, using the same set as
the specialization analysis in Section~\ref{app:singleindex-specialization-null}.
This is the smallest importance-ranked set carrying at least \(99.9\%\)
of total importance.
For gated architectures, each retained unit includes both its gate and
value branches. Pruning removes all other neurons while preserving the
retained parameters and the original width-dependent output scaling.
The selected neuron identities remain fixed during fine-tuning; we do not
reapply the importance criterion afterward.
Each fine-tuning run starts from the original trained model immediately
after pruning, not from a checkpoint fine-tuned in the other experiment.

\paragraph{Shared fine-tuning protocol.}
Both experiments use the original \(500{,}000\) training examples.
A fresh Adam optimizer runs
for \(20{,}000\) steps with minibatches of \(4096\) examples sampled with
replacement. Cosine decay reduces the learning rate from \(10^{-3}\) to
\(10^{-5}\) for ReLU, GELU, and SwiGLU, and from \(10^{-4}\) to \(10^{-6}\)
for ReGLU. Weight decay is unchanged from the original models:
\(10^{-6}\) for ReLU and GELU, \(10^{-5}\) for ReGLU, and zero for SwiGLU,
applied only to trainable parameters.
For the readout-only control, Adam uses
\(\beta_1=0.9\), \(\beta_2=0.999\), and \(\epsilon=10^{-8}\).
We save checkpoints and evaluate full-training MSE every \(1000\) steps,
including the pruned model at step zero. We select the checkpoint with
the lowest training MSE; when all retained parameters are fine-tuned,
this is the final checkpoint in all twelve runs.
Neither validation nor test data are used for fine-tuning or checkpoint
selection. Evaluation uses the original \(4096\) test examples per seed;
results summarize the same three runs.

\paragraph{Specialization evaluation.}
We use the alignment measure in
Section~\ref{app:singleindex-specialization-null}, with the original
predictive directions and the full \(20\)-dimensional weight norm.
The thresholds are \(0.71\) and \(0.90\), as in
Table~\ref{tab:singleindex-mixed-k10-specialization-null}.
Before and after fine-tuning, we evaluate the same retained neurons,
with gate and value weights evaluated separately. Pruning itself leaves
their alignment unchanged because it preserves their incoming weights.

For the all-parameter fine-tuning experiment, we apply the random-direction
construction of Section~\ref{app:singleindex-specialization-null} separately
before and after fine-tuning, holding each stage's weights fixed. Both stages
use the same \(5{,}000\) random direction sets per seed and the same retained
neuron identities, without reselection. We aggregate the per-seed p-values
using the shared definition of \(p_{\max}\) in
Section~\ref{app:specialization-null}; the results appear in
Table~\ref{tab:importance-pruning-alignment}.

\FloatBarrier
\subsubsection{Experimental Details for the Single-Index Experiment with Mixed Link Functions}
\label{appendix:main_exp_detail}

\begin{figure*}[t]
    \centering
    \includegraphics[width=\textwidth]
    {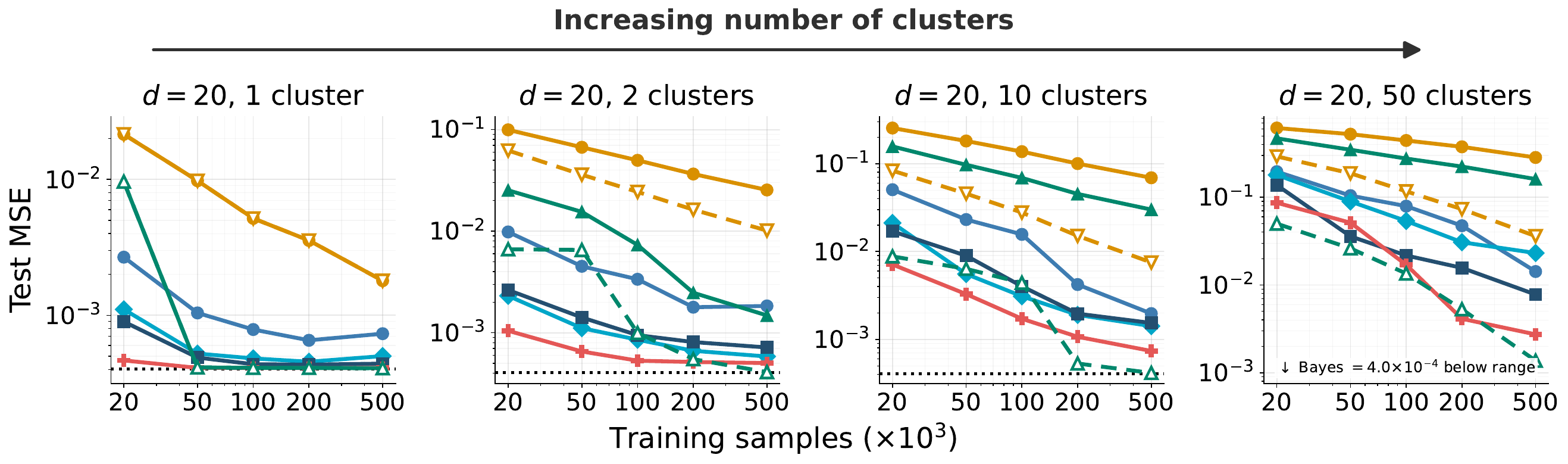}

    \par\vspace{+0.2em}

    \includegraphics[width=1.0\textwidth]
    {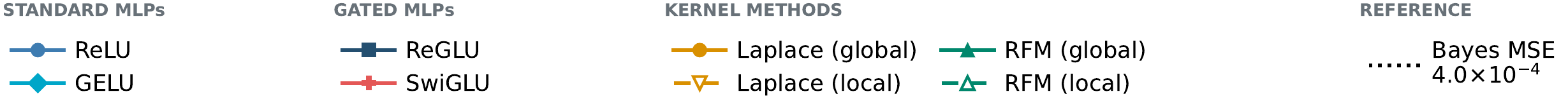}

    \vspace{-0.3em}
    \caption{\textbf{Additional activations for the single-index model with
    mixed link functions.}
    Same plot as Figure~\ref{fig:local-sample-complexity}, with GELU and SwiGLU added. As the number of clusters grows, MLPs, especially those with gated activations such as ReGLU and SwiGLU, maintain strong performance close to local Laplace and RFM, while global RFM becomes increasingly less sample efficient. Curves show loss averaged over three runs with different link-function assignments, data, label noise, model initialization, and optimization minibatches.
    }
    \label{fig:mixed_link_functions_all_activations}
\end{figure*}

\paragraph{Cluster and predictive geometry.}
We use \(d=20\), with the first ten coordinates containing cluster-identifying
information and the final ten coordinates containing the predictive signal.
We consider
\[
    K\in\{1,2,10,50\}.
\]
The finite datasets contain equal numbers of examples from each cluster
whenever the sample size is divisible by \(K\), and otherwise differ by at
most one example per cluster.

For each \(K\), cluster-center directions are obtained by farthest-point
sampling from \(\max\{20{,}000,500K\}\) independent Gaussian vectors
normalized to unit norm, giving
\[
    s_1,\ldots,s_K\in\mathbb S^9,
    \qquad
    \mathbb S^9
    =
    \{s\in\mathbb R^{10}:\|s\|_2=1\}.
\]

For \(K>1\), define
\[
    \rho_K
    =
    \max_{c\neq c'}s_c^\top s_{c'}
\]
and set
\[
    \mu_c
    =
    \begin{bmatrix}
        s_c\\
        0
    \end{bmatrix}
    \in\mathbb R^{20}.
\]
For \(K=1\), we set \(\mu_1=0\), since no cluster discrimination is required.

The standard deviation of the noise in the first ten coordinates is
\begin{equation}
    \sigma_K
    =
    \frac{1-\rho_K}{2\gamma},
    \qquad
    \gamma=4,
    \label{eq:local-routing-noise}
\end{equation}
where we take \(\rho_1=0\). Thus, the cluster noise adapts to the
least-separated pair of centers and keeps the clusters well separated as
\(K\) changes.

Each cluster independently receives a unit predictive direction
\[
    \widetilde v_c
    \sim
    \operatorname{Unif}(\mathbb S^9),
    \qquad
    v_c
    =
    \begin{bmatrix}
        0\\
        \widetilde v_c
    \end{bmatrix}
    \in\mathbb R^{20}.
\]
The cluster-identifying and predictive coordinates are therefore disjoint.

\paragraph{Input distribution.}
Given cluster \(c\), an input is sampled as
\begin{equation}
    x
    =
    \mu_c+
    \begin{bmatrix}
        \sigma_K z_{\mathrm{cluster}}\\
        z_{\mathrm{pred}}
    \end{bmatrix},
    \qquad
    z_{\mathrm{cluster}},z_{\mathrm{pred}}
    \stackrel{\mathrm{iid}}{\sim}
    \mathcal N(0,I_{10}).
    \label{eq:local-routing-data}
\end{equation}
Equivalently,
\[
    x\mid C=c
    \sim
    \mathcal N(\mu_c,\Sigma_c),
    \qquad
    \Sigma_c
    =
    \begin{pmatrix}
        \sigma_K^2 I_{10} & 0\\
        0 & I_{10}
    \end{pmatrix}.
\]

\paragraph{Cluster-specific link functions.}
Each cluster independently receives a link function sampled uniformly with
replacement from
\begin{equation}
    \mathcal G
    =
    \left\{
        t\mapsto\frac{t^2-1}{\sqrt{2}},
        \quad
        t\mapsto\frac{t^3-3t}{\sqrt{6}},
        \quad
        t\mapsto\sin(t),
        \quad
        t\mapsto\tanh(t)
    \right\}.
    \label{eq:local-link-function-family}
\end{equation}
The first two functions are the normalized probabilists' Hermite polynomials
of degrees two and three. Sampling with replacement allows several clusters
to receive the same link function.

For an example from cluster \(c\), the response is
\begin{equation}
    y
    =
    g_c\!\left(\langle x,v_c\rangle\right)+\varepsilon,
    \qquad
    \varepsilon\sim\mathcal N(0,0.02^2).
    \label{eq:local-routing-mixed-target}
\end{equation}
Because \(v_c\) is supported only on the predictive coordinates,
\[
    \langle x,v_c\rangle
    =
    \widetilde v_c^\top z_{\mathrm{pred}}.
\]
Thus, both the predictive direction and the nonlinear link can change across
clusters.

The realized link assignments for the three displayed seeds are reported in
Table~\ref{tab:link-function-counts}. For a fixed \((K,\text{seed})\), these
assignments are shared across all methods and training sizes.

\begin{table}[t]
\centering
\small
\caption{Number of clusters assigned to each link function for the three
runs shown in Fig.~\ref{fig:mixed_link_functions_all_activations}.}
\label{tab:link-function-counts}
\begin{tabular}{@{}rrrrrr@{}}
\toprule
\(K\) & Seed & Hermite-2 & Hermite-3 & Sine & Tanh\\
\midrule
1  & 1000 & 0  & 0  & 0  & 1\\
1  & 1001 & 0  & 1  & 0  & 0\\
1  & 1002 & 0  & 0  & 0  & 1\\
\midrule
2  & 1000 & 0  & 0  & 0  & 2\\
2  & 1001 & 0  & 2  & 0  & 0\\
2  & 1002 & 0  & 1  & 0  & 1\\
\midrule
10 & 1000 & 3  & 2  & 1  & 4\\
10 & 1001 & 2  & 3  & 3  & 2\\
10 & 1002 & 5  & 1  & 2  & 2\\
\midrule
50 & 1000 & 17 & 13 & 12 & 8\\
50 & 1001 & 14 & 17 & 12 & 7\\
50 & 1002 & 11 & 14 & 14 & 11\\
\bottomrule
\end{tabular}
\end{table}

\paragraph{Training, validation, and test data.}
For each \((K,\text{seed})\), the training sets of sizes
\(20{,}000\), \(50{,}000\), \(100{,}000\), and \(200{,}000\) are
cluster-balanced nested subsets of one \(200{,}000\)-example pool.
The \(500{,}000\)-example set is generated separately with the same
geometry and link assignments and is not a nested extension of this pool.
All methods share the same training data and independent \(4096\)-example
test set for each condition. The runs listed in
Table~\ref{tab:validation-protocol} additionally use an independent
\(20{,}000\)-example validation set for checkpoint selection.
Test data are used only for final evaluation.

\paragraph{Random seeds and averaging.}
Results are means over three runs, with seeds \(1000,1001,1002\), without
standard-deviation bands. Cluster centers and predictive directions are
held fixed using geometry seed \(271828\); link assignments, sampled data,
label noise, network initialization, and optimization minibatches vary
across runs. For run \(r\in\{0,1,2\}\), the training, test, and validation
seeds are \(12345+r\), \(54321+r\), and \(84321+r\), respectively.

\paragraph{MLP architectures.}
We evaluate one-hidden-layer ReLU, GELU, ReGLU, and SwiGLU networks of width
\(4{,}096\). ReLU and GELU use
\[
    h(x)=\phi(Wx+b),
\]
while the gated models use
\[
    h(x)
    =
    \phi(W_gx+b_g)\odot(W_vx+b_v).
\]
For ReGLU, \(\phi=\operatorname{ReLU}\); for SwiGLU,
\(\phi=\operatorname{SiLU}\). All hidden weights, readout weights, and biases
are trainable.

Hidden-weight entries are initialized independently with standard deviation
\(10^{-3}/\sqrt d\), and all biases are initialized to zero. For ordinary
MLPs, the initial readout magnitude of neuron \(j\) is proportional to
\(\|w_j\|_2\), with balanced random signs. For gated MLPs, it is proportional
to
\[
    \sqrt{\|w_{g,j}\|_2\|w_{v,j}\|_2}.
\]
The output multiplier is \(1024/4096\).

\paragraph{MLP optimization and evaluation.}
All MLPs use Adam, batch size \(4096\), and cosine learning-rate decay
with horizon \(200{,}000\) updates and final rate \(\eta_0/100\).
Table~\ref{tab:mlp-hparams} gives the learning rates and weight decay
used in each condition. Except for the runs listed in
Table~\ref{tab:validation-protocol}, models are trained on all \(n\)
examples for the following fixed durations, in thousands of updates,
and evaluated at the final checkpoint:
\[
\begin{array}{c|cccc}
K & \text{ReLU} & \text{GELU} & \text{ReGLU} & \text{SwiGLU}\\
\hline
1  & 18    & 35    & 39   & 189.5\\
2  & 104   & 198.5 & 71.5 & 194\\
10 & 129.5 & 198.5 & 135  & 198\\
50 & 17.5  & 113.5 & 54   & 196
\end{array}
\]

\begin{table*}[t]
\centering
\small
\renewcommand{\arraystretch}{1.12}
\setlength{\tabcolsep}{4pt}
\caption{Single-index runs using validation-based checkpoint selection.
Each run uses at most \(200{,}000\) updates and restores the checkpoint
with the lowest MSE on an independent \(20{,}000\)-example validation set.
All other runs use the fixed durations reported in this subsection.}
\label{tab:validation-protocol}
\begin{tabularx}{\textwidth}{@{}lXr@{}}
\toprule
Model & Conditions & Validation interval \\
\midrule
ReLU & \(K=10\), \(n\in\{200{,}000,500{,}000\}\) & \(500\) updates \\
SwiGLU & \(K\in\{10,50\}\), \(n\geq50{,}000\) & \(1000\) updates \\
ReLU, GELU & \(K=50\), \(n\geq100{,}000\) & \(1000\) updates \\
\bottomrule
\end{tabularx}
\end{table*}

For the \(K=10\) ReLU runs in Table~\ref{tab:validation-protocol},
early stopping uses a training-MSE target of \(4\times10^{-4}\) or a
relative-improvement threshold of \(5\times10^{-4}\) over 60 evaluations
after the minimum training duration.
The other listed runs use validation-based early stopping: after training
MSE reaches \(10^{-4}\), the relative-improvement threshold is
\(5\times10^{-3}\) over 15 evaluations; after \(175{,}000\) updates,
a second criterion uses \(10^{-3}\) over 20 evaluations.

\paragraph{MLP hyperparameters used in the displayed figure.}
Table~\ref{tab:mlp-hparams} reports the exact initial learning rate and weight
decay used for every displayed MLP curve. Each entry is
\((\eta_0,\lambda)\).

\begin{table*}[t]
\centering
\small
\caption{MLP hyperparameters used in
Fig.~\ref{fig:mixed_link_functions_all_activations}. Each entry is
\((\eta_0,\lambda)\), denoting initial learning rate and weight decay.}
\label{tab:mlp-hparams}
\resizebox{\textwidth}{!}{%
\begin{tabular}{@{}lcccc@{}}
\toprule
Condition & ReLU & GELU & ReGLU & SwiGLU\\
\midrule
\(K=1\), all \(n\)
& \((10^{-2},10^{-5})\)
& \((10^{-2},10^{-6})\)
& \((10^{-3},10^{-5})\)
& \((3{\times}10^{-2},10^{-6})\)
\\

\(K=2\), all \(n\)
& \((3.33{\times}10^{-3},10^{-5})\)
& \((10^{-2},10^{-6})\)
& \((10^{-3},10^{-5})\)
& \((3.33{\times}10^{-3},10^{-6})\)
\\

\(K=10,\ n=20\mathrm{k}\)
& \((3{\times}10^{-2},10^{-5})\)
& \((10^{-2},10^{-6})\)
& \((10^{-3},10^{-5})\)
& \((10^{-2},10^{-6})\)
\\

\(K=10,\ n\in\{50\mathrm{k},100\mathrm{k}\}\)
& \((3{\times}10^{-2},10^{-5})\)
& \((10^{-2},10^{-6})\)
& \((10^{-3},10^{-5})\)
& \((10^{-2},0)\)
\\

\(K=10,\ n\in\{200\mathrm{k},500\mathrm{k}\}\)
& \((10^{-2},10^{-6})\)
& \((10^{-2},10^{-6})\)
& \((10^{-3},10^{-5})\)
& \((10^{-2},0)\)
\\

\(K=50,\ n=20\mathrm{k}\)
& \((3{\times}10^{-2},0)\)
& \((3{\times}10^{-2},10^{-6})\)
& \((9{\times}10^{-3},10^{-6})\)
& \((3{\times}10^{-2},10^{-6})\)
\\

\(K=50,\ n=50\mathrm{k}\)
& \((3{\times}10^{-2},0)\)
& \((3{\times}10^{-2},10^{-6})\)
& \((9{\times}10^{-3},10^{-6})\)
& \((10^{-2},0)\)
\\

\(K=50,\ n=100\mathrm{k}\)
& \((3{\times}10^{-2},10^{-6})\)
& \((3{\times}10^{-2},10^{-6})\)
& \((9{\times}10^{-3},10^{-6})\)
& \((10^{-2},0)\)
\\

\(K=50,\ n\in\{200\mathrm{k},500\mathrm{k}\}\)
& \((10^{-2},0)\)
& \((3{\times}10^{-2},10^{-6})\)
& \((9{\times}10^{-3},10^{-6})\)
& \((10^{-2},0)\)
\\
\bottomrule
\end{tabular}%
}
\end{table*}

\paragraph{Global Laplace and RFM.}
RFM uses the metric-dependent Laplace kernel
\begin{equation}
    k_t(x,x')
    =
    \exp\!\left(
        -\frac{\|M_t^{1/2}(x-x')\|_2}{h_t}
    \right),
    \label{eq:local-routing-rfm-kernel}
\end{equation}
where \(M_t\) is the metric at iteration \(t\) and \(h_t\) is the bandwidth.
We initialize \(M_0=I_{20}\), so iteration zero is the isotropic Laplace
kernel.

After fitting the kernel predictor, RFM estimates
\[
    \widehat G_t
    =
    \frac{1}{m}
    \sum_{i=1}^{m}
    \nabla\widehat f_t(x_i)
    \nabla\widehat f_t(x_i)^\top
\]
using at most \(m=20{,}000\) training examples. A normalized version of
\(\widehat G_t\) defines the next metric. We perform at most three metric
updates, producing iterations \(t\in\{0,1,2,3\}\).

The base bandwidth is computed from within-cluster pairwise distances in the
current transformed space: we compute one median distance per cluster and
take the median across clusters. This calculation uses cluster identities
only to calibrate one scalar bandwidth. Global Laplace and global RFM still
fit a single predictor to the complete training set and do not receive the
cluster identity at prediction time.

Table~\ref{tab:kernel-hparams} gives the configurations used across
sample sizes and seeds, including \(n=500{,}000\). Each entry is
\((b,t,\lambda)\), where \(b\) is the bandwidth multiplier, \(t\) is the RFM
iteration, and \(\lambda\) is the ridge parameter.

\begin{table*}[t]
\centering
\small
\caption{Kernel configurations used in
Fig.~\ref{fig:mixed_link_functions_all_activations}.}
\label{tab:kernel-hparams}
\resizebox{\textwidth}{!}{%
\begin{tabular}{@{}lcccc@{}}
\toprule
Condition
& Global Laplace
& Global RFM
& Local Laplace
& Local RFM\\
\midrule
\(K=1\), all \(n\)
& \((2,0,10^{-6})\)
& \((2,3,10^{-6})\)
& same as global
& same as global
\\

\(K=2\), all \(n\)
& \((2,0,10^{-6})\)
& \((1,1,10^{-6})\)
& \((2,0,10^{-8})\)
& \((2,3,10^{-8})\)
\\

\(K=10\), all \(n\)
& \((2,0,10^{-6})\)
& \((2,1,10^{-6})\)
& \((2,0,10^{-8})\)
& \((2,3,10^{-6})\)
\\

\(K=50\), all \(n\)
& \((2,0,10^{-6})\)
& \((2,2,10^{-6})\)
& \((2,0,10^{-8})\)
& \((2,3,10^{-8})\)
\\
\bottomrule
\end{tabular}%
}
\end{table*}

Exact kernel solves are used when an individual kernel problem contains fewer
than \(11{,}000\) training examples. Larger kernel problems are solved using
EigenPro~\citep{ma2018power,ma2019kernel,abedsoltan2023toward,
abedsoltan2025fast}. EigenPro uses a training-MSE target of
\(4\times10^{-4}\), equal to the observation-noise variance. The explicit
ridge parameter applies directly to exact solves; EigenPro is regularized
primarily through early stopping.

\paragraph{Local methods.}
The local Laplace and RFM methods receive the true cluster identity
during both training and testing. They partition the training set by cluster,
fit an independent predictor within each partition, and evaluate each test
example using the predictor associated with its true cluster.

Each local RFM  performs at most three metric updates. For \(K=1\), the global
and local methods coincide.

These local methods are diagnostic references rather than fair predictive
baselines because the cluster identity is supplied directly. Their purpose is to
measure the difficulty of learning the local predictive functions once the
cluster assignment is known.

\paragraph{Bayes reference.}
The dotted line is the empirical test MSE of the Bayes predictor under the
known data-generating distribution. It computes the posterior probability of
each cluster given \(x\) and averages the corresponding noiseless local
predictions. The irreducible observation-noise variance is
\[
    0.02^2=4\times10^{-4}.
\]
Across the test sets used in the updated figure, the empirical Bayes MSE lies
between approximately \(3.97\times10^{-4}\) and
\(4.14\times10^{-4}\), consistent with finite-test-set variation around the
noise variance.

\FloatBarrier
\subsubsection{A Common Third-Hermite Link Function}
\label{app:fixed-h3-single-index-details}

\paragraph{Fixed third-Hermite single-index model.}
We use the same clustered single-index geometry with \(d=20\) and
\[
    K\in\{1,2,10,50\}.
\]
Each cluster has an independently sampled predictive direction, but every
cluster uses the common normalized third-Hermite link
\[
    g_c(t)
    =
    h_3(t)
    =
    \frac{t^3-3t}{\sqrt{6}},
    \qquad c\in[K].
\]
A common link function isolates the role of cluster-dependent predictive
directions and supports the third-Hermite analysis in
Section~\ref{sec:early-specialization}.

\begin{figure}[H]
    \centering
    \includegraphics[width=\textwidth]
    {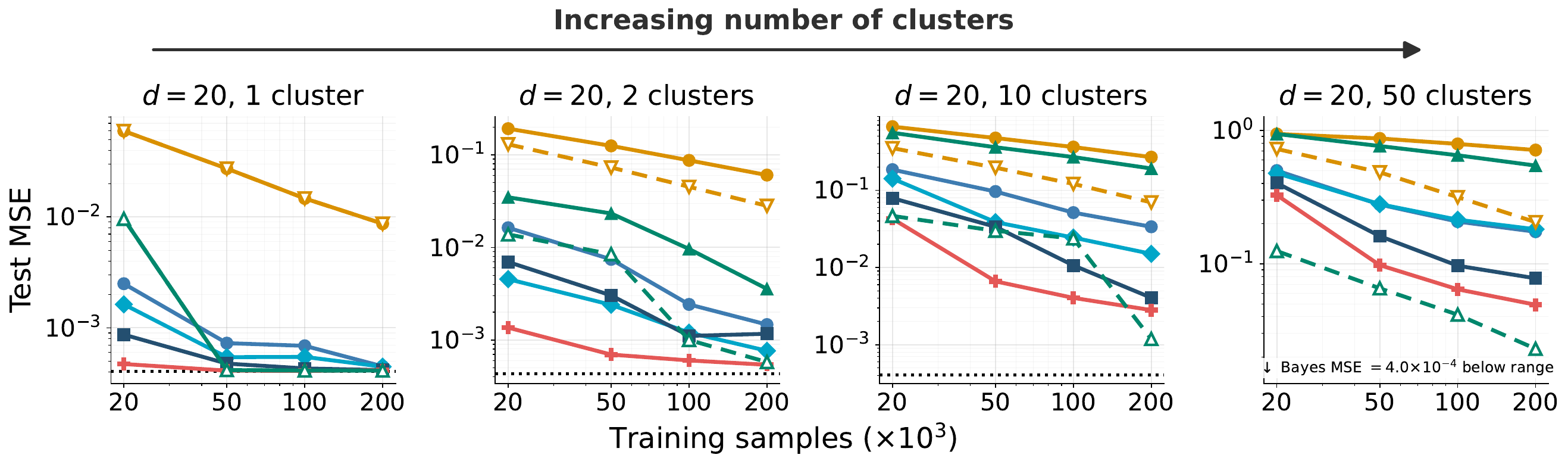}
    \includegraphics[width=\textwidth]
    {figs/singleindex_final_n500k_3seed_original_all_activations_legend_only.pdf}
    \caption{
    \textbf{Fixed third-Hermite single-index model.}
    This uses the same clustered single-index data construction as Figure Figure~\ref{fig:local-sample-complexity}, but with a common normalized third-Hermite link, providing a setting closer to the theory in Section~\ref{sec:specialization-theory}.
    }
    \label{fig:fixed-h3-single-index-results}
\end{figure}

\paragraph{Results.}
Figure~\ref{fig:fixed-h3-single-index-results} shows that the separation
between the global methods and the MLPs grows with the number of clusters.
Global Laplace and RFM become progressively less effective as the
cluster-specific directions occupy more of the predictive subspace. The MLPs
continue to exploit the cluster-dependent directions, and the gated
architectures perform especially well in the many-cluster regime.

\subsection{Additional Details for Trained MLPs Encoding Cluster Structure}
\label{app:learned-gating-details}

This appendix supplements the experiment in
Section~\ref{sec:learned-gating}, which builds on the clustered single-index
setting of Section~\ref{sec:local-routing}. We describe how cluster gates are
extracted from a trained ReLU MLP and reused to construct independent local
Laplace and RFM predictors.

\paragraph{Clustered data with mixed link functions.}
We use the \(d=20\) input distribution in
Eq.~\ref{eq:local-routing-data}, with ten cluster-identifying and ten
predictive coordinates, cluster-center radius \(R=1\), and
\(K\in\{2,5,10,50\}\) equally represented clusters. Cluster-center and
predictive directions follow Section~\ref{appendix:main_exp_detail},
with the cluster-noise rule in Eq.~\ref{eq:local-routing-noise} and
\(\gamma=4\). Each cluster's link function is sampled independently and
uniformly from normalized Hermite-3, sine, and hyperbolic tangent.
Responses follow Eq.~\ref{eq:local-routing-mixed-target}, with
observation-noise standard deviation \(0.02\).

\paragraph{Datasets and repetitions.}
For every \(K\), we generate a balanced training pool containing
\(200{,}000\) examples and use nested prefixes of sizes
\[
    n\in\{20{,}000,50{,}000,100{,}000,200{,}000\}.
\]
Evaluation uses an independently generated test set containing \(10{,}000\)
examples. Within each run, exactly the same serialized training and test
tensors are used by the MLP, global kernel methods, ground-truth local
methods, and MLP-gated methods.

Results are averaged over three seeds. For a fixed \(K\), the cluster-position
and predictive directions are held fixed, while each seed independently
resamples the cluster-specific link-function assignments, training and test
examples,
label noise, and MLP initialization. No test labels are used to construct the
gates.

\paragraph{Source ReLU MLP.}
For each \(K\), sample size, and seed, we train a one-hidden-layer ReLU network
of width \(4{,}096\),
\[
    \widehat f(x)
    =
    \sum_{j=1}^{4096}
    a_j\operatorname{ReLU}(w_j^\top x+b_j)
    +b_{\mathrm{out}}.
\]
All weights and biases are trainable. Optimization uses Adam with minibatches
of size \(4{,}096\), zero weight decay, and cosine learning-rate decay from
\(10^{-2}\) to \(10^{-4}\). Training runs for at least \(20{,}000\) steps and
at most \(200{,}000\) steps. It terminates when the full training MSE reaches
\(4\times10^{-4}\), or when the training loss plateaus. We restore the
checkpoint with the smallest full training MSE. Neither test MSE nor cluster
identity is used for checkpoint selection.

\paragraph{Selecting active neurons.}
After training, we freeze the ReLU MLP and select the active-neuron set
\(\mathcal J\) using the shared importance-based criterion in
Section~\ref{app:shared-specialization-analysis}.
Here, these neurons are used to construct contribution profiles for
clustering inputs and gate-specific features for the local predictors,
rather than to measure alignment with the predictive directions.

\paragraph{Contribution profiles and learned gates.}
For every training input \(x\), we compute its normalized contribution profile
over the active neurons:
\begin{equation}
    p_j(x)
    =
    \frac{
        |a_j|\operatorname{ReLU}(w_j^\top x+b_j)
    }{
        \displaystyle
        \sum_{\ell\in\mathcal J}
        |a_\ell|\operatorname{ReLU}(w_\ell^\top x+b_\ell)
    },
    \qquad j\in\mathcal J.
    \label{eq:appendix-contribution-profile}
\end{equation}
Thus \(p(x)\) describes which active first-layer neurons contribute to the
MLP prediction on input \(x\), independent of the overall magnitude of the
prediction.

We apply \(K\)-means++ to the training contribution profiles and use
\(M=K\) learned gates. Knowledge of the number of clusters is therefore
provided, but the true cluster assignments are never observed. We use eight
initializations and at most 30 Lloyd iterations, retaining the solution with
the smallest training inertia. Every training example is assigned to exactly
one gate through its nearest centroid.

Let \(q_c\) denote the centroid of learned gate \(c\), with coordinate
\(q_{cj}\) corresponding to active neuron \(j\). Although examples receive
hard gate assignments, the centroid coordinates provide a soft association
between neurons and gates: the same neuron may contribute to several gates.

\paragraph{Constructing gate-specific features.}
For each learned gate \(c\), we form the positive-semidefinite metric
\begin{equation}
    G_c
    =
    \frac{
        \displaystyle
        \sum_{j\in\mathcal J}q_{cj}w_jw_j^\top
    }{
        \displaystyle
        \sum_{j\in\mathcal J}q_{cj}
    }.
    \label{eq:appendix-mlp-derived-metric}
\end{equation}
The corresponding gate-specific representation is
\begin{equation}
    z_c(x)=G_c^{1/2}x.
    \label{eq:appendix-mlp-derived-projection}
\end{equation}
The MLP biases affect the contribution profiles and hence the gate
assignments, while \(G_c\) itself is constructed from the incoming
first-layer weight directions.

For every learned gate, we project only the training examples assigned to
that gate and fit an independent local Laplace or RFM predictor in the
resulting representation.

\paragraph{Test-time prediction.}
For a new test input \(x\), we:

\begin{enumerate}
    \item pass \(x\) through the frozen MLP and compute its contribution
    profile \(p(x)\);
    \item assign \(x\) to the nearest fixed training centroid;
    \item transform it using the corresponding representation
    \(z_c(x)=G_c^{1/2}x\); and
    \item evaluate the local predictor fitted for that learned gate.
\end{enumerate}

The test input is therefore assigned using only its MLP contribution profile.
Neither its response nor its true cluster identity is used.

\paragraph{Local Laplace and RFM predictors.}
The local Laplace predictor for gate \(c\) uses
\[
    k_c(x,x')
    =
    \exp\!\left(
        -\frac{
            \|G_c^{1/2}(x-x')\|_2
        }{h_c}
    \right),
\]
where \(h_c\) is initialized using the median pairwise distance within that
gate. We use ridge parameter \(10^{-6}\).

The local RFM begins from this Laplace kernel and performs three metric
updates, producing iterations \(0,1,2,3\), where iteration \(0\) is the local
Laplace predictor. After each update, the bandwidth is recomputed using
distances under the updated metric. Kernel systems with fewer than
\(11{,}000\) local training examples are solved directly; larger systems use
EigenPro. The same procedure is used for the global and ground-truth local
RFM baselines.

\paragraph{Compared methods.}
Figure~\ref{fig:local-sample-complexity_kmeans} compares:

\begin{itemize}
    \item the source ReLU MLP;
    \item one global Laplace predictor fitted to all training examples;
    \item one global RFM fitted to all training examples;
    \item MLP-gated local Laplace, using the gates and projections extracted
    from the ReLU MLP;
    \item MLP-gated local RFM, using the same extracted gates and projections;
    \item local Laplace, which is given the true cluster identities at
    training and test time;
    \item local RFM, defined analogously; and
    \item the irreducible noise level, whose expected MSE is
    \(0.02^2=4\times10^{-4}\).
\end{itemize}

The local methods using the true clusters are diagnostic references rather
than fair deployable predictors because they receive the true cluster
identity. By contrast, the
MLP-gated methods receive only the training input--response pairs and the
known number \(K\) of gates.

\paragraph{Aggregation.}
Every curve in Figure~\ref{fig:local-sample-complexity_kmeans} is the
arithmetic mean over three independent runs. Each run uses a newly sampled
dataset from the same generative model and an independently initialized MLP.

\newpage

\subsection{Additional Details for Clustered Multi-Index Models}
\label{app:clustered-multi-index-details}

This appendix supplements Section~\ref{sec:clustered-multi-index}.
Section~\ref{app:multiindex-data} specifies the data distribution.
Section~\ref{app:multiindex-training} reports the models and training settings, and
Section~\ref{app:multiindex-specialization-null} extends the specialization
analysis to three-dimensional predictive spans.

\subsubsection{Data Generation}
\label{app:multiindex-data}

\paragraph{Cluster geometry.}
We follow the Gaussian-mixture construction in
Section~\ref{appendix:main_exp_detail}, increasing the input dimension to
\(d=50\), with coordinates \(1{:}20\) identifying the cluster,
\(21{:}40\) containing the predictive variables, and \(41{:}50\)
containing nuisance variables. We consider
\[
    K\in\{1,2,10,25\},
    \qquad
    n\in\{20{,}000,50{,}000,100{,}000,200{,}000,500{,}000\}.
\]
Clusters have equal sampling probabilities, and training sets contain
exactly \(n/K\) examples from each cluster.

For \(K>1\), cluster centers are
\[
    \mu_c=(s_c,0_{20},0_{10}),
    \qquad
    s_c\in\mathbb R^{20},
    \qquad
    \|s_c\|_2=1.
\]
The vectors \(s_c\) are selected from \(20{,}000\) random unit-vector
candidates in \(\mathbb R^{20}\) using the farthest-point procedure of
Section~\ref{appendix:main_exp_detail}. We use the same cluster-noise rule
in Eq.~\ref{eq:local-routing-noise}, with \(\gamma=4\), and the same
\(K=1\) convention. With the enlarged blocks and additional nuisance
variables, the conditional input distribution becomes
\[
    x\mid c
    \sim
    \mathcal N\!\left(
        \mu_c,\,
        \operatorname{diag}(\sigma_K^2I_{20},I_{20},I_{10})
    \right).
\]

\paragraph{Predictive spans and responses.}
Independently for each cluster, we generate a \(20\times3\) matrix with
independent standard Gaussian entries and compute its thin QR
decomposition. If \(q_{c,1},q_{c,2},q_{c,3}\) are the resulting orthonormal
columns, we set
\[
    v_{c,r}=(0_{20},q_{c,r},0_{10}),
    \qquad r=1,2,3.
\]
Directions are orthonormal within a cluster; frames from different
clusters are neither shared nor constrained to be orthogonal.

Each cluster independently receives one link function, sampled uniformly
with replacement from the family in Eq.~\ref{eq:local-link-function-family}
after excluding the third-order Hermite link. The remaining choices are
the normalized second-order Hermite polynomial, sine, and hyperbolic tangent.
Responses follow the three-index model in Section~\ref{sec:prelim}, with
the same observation-noise standard deviation \(0.02\) as in
Section~\ref{appendix:main_exp_detail}. Each predictive projection has
unit variance.

\paragraph{Shared data and repeated runs.}
For each \(K\), cluster centers and predictive frames are generated using
geometry seed \(271828+1000003K\) and held fixed across runs.
The three run seeds \(1000,1001,1002\) vary the link-function assignments,
sampled inputs, observation noise, network initialization, and
optimization minibatches.

All training sets are nested, cluster-balanced subsets of a
single \(500{,}000\)-example pool for each \((K,\text{seed})\).
Every run uses independently sampled validation and test sets of sizes
\(20{,}000\) and \(4{,}096\), respectively. All methods share the same
saved training, validation, and test data for each \((K,n,\text{seed})\).
Validation data are used for model selection, and test data only for final
evaluation. We report the mean test MSE over three runs without
standard-deviation bands.

\subsubsection{Models and Training}
\label{app:multiindex-training}

\paragraph{MLP architecture and initialization.}
We use the ReLU, GELU, ReGLU, and SwiGLU architectures described in
Section~\ref{appendix:main_exp_detail}, with width \(2048\) and input
dimension \(d=50\). Hidden and output biases are trainable.
Incoming weights are initialized with standard deviation
\(10^{-3}/\sqrt{50}\),
and biases are initialized to zero. Readout magnitudes follow the
norm-based formulas in Section~\ref{appendix:main_exp_detail}, with
proportionality constant one and independent random signs. The output
multiplier is \(1024/2048\).

\paragraph{Optimization and checkpoint selection.}
We use Adam, batch size \(4096\), and cosine decay from \(\eta_0\) to
\(\eta_0/100\), with a schedule horizon and maximum training duration of
\(100{,}000\) updates. Validation MSE is evaluated every \(1000\) updates, and the
checkpoint with the lowest validation MSE is retained.

Training may stop after \(30{,}000\) updates if validation MSE has not
improved by at least \(0.1\%\) relative to its last meaningful improvement
for twenty consecutive evaluations.

The following learning rates and Adam weight-decay coefficients are
used for all \(K\), sample sizes, and seeds:
\[
\begin{array}{lcc}
\text{Activation} & \eta_0 & \text{Weight decay}\\
\hline
\text{ReLU}   & 2\times10^{-3} & 10^{-5}\\
\text{GELU}   & 10^{-2}        & 0\\
\text{SwiGLU} & 5\times10^{-3} & 0
\end{array}
\]
ReGLU uses the following fixed condition-specific settings, shared
across the three seeds. Each entry is
\((\eta_0,\text{weight decay})\):
\begin{center}
\resizebox{\linewidth}{!}{\(
\begin{array}{c|cccc}
n & K=1 & K=2 & K=10 & K=25\\
\hline
20{,}000
& (10^{-3},10^{-5})
& (10^{-3},10^{-5})
& (9\times10^{-3},10^{-5})
& (9\times10^{-3},0)\\
50{,}000
& (10^{-3},10^{-5})
& (3\times10^{-3},10^{-5})
& (9\times10^{-3},10^{-6})
& (9\times10^{-3},10^{-6})\\
100{,}000
& (9\times10^{-3},10^{-6})
& (9\times10^{-3},10^{-6})
& (9\times10^{-3},10^{-6})
& (9\times10^{-3},10^{-6})\\
200{,}000,\ 500{,}000
& (10^{-3},10^{-6})
& (3\times10^{-3},10^{-6})
& (3\times10^{-3},10^{-6})
& (9\times10^{-3},10^{-6})
\end{array}
\)}
\end{center}
\paragraph{Kernel methods.}
We use the global and local Laplace and RFM predictors defined in
Section~\ref{appendix:main_exp_detail}, using the full fifty-dimensional
input, identity initialization, three RFM metric updates, and
iteration zero as the Laplace baseline. At each iteration, the bandwidth is the
median pairwise distance
among up to \(512\) uniformly sampled training inputs after applying
the current metric transform. Bandwidths are recomputed after metric
updates; neither bandwidth nor ridge regularization is tuned on validation.

Metric updates use the average gradient outer product from
Section~\ref{appendix:main_exp_detail}, estimated from up to \(20{,}000\)
training examples from the relevant global or local training set.
Here, the estimate is symmetrized and divided
by its largest matrix entry before constructing the next transform.

Kernel systems with fewer than \(11{,}000\) training examples are solved
directly; larger systems use EigenPro. Direct solves use regularization
\(K_{\mathrm{train}}+n_{\mathrm{local}}10^{-6}I\).
EigenPro uses a training-MSE target of \(4\times10^{-4}\) and a maximum
of \(5000\) epochs.
All completed EigenPro fits reached this training-error target.

The RFM iteration is selected by minimum validation MSE, including
iteration zero as a candidate. For the local method, we select one common
iteration using validation MSE pooled across clusters, weighted by their
validation sample counts. The selected predictor and the iteration-zero
Laplace predictor are then evaluated on the held-out test set.

\subsubsection{Specialization Analysis and Random-Subspace Control}
\label{app:multiindex-specialization-null}

\paragraph{Active neurons.}
We analyze the validation-selected models at \(K=10\) and
\(n=500{,}000\), using the active-neuron criterion in
Section~\ref{app:shared-specialization-analysis}, as in
Section~\ref{app:singleindex-specialization-null}. This includes the same
\(99.9\%\) importance threshold and separate gate/value analyses on the
same active set.

\paragraph{Span-specialization measure.}
We extend the direction-alignment measure in
Section~\ref{app:singleindex-specialization-null} to alignment with an
entire cluster-specific predictive span. The orthogonal projector onto
cluster \(c\)'s predictive span is
\[
    P_c=\sum_{r=1}^{3}v_{c,r}v_{c,r}^{\top}.
\]
For an incoming weight \(w_j\), define
\begin{equation}
    S_j=\max_{1\leq c\leq K}
    \frac{\|P_cw_j\|_2}{\|w_j\|_2}.
    \label{eq:multiindex-span-specialization}
12\end{equation}
The denominator includes all fifty coordinates, including
cluster-identifying and nuisance coordinates.
We report the percentage of active neurons satisfying
\(S_j\geq0.71\) or \(S_j\geq0.90\), using these literal thresholds.
The table reports mean and sample standard deviation over three
trained models. This measure detects alignment with a cluster's
predictive span, not necessarily with an individual direction within it.

\paragraph{Random-subspace control.}
We use the shared protocol in Section~\ref{app:specialization-null},
replacing the random directions of
Section~\ref{app:singleindex-specialization-null} with random
three-dimensional subspaces. For each of the \(B=5{,}000\) draws per
trained model, we independently generate ten such subspaces within the
twenty-dimensional predictive subspace, using the Gaussian QR construction
in Section~\ref{app:multiindex-data}. We then substitute their projectors
for the true projectors in Eq.~\ref{eq:multiindex-span-specialization} and
recompute the percentages at the same thresholds.

We aggregate the results as specified in
Section~\ref{app:specialization-null}. The same random subspace collections are used across activations
and weight types within each seed.

Table~\ref{tab:multiindex-mixed-k10-span-specialization} reports the
observed and random-baseline results. For every row and threshold, the
maximum per-seed Monte Carlo p-value is \(p_{\max}=1/5001\), using the shared definition in
Section~\ref{app:specialization-null}.

\newpage
\section{Proofs for Section~\ref{sec:specialization-theory}}
\label{app:specialization-theory-proofs}

\subsection{Proof of Theorem~\ref{thm:neurons-specialize}}
\label{app:proof-neurons-specialize}

We first prove Theorem~\ref{thm:neurons-specialize} invoking
\cref{lem:cubic-correlation-identity,lem:one-cluster-maxima,lem:specialized-maxima,lem:positive-signed-flow,lem:self-selected-specialization,lem:uniform-cluster-coverage,lem:early-time-tracking},
whose statements and proofs are provided after the main proof.
A roadmap of their dependencies is shown in
\cref{fig:theorem1-dependencies}.

{
Throughout this subsection, the normalized means and predictive directions
\[
    \frac{\mu_1}{R},\ldots,\frac{\mu_K}{R},
    v_1,\ldots,v_K
\]
form an orthonormal basis of $\mathbb R^d$, where $d=2K$. In the notation of
Section~\ref{sec:prelim}, we may write
\[
    c\sim\operatorname{Unif}([K]),
    \qquad
    x=\mu_c+z,
    \qquad
    z\sim\mathcal N(0,I_d),
\]
and
\[
    y=h_3\bigl(v_c^\top z\bigr),
    \qquad
    h_3(t)=\frac{t^3-3t}{\sqrt6}.
\]

For $\omega\in\mathbb S^{d-1}$, define the weight's correlations with the cluster means and predictive directions
\[
    b_c(\omega)
    =
    \langle\omega,\mu_c\rangle,
    \qquad
    \rho_c(\omega)
    =
    \langle\omega,v_c\rangle.
\]
}
Write the ReLU activation function and the Gaussian density as
\[
    \phi(t)=t_+,
    \qquad
    \varphi(t)
    =
    \frac{1}{\sqrt{2\pi}}e^{-t^2/2},
\]
and define the unsigned and signed population-correlation objectives, respectively, by
\[
    \Phi({\omega})
    =
    \mathbb E\left[{y}\,
      \phi({\omega}^\top{x})\right],
    \qquad
    \Psi_\zeta({\omega})
    =
    \zeta\Phi({\omega}),
    \qquad
    \zeta\in\{\pm1\}.
\]

\begin{proof}[Proof of Theorem~\ref{thm:neurons-specialize}]
For each initial direction ${\omega_j^0}$, let
$(\bar u_j,{\bar\omega_j})$ be the corresponding self-selected teacher-only
trajectory from Lemma~\ref{lem:self-selected-specialization}.

By Lemmas~\ref{lem:self-selected-specialization}
and~\ref{lem:specialized-maxima}, almost surely there are a cluster label
$J_j\in[K]$ and an orientation $\tau_j\in\{\pm1\}$ such that
\[
    \limsup_{t\to\infty}
    \left\|
        {\bar\omega_j(t)-\tau_j v_{J_j}}
    \right\|_2
    \leq
    \frac{C}{R}
\]
for a universal constant $C$.

Since $m<\infty$ and every teacher-only direction converges, for every
$\delta>0$ there is a finite time $T_\delta$ such that, simultaneously for
all $j\in[m]$,
\[
    \left\|
        {\bar\omega_j(T_\delta)-\tau_j v_{J_j}}
    \right\|_2
    \leq
    \frac{\delta}{2}+\frac{C}{R}.
\]

Apply Lemma~\ref{lem:early-time-tracking} on $[0,T_\delta]$. For sufficiently
small $\varepsilon$,
\[
    \left\|
        \frac{w_j(T_\delta)}{\|w_j(T_\delta)\|_2}
        -
        {\bar\omega_j(T_\delta)}
    \right\|_2
    \leq
    L_{T_\delta}m\varepsilon^2
    \leq
    \frac{\delta}{2}
\]
for every $j\in[m]$. The triangle inequality therefore gives
\[
    \left\|
        \frac{w_j(T_\delta)}{\|w_j(T_\delta)\|_2}
        -
        \tau_j {v_{J_j}}
    \right\|_2
    \leq
    \delta+\frac{C}{R}.
\]

The independence and uniformity of the labels $J_1,\ldots,J_m$, together
with the cluster-coverage probability, follow from
Lemma~\ref{lem:uniform-cluster-coverage}.
\end{proof}

{
\begin{figure}[ht!]
\centering
\resizebox{\linewidth}{!}{%
\begin{tikzpicture}[
    node distance=7mm and 12mm,
    >=Latex,
    every node/.style={
        draw,
        rounded corners,
        align=center,
        font=\small,
        inner sep=5pt,
        text width=3.4cm
    },
    theorem/.style={
        draw,
        very thick,
        text width=4.0cm
    },
    arrow/.style={
        ->,
        thick
    }
]

\node (corr)
{
Lemma~\ref{lem:cubic-correlation-identity}\\
\textbf{Cubic correlation identity}\\
$\Psi_\zeta(\omega)
 = \frac1K\sum_c
 \zeta\eta_3(b_c)\rho_c^3$
};

\node (onecluster) [below left=of corr]
{
Lemma~\ref{lem:one-cluster-maxima}\\
\textbf{One-cluster landscape}\\
positive local maxima cannot
mix predictive directions
};

\node (specialmax) [below right=of corr]
{
Lemma~\ref{lem:specialized-maxima}\\
\textbf{Specialized maxima}\\
$\omega^\star
 = \tau v_c
 + O(R^{-1})$
};

\node (flow) [below=14mm of $(onecluster)!0.5!(specialmax)$]
{
Lemma~\ref{lem:positive-signed-flow}\\
\textbf{Gradient-flow convergence}\\
unstable mixed critical points
are avoided a.s.
};

\node (selfsign) [below=of flow]
{
Lemma~\ref{lem:self-selected-specialization}\\
\textbf{Self-selected sign}\\
teacher-only dynamics select a sign
and specialize a.s.
};

\node (coverage) [below left=of selfsign]
{
Lemma~\ref{lem:uniform-cluster-coverage}\\
\textbf{Uniform labels and coverage}\\
$J_j \stackrel{\mathrm{iid}}{\sim}
\operatorname{Unif}([K])$
};

\node (tracking) [below right=of selfsign]
{
Lemma~\ref{lem:early-time-tracking}\\
\textbf{Full-network tracking}\\
coupled dynamics differ from
teacher-only dynamics by
$O(m\varepsilon^2)$
};

\node[theorem] (thm) [below=15mm of $(coverage)!0.5!(tracking)$]
{
Theorem~\ref{thm:neurons-specialize}\\
\textbf{Randomly initialized neurons specialize}\\[2pt]
$\displaystyle
\frac{w_j(T_\delta)}
     {\|w_j(T_\delta)\|_2}
\approx
\tau_j v_{J_j}$
};

\draw[arrow] (corr) -- (onecluster);
\draw[arrow] (corr) -- (specialmax);
\draw[arrow] (onecluster) -- (flow);
\draw[arrow] (specialmax) -- (flow);
\draw[arrow] (flow) -- (selfsign);
\draw[arrow] (selfsign) -- (coverage);
\draw[arrow] (selfsign) -- (tracking);
\draw[arrow, dashed]
    (corr.east)
    .. controls +(3.0,0) and +(2.5,1.0) ..
    (tracking.east);
\draw[arrow] (coverage) -- (thm);
\draw[arrow] (tracking) -- (thm);
\draw[arrow] (selfsign) -- (thm);
\draw[arrow] (specialmax)
    .. controls +(0,-2.0) and +(2.0,1.0) ..
    (thm.east);

\end{tikzpicture}%
}
\caption{
Dependency graph for the proof of
Theorem~\ref{thm:neurons-specialize}.
Solid arrows indicate the main logical dependencies.
The dashed arrow records that
Lemma~\ref{lem:cubic-correlation-identity}
is also used to establish smoothness of the teacher-only
vector field in the tracking argument.
}
\label{fig:theorem1-dependencies}
\end{figure}
}
\clearpage

\subsubsection{The Population-Correlation Landscape}

In this subsection, we study the critical points of the optimization problem
\begin{align}\label{eq:directional-alignment-optimization-problem}
\max_{\omega \in \mathbb{S}^{d-1}} \Psi_{\zeta}(\omega)\,.
\end{align}
The use of homogeneity to characterize parameter directions
is part of a broader literature on the implicit bias of gradient methods,
beginning with homogeneous linear predictors and extending to homogeneous
neural networks
\citep{soudry2018implicit,ji2019implicit,lyu2020gradient,ji2020directional}.
These results motivate the directional viewpoint used here, but do not
directly yield the squared-loss, small-initialization approximation in our
setting; we establish the required approximation directly in
Lemma~\ref{lem:early-time-tracking}. (In the subsequent section of the appendix, we will show how the sign $\zeta$ is determined for each neuron at very early times of training.) We show that the critical points of \eqref{eq:directional-alignment-optimization-problem} are specialized to clusters of the data distribution.

First, we provide an explicit formula for the signed population-correlation objective in terms of a weight's correlations with the cluster means and predictive directions. This lemma is stated for directions $\omega$ on the unit sphere, but is later applied to the directions of small weights using the homogeneity of ReLU networks.

\begin{lemma}[Cubic correlation identity]
\label{lem:cubic-correlation-identity}
For every $\omega\in\mathbb S^{d-1}$ and $\zeta\in\{\pm1\}$,
\[
    \Psi_\zeta(\omega)
    =
    -\frac{\zeta}{K\sqrt6}
    \sum_{c=1}^K
    b_c(\omega)\varphi\bigl(b_c(\omega)\bigr)\rho_c(\omega)^3.
\]
\end{lemma}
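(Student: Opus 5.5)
Condition on the cluster: $\Phi(\omega)=\frac1K\sum_c \mathbb E[h_3(v_c^\top z)\,\phi(b_c+\omega^\top z)]$, since $\omega^\top x=\omega^\top\mu_c+\omega^\top z=b_c(\omega)+\omega^\top z$. So the whole computation reduces to a single cluster. Fix $c$. The pair $(G,H):=(v_c^\top z,\omega^\top z)$ is a centered bivariate Gaussian. Both coordinates have unit variance, because $\|v_c\|=\|\omega\|=1$, and their correlation is $\rho_c(\omega)$.

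The first step is to write $G=\rho_c H+\sqrt{1-\rho_c^2}\,Z'$ with $Z'$ independent of $H$. Then I use the Hermite addition (Mehler-type) property: conditioning on $H$,
\[
\mathbb E[h_3(G)\mid H]=\rho_c^3\,h_3(H).
\]
This follows because $\mathbb E[\mathrm{He}_k(\rho H+\sqrt{1-\rho^2}Z')\mid H]=\rho^k\mathrm{He}_k(H)$, which one can check directly by expanding the cube for $k=3$. Hence the cluster term equals
\[
\rho_c^3\,\mathbb E\bigl[h_3(H)\,\phi(b_c+H)\bigr],\qquad H\sim\mathcal N(0,1).
\]

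The second step is the one-dimensional integral $\eta_3(b):=\mathbb E[h_3(H)(b+H)_+]$. I would use Gaussian integration by parts (Stein's lemma) in its Hermite form,
\[
\mathbb E[\mathrm{He}_k(H)f(H)]=\mathbb E[f^{(k)}(H)],
\]
taken in the distributional sense. With $f(t)=(b+t)_+$ we get $f'=\mathbf 1\{t>-b\}$, $f''=\delta_{-b}$, and $f'''=\delta'_{-b}$. Therefore
\[
\mathbb E[\mathrm{He}_3(H)(b+H)_+]=\int \delta'_{-b}(t)\varphi(t)\,dt=-\varphi'(-b).
\]
Since $\varphi'(t)=-t\varphi(t)$, this equals $-b\varphi(b)$. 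Dividing by $\sqrt6$ gives $\eta_3(b)=-b\varphi(b)/\sqrt6$. This is the only place a sign error is plausible, so I would double-check it rigorously. One way is to apply Stein's lemma only twice, which gives $\mathbb E[\mathrm{He}_3 f]=\mathbb E[\mathrm{He}_1(H)\,\mathbf 1\{H>-b\}]=\int_{-b}^\infty t\varphi(t)\,dt=\varphi(-b)$. That is not yet the answer, so I need to track the steps carefully. Doing it properly, $\mathbb E[\mathrm{He}_3 f]=\mathbb E[\mathrm{He}_2(H)f'(H)]=\mathbb E[\mathrm{He}_1(H)f''(H)]=(-b)\varphi(-b)=-b\varphi(b)$. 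This uses the recursion $\mathbb E[\mathrm{He}_k g]=\mathbb E[\mathrm{He}_{k-1}g']$, applied once with $g=f'$, which is a step function whose derivative is a point mass. The second route confirms the first.

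Finally, I average over $c$ and multiply by $\zeta$, which yields
\[
\Psi_\zeta(\omega)=-\frac{\zeta}{K\sqrt6}\sum_c b_c\varphi(b_c)\rho_c^3.
\]
The main obstacle is justifying integration by parts against the nonsmooth ReLU. To handle it, I would either mollify $\phi$ and pass to the limit by dominated convergence, or do the integration by parts only while the integrand is still a continuous function times a polynomial, and then handle the indicator by a direct integral.
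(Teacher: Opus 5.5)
Your proposal is correct and follows essentially the same route as the paper. You condition on the cluster and write $U=\rho V+\sqrt{1-\rho^2}Z$, so expanding the cube gives $\mathbb E[h_3(U)\mid V]=\rho^3h_3(V)$. You then evaluate $\mathbb E[h_3(G)(b+G)_+]$ by integration by parts. Your step $\mathbb E[\mathrm{He}_3 f]=\mathbb E[\mathrm{He}_2 f']$ is exactly the paper's first integration by parts. The paper finishes with the direct integral $\int_{-b}^\infty (x^2-1)\varphi(x)\,dx=-b\varphi(b)$, using $\frac{d}{dx}(x\varphi)=(1-x^2)\varphi$, rather than a point mass; this is your second suggested way of avoiding distributions. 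The discarded ``apply Stein twice'' line was a miscount, since it computes $\mathbb E[\mathrm{He}_1 f']=\mathbb E[\mathrm{He}_2 f]$, but your corrected chain is right.
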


\begin{proof}
Fix $\omega\in\mathbb S^{d-1}$. Conditional on $c$,
\[
    \omega^\top x
    =b_c(\omega)
      +\omega^\top z,
    \qquad
    y=h_3\bigl(v_c^\top z\bigr).
\]
Hence the contribution of cluster $c$ to $\Phi(\omega) = \mathbb E\left[y\,
      \phi(\omega^\top x)\right]$ is
\[\mathbb E\left[
        h_3\bigl(v_c^\top z\bigr)
        \bigl(b_c(\omega)
          +\omega^\top z\bigr)_+
    \right].
\]

Set
\[
    U=v_c^\top z,
    \qquad
    V=\omega^\top z,
    \qquad
    \rho=\rho_c(\omega)
    =\langle v_c,\omega\rangle.
\]
Since $z\sim\mathcal N(0,I_d)$ and
$v_c,\omega$ are unit vectors,
$U$ and $V$ are standard Gaussian, and
\[
    \mathbb E[UV]
    =
    v_c^\top
      \mathbb E[zz^\top]\omega
    =
    \langle v_c,\omega\rangle
    =
    \rho.
\]
Thus $(U,V)$ is a jointly standard Gaussian pair with correlation $\rho$.

For $|\rho|<1$, define
\[
    Z:=\frac{U-\rho V}{\sqrt{1-\rho^2}}.
\]
Because $(U,V)$ is jointly Gaussian, $(Z,V)$ is also jointly Gaussian.
Moreover,
\[
    \mathbb E[Z]=0,
    \qquad
    \mathbb E[Z^2]=1,
\]
and
\[
    \mathbb E[ZV]
    =
    \frac{\mathbb E[UV]-\rho\mathbb E[V^2]}
         {\sqrt{1-\rho^2}}
    =0.
\]
Jointly Gaussian random variables with zero covariance are independent, so
$Z\sim\mathcal N(0,1)$ is independent of $V$. Therefore
\[
    U=\rho V+\sqrt{1-\rho^2}\,Z.
\]
When $|\rho|=1$, the same representation holds with the second term equal
to zero.

Conditioning on $V$ and using the independence of $Z$ gives
\[
    \mathbb E[U\mid V]=\rho V
\]
and
\begin{align*}
    \mathbb E[U^3\mid V]
    &=
    \mathbb E\left[
        \left(\rho V+\sqrt{1-\rho^2}\,Z\right)^3
        \,\middle|\,V
    \right]\\
    &=
    \rho^3V^3+3\rho(1-\rho^2)V,
\end{align*}
since $\mathbb E[Z]=\mathbb E[Z^3]=0$ and $\mathbb E[Z^2]=1$.
Recalling that
\[
    h_3(x)=\frac{x^3-3x}{\sqrt6},
\]
we obtain
\begin{align*}
    \mathbb E[h_3(U)\mid V]
    &=
    \frac{1}{\sqrt6}
    \left(
        \mathbb E[U^3\mid V]
        -3\mathbb E[U\mid V]
    \right)\\
    &=
    \rho^3 h_3(V).
\end{align*}

Since $(b_c(\omega)+V)_+$ depends only on $V$, the tower property now gives
\begin{align*}
    &\mathbb E\left[
        h_3(U)\bigl(b_c(\omega)+V\bigr)_+
    \right]\\
    &\qquad=
    \mathbb E\left[
        \bigl(b_c(\omega)+V\bigr)_+
        \mathbb E[h_3(U)\mid V]
    \right]\\
    &\qquad=
    \rho_c(\omega)^3
    \mathbb E_{G\sim\mathcal N(0,1)}
    \left[
        h_3(G)\bigl(b_c(\omega)+G\bigr)_+
    \right].
\end{align*}

It remains to compute the one-dimensional expectation. For any $b\in\mathbb R$,
\[
    \mathbb E\left[h_3(G)(b+G)_+\right]
    =
    \frac1{\sqrt6}
    \int_{-b}^{\infty}
        (x^3-3x)(b+x)\varphi(x)\,dx.
\]
Using
\[
    \frac{d}{dx}
    \left[(x^2-1)\varphi(x)\right]
    =
    -(x^3-3x)\varphi(x),
\]
integration by parts yields
\[
    \int_{-b}^{\infty}
        (x^3-3x)(b+x)\varphi(x)\,dx
    =
    \int_{-b}^{\infty}
        (x^2-1)\varphi(x)\,dx,
\]
where the boundary term vanishes because $b+x=0$ at $x=-b$ and
$\varphi(x)\to0$ as $x\to\infty$. Since
\[
    \frac{d}{dx}\bigl(x\varphi(x)\bigr)
    =
    (1-x^2)\varphi(x),
\]
we further obtain
\[
    \int_{-b}^{\infty}
        (x^2-1)\varphi(x)\,dx
    =
    -b\varphi(b).
\]
Therefore
\[
    \mathbb E\left[h_3(G)(b+G)_+\right]
    =
    -\frac{b\varphi(b)}{\sqrt6}.
\]

Applying this with $b=b_c(\omega)$, the contribution of cluster $c$ to
$\Phi(\omega)$ is
\[
    -\frac{1}{\sqrt6}
    b_c(\omega)\varphi\bigl(b_c(\omega)\bigr)\rho_c(\omega)^3.
\]
Since $c$ is uniform on $[K]$,
\[
    \Phi(\omega)
    =
    -\frac1{K\sqrt6}
    \sum_{c=1}^K
    b_c(\omega)\varphi\bigl(b_c(\omega)\bigr)\rho_c(\omega)^3.
\]
Finally, $\Psi_\zeta(\omega)=\zeta\Phi(\omega)$, and hence
\[
    \Psi_\zeta(\omega)
    =
    -\frac{\zeta}{K\sqrt6}
    \sum_{c=1}^K
    b_c(\omega)\varphi\bigl(b_c(\omega)\bigr)\rho_c(\omega)^3,
\]
as claimed.
\end{proof}

Because the $\mu_c/R$ and $v_c$ vectors
form an orthonormal basis, the
sphere constraint is
\[
    \sum_{c=1}^K\frac{b_c^2}{R^2}
    +
    \sum_{c=1}^K\rho_c^2
    =
    1.
\]
Next, we consider the optimization problem of signed population-correlation maximization between the value of the neuron and the ground truth:
$\max_{\omega\in\mathbb S^{d-1}}\Psi_\zeta(\omega)$. This is written and analyzed below under the linear change of
coordinates
\[
    b_c=\langle\omega,\mu_c\rangle,
    \qquad
    \rho_c=\langle\omega,v_c\rangle\,,
\]
where we show that the neuron maximizes correlation with the response by specializing to one cluster. Here, we use the fact that the link function is the cubic Hermite polynomial. (A similar argument would hold for all higher-degree Hermite link functions as well.)

\begin{lemma}[Positive local maxima use one cluster]
\label{lem:one-cluster-maxima}
Consider the constrained maximization problem
\[
    \max_{b_1,\ldots,b_K,\rho_1,\ldots,\rho_K}
    F(b_1,\ldots,b_K,\rho_1,\ldots,\rho_K),
\]
where
\[
    F(b_1,\ldots,b_K,\rho_1,\ldots,\rho_K)
    :=
    -\frac{\zeta}{K\sqrt6}
    \sum_{c=1}^K
    b_c\varphi(b_c)\rho_c^3,
\]
subject to
\[
    \sum_{c=1}^K\frac{b_c^2}{R^2}
    +
    \sum_{c=1}^K\rho_c^2
    =
    1.
\]

Every constrained local maximum of $F$ with positive objective value has
exactly one nonzero predictive coordinate $\rho_c$. Moreover, if
$\rho_c=0$, then the corresponding routing coordinate $b_c$ also vanishes.
\end{lemma}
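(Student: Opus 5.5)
The plan is to treat the problem as a smooth equality-constrained program and use first- and second-order necessary conditions at a constrained local maximum with positive value. Write
\[
    f(b):=-\frac{\zeta}{K\sqrt6}\,b\varphi(b),
    \qquad
    F=\sum_{c=1}^K f(b_c)\rho_c^3,
    \qquad
    H:=\sum_{c=1}^K\frac{b_c^2}{R^2}+\sum_{c=1}^K\rho_c^2 .
\]
Both $F$ and the constraint function $H$ are real-analytic. On $\{H=1\}$ we have $\nabla H\neq0$, so the constraint is regular. Hence every local maximum $(b,\rho)$ admits a multiplier $\lambda$ with
\[
    f'(b_c)\rho_c^3=\frac{2\lambda b_c}{R^2},
    \qquad
    3f(b_c)\rho_c^2=2\lambda\rho_c,
    \qquad c\in[K].
\]
Moreover, the Hessian of the Lagrangian $F-\lambda H$ must be negative semidefinite on the tangent space $\{\sum_c b_c\delta b_c/R^2+\sum_c\rho_c\delta\rho_c=0\}$.

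\textbf{Step 1: $\lambda>0$.} Multiply the $\rho_c$-equation by $\rho_c$ and sum over $c$. This gives $3F=2\lambda\sum_c\rho_c^2$. Since $F>0$, some $\rho_c$ is nonzero, so $\sum_c\rho_c^2>0$ and therefore $\lambda>0$.

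\textbf{Step 2: vanishing routing coordinates.} If $\rho_c=0$, the $b_c$-equation reduces to $0=2\lambda b_c/R^2$. Because $\lambda>0$, this forces $b_c=0$, which is the second claim.

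\textbf{Step 3: each active cluster contributes positively.} For any $c$ with $\rho_c\neq0$, the $\rho_c$-equation gives $f(b_c)\rho_c=2\lambda/3$. Consequently the cluster term is $f(b_c)\rho_c^3=\tfrac{2\lambda}{3}\rho_c^2>0$. This step does not yet rule out several active clusters; it only identifies the curvature we need next.

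\textbf{Step 4: at most one active cluster (main step).} Suppose two indices $c\neq c'$ have $\rho_c,\rho_{c'}\neq0$. Take the tangent vector that holds all $b$ fixed and perturbs only $\rho_c,\rho_{c'}$ by $(\delta,\delta')=(\rho_{c'},-\rho_c)\neq0$. It satisfies $\rho_c\delta+\rho_{c'}\delta'=0$, so it lies in the tangent space.

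The Lagrangian Hessian in the $\rho$-block is diagonal, with entries $6f(b_c)\rho_c-2\lambda$. At active indices, Step 3 gives $6f(b_c)\rho_c=4\lambda$, so each entry equals $2\lambda$. The mixed $b$–$\rho$ entries do not enter because $\delta b=0$. The second variation along this vector is therefore $2\lambda(\delta^2+\delta'^2)>0$, which contradicts the second-order necessary condition for a maximum.

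Geometrically, this is the familiar fact that $A|\cos\theta|^3+B|\sin\theta|^3$ with $A,B>0$ has only minima at interior angles. Combining Steps 1 and 4 yields exactly one nonzero $\rho_c$.

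The main obstacle is making sure Step 4 really invokes the \emph{Lagrangian} second-order condition, not the plain Hessian of $F$, and that the chosen direction is tangent. Both are handled above: the curvature correction $-2\lambda$ is exactly what turns the entry $4\lambda$ into a positive $2\lambda$, rather than cancelling it. Only regularity of the sphere constraint and smoothness of $F$ are needed, and both are immediate.
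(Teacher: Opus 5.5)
Your proposal is correct and follows essentially the same route as the paper. Both use the Lagrange conditions, get $\lambda>0$ from $3F=2\lambda\sum_c\rho_c^2$, get $b_c=0$ on inactive clusters, and find positive curvature along a perturbation that freezes all $b$'s and trades mass between two active predictive coordinates; the only difference is packaging, since the paper differentiates $F$ twice along the explicit sign-preserving feasible curve $\rho_c(t)^2=\rho_c^2+t$, $\rho_{c'}(t)^2=\rho_{c'}^2-t$ to obtain $F''(0)=\frac{\lambda}{2}(\rho_c^{-2}+\rho_{c'}^{-2})>0$, whereas you invoke the Lagrangian-Hessian second-order necessary condition, which gives the same value because that curve's velocity is $(\rho_{c'},-\rho_c)/(2\rho_c\rho_{c'})$.
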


\begin{proof}
At a constrained local maximum, first-order stationarity of the Lagrangian
\[
    F(b_1,\ldots,b_K,\rho_1,\ldots,\rho_K)
    -
    \lambda\left(
        \sum_{c=1}^K\frac{b_c^2}{R^2}
        +
        \sum_{c=1}^K\rho_c^2
        -1
    \right)
\]
with respect to $b_c$ and $\rho_c$ gives, for every $c\in[K]$,
\begin{align}
    -\frac{\zeta}{K\sqrt6}
    (1-b_c^2)\varphi(b_c)\rho_c^3
    &=
    \frac{2\lambda}{R^2}b_c,
    \label{eq:kkt-b}\tag{B.1}\\
    -\frac{3\zeta}{K\sqrt6}
    b_c\varphi(b_c)\rho_c^2
    &=
    2\lambda\rho_c.
    \label{eq:kkt-rho}\tag{B.2}
\end{align}
Here we used
\[
    \frac{d}{db}\bigl(b\varphi(b)\bigr)
    =
    (1-b^2)\varphi(b).
\]

Multiplying~\eqref{eq:kkt-rho} by $\rho_c$ and summing over $c$ gives
\[
    3F(b_1,\ldots,b_K,\rho_1,\ldots,\rho_K)
    =
    2\lambda\sum_{c=1}^K\rho_c^2.
\]
Since the objective value is positive, at least one predictive coordinate
is nonzero. Hence both the left-hand side and
$\sum_c\rho_c^2$ are positive, so
\[
    \lambda>0.
\]

If $\rho_c=0$, then~\eqref{eq:kkt-b} reduces to
\[
    0=\frac{2\lambda}{R^2}b_c.
\]
Since $\lambda>0$, this implies $b_c=0$.

It remains to rule out two nonzero predictive coordinates. Suppose that
$\rho_c\neq0$ and $\rho_r\neq0$ for two distinct clusters $c\neq r$.
Keep all routing coordinates and all other predictive coordinates fixed,
and consider
\[
    \rho_c(t)
    =
    \operatorname{sign}(\rho_c)
    \sqrt{\rho_c^2+t},
    \qquad
    \rho_r(t)
    =
    \operatorname{sign}(\rho_r)
    \sqrt{\rho_r^2-t}.
\]
For sufficiently small $|t|$, this perturbation is well defined, preserves
the signs of the two coordinates, and satisfies
\[
    \rho_c(t)^2+\rho_r(t)^2
    =
    \rho_c^2+\rho_r^2.
\]
Thus it preserves the constraint exactly.

Differentiating $F$ along this feasible curve at $t=0$ gives
\[
    F'(0)
    =
    -\frac{3\zeta}{2K\sqrt6}
    \left(
        b_c\varphi(b_c)\rho_c
        -
        b_r\varphi(b_r)\rho_r
    \right).
\]
Since $\rho_c,\rho_r\neq0$, dividing~\eqref{eq:kkt-rho} by the corresponding
predictive coordinate gives
\[
    -\frac{3\zeta}{K\sqrt6}
    b_c\varphi(b_c)\rho_c
    =
    2\lambda,
    \qquad
    -\frac{3\zeta}{K\sqrt6}
    b_r\varphi(b_r)\rho_r
    =
    2\lambda.
\]
Therefore
\[
    F'(0)=0.
\]

The second derivative along the same feasible curve is
\[
    F''(0)
    =
    -\frac{3\zeta}{4K\sqrt6}
    \left(
        \frac{b_c\varphi(b_c)}{\rho_c}
        +
        \frac{b_r\varphi(b_r)}{\rho_r}
    \right).
\]
Using the same KKT identities gives
\[
    F''(0)
    =
    \frac{\lambda}{2}
    \left(
        \frac{1}{\rho_c^2}
        +
        \frac{1}{\rho_r^2}
    \right)
    >0.
\]
Thus a critical point with two nonzero predictive coordinates has a feasible
direction of positive curvature and cannot be a constrained local maximum.

Hence at most one predictive coordinate is nonzero. Since the objective
value is positive, at least one is nonzero, so exactly one is active. As
shown above, every routing coordinate corresponding to an inactive predictive
coordinate also vanishes.
\end{proof}

Returning to our objective $\max_{\omega \in \mathbb{S}^{d-1}} \Psi_{\zeta}(\omega)$, let $c$ denote the unique active cluster and let
\[
    \tau=\operatorname{sign}(\rho_c)\in\{\pm1\}.
\]
Then
\[
    b_j=\rho_j=0
    \qquad
    \text{for every }j\neq c,
\]
and the constraint becomes
\[
    \frac{b_c^2}{R^2}+\rho_c^2=1.
\]
Hence
\[
    \rho_c
    =
    \tau\sqrt{1-\frac{b_c^2}{R^2}}.
\]
Substituting this into $F$, and writing $b=b_c$, reduces the problem to the
one-dimensional objective
\[
    -\frac{\tau\zeta}{K\sqrt6}
    b\varphi(b)
    \left(1-\frac{b^2}{R^2}\right)^{3/2},
    \qquad
    |b|<R.
\]

\begin{lemma}[Specialized maxima]
\label{lem:specialized-maxima}
Fix $c\in[K]$ and $\zeta,\tau\in\{\pm1\}$. On the one-cluster feasible branch,
write
\[
    f(b)
    :=
    -\frac{\tau\zeta}{K\sqrt6}
    b\varphi(b)
    \left(1-\frac{b^2}{R^2}\right)^{3/2},
    \qquad |b|<R,
\]
and let
\[
    I:=\{b\in(-R,R):f(b)>0\}.
\]
Then $I$ is a connected open interval. For all sufficiently large $R$, $f$
has a unique critical point $b^\star\in I$, which is a strict local maximum
and satisfies
\[
    b^\star=-\tau\zeta+O(R^{-2}).
\]
The corresponding direction
\[
    \omega^\star
    =
    \tau\sqrt{1-\frac{(b^\star)^2}{R^2}}\,v_c
    +
    \frac{b^\star}{R^2}\mu_c
\]
is a strict local maximum of $\Psi_\zeta$ and obeys
\[
    \omega^\star
    =
    \tau v_c
    -
    \frac{\tau\zeta}{R^2}\mu_c
    +
    O(R^{-2}).
\]
\end{lemma}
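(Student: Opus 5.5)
The plan is to analyze the scalar function $f$ directly, then lift the one-dimensional conclusion back to the sphere. Write $s:=\tau\zeta\in\{\pm1\}$, so that
\[
    f(b)=-\frac{s}{K\sqrt6}\,g(b),
    \qquad
    g(b):=b\varphi(b)\Bigl(1-\frac{b^2}{R^2}\Bigr)^{3/2}.
\]
On $(-R,R)$ the factor $\varphi(b)(1-b^2/R^2)^{3/2}$ is strictly positive. Hence $f(b)>0$ exactly when $sb<0$, which gives $I=(-R,0)$ if $s=1$ and $I=(0,R)$ if $s=-1$. In either case $I$ is a connected open interval. Because $g$ is odd, the case $s=-1$ is the mirror image of $s=1$, so it suffices to treat $s=1$, i.e.\ to maximize $g$ over $(0,R)$ after the substitution $b\mapsto -b$.

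\textbf{Critical point.} Take the logarithmic derivative of $g$ on $(0,R)$:
\[
    \frac{g'(b)}{g(b)}=\frac1b-b-\frac{3b}{R^2-b^2}.
\]
Its zeros solve $h(b):=1-b^2-\dfrac{3b^2}{R^2-b^2}=0$. On $(0,R)$ the function $h$ is strictly decreasing, with $h(0^+)=1$ and $h\to-\infty$ as $b\to R^-$. So there is exactly one zero $b_0$, and $g$ increases before it and decreases after it; this makes $b_0$ a strict maximum, and $g'(b_0)=0$ with $g''(b_0)<0$ follows by differentiating $g'=g\cdot(\log g)'$ at $b_0$. This settles uniqueness for every $R>1$. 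For the asymptotics, rearrange $h(b)=0$ as
\[
    b^2=1-\frac{3b^2}{R^2-b^2}.
\]
Since $b_0<1$, the right-hand side is $1-O(R^{-2})$, giving $b_0=1+O(R^{-2})$. Undoing the reflection, $b^\star=-s+O(R^{-2})=-\tau\zeta+O(R^{-2})$.

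\textbf{Lifting to the sphere.} Set $\omega^\star=\rho_cv_c+(b^\star/R^2)\mu_c$ with $\rho_c=\tau\sqrt{1-(b^\star)^2/R^2}$. Then $\langle\omega^\star,\mu_c\rangle=b^\star$, and the norm is one by orthonormality of $\{\mu_c/R\}\cup\{v_c\}$. Expanding, $\rho_c=\tau+O(R^{-2})$ and $b^\star/R^2=-\tau\zeta/R^2+O(R^{-4})$, which yields
\[
    \omega^\star=\tau v_c-\frac{\tau\zeta}{R^2}\mu_c+O(R^{-2}).
\]
Since $\|\mu_c\|=R$, this also gives the $O(R^{-1})$ deviation from $\tau v_c$ that Theorem~\ref{thm:neurons-specialize} uses.

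\textbf{Main obstacle: strict local maximality of $\Psi_\zeta$ in all $2K-1$ tangent directions.} The one-dimensional analysis only controls the $(b_c,\rho_c)$ circle, so the remaining directions need a separate argument. Parametrize a neighborhood on the sphere by small perturbations $(b_j,\rho_j)_{j\neq c}$ and $b_c$, with $\rho_c$ then determined by the constraint. Evaluated at $\omega^\star$, the objective splits as
\[
    f\bigl(b_c;\,\text{shrunk radius}\bigr)
    \;-\;\frac{\zeta}{K\sqrt6}\sum_{j\neq c}b_j\varphi(b_j)\rho_j^3 .
\]
The off-cluster sum is $O(|b_j|\,|\rho_j|^3)$, which is of quartic order in the perturbation. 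The constraint shrinks $\rho_c^2$ by $\sum_{j\ne c}(\rho_j^2+b_j^2/R^2)$. This lowers the active term by $\tfrac32 F^\star\sum_{j\neq c}(\rho_j^2+b_j^2/R^2)/\rho_c^2$, a strictly negative-definite quadratic form because $F^\star>0$. Thus the Hessian is negative definite in the off-cluster directions. In the $b_c$ direction the curvature is $f''(b^\star)<0$, and there are no cross terms at second order. So the Hessian of $\Psi_\zeta$ restricted to the tangent space at $\omega^\star$ is negative definite, and $\omega^\star$ is a strict local maximum. The care needed is in checking that the quartic off-cluster terms and the coupling through the constraint do not produce any second-order cross terms.
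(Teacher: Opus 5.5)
Your proposal is correct. The main difference from the paper is how you show that $\omega^\star$ is a strict local maximum of $\Psi_\zeta$ on the whole sphere.

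The paper argues globally. By compactness $\Psi_\zeta$ attains a positive maximum, and Lemma~\ref{lem:one-cluster-maxima} forces every global maximizer onto a one-cluster branch. The substitution $t=-\tau\zeta b$ shows that all $2K$ branches share the same maximal value, attained only at the corresponding $\omega^\star$. These finitely many points are therefore exactly the global maximizers, hence isolated, hence strict local maxima.

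You instead compute the Hessian in the graph chart $(b_c,(b_j,\rho_j)_{j\neq c})$. The cross-term issue you flag does not arise. The off-cluster variables enter the active term only through $Q=\sum_{j\neq c}(\rho_j^2+b_j^2/R^2)$, whose gradient vanishes at $\omega^\star$, and the off-cluster sum is quartic. So the Hessian is block diagonal, with entry $f''(b^\star)<0$ and, on each pair $(b_j,\rho_j)$, the block $-\tfrac{3F^\star}{(\rho_c^\star)^2}\,\mathrm{diag}(R^{-2},1)$, where $F^\star=f(b^\star)>0$.

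Each route proves something the other does not. Yours is self-contained, since it does not use Lemma~\ref{lem:one-cluster-maxima}. It also shows the maximum is nondegenerate, i.e.\ a hyperbolic sink of the spherical gradient flow. That is the kind of robustness the paper's closing remark on perturbing the cubic link would need. The paper's route shows that the $\omega^\star$ are precisely the global maximizers of $\Psi_\zeta$, but it gives only isolation, not nondegeneracy.

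In the one-dimensional step you also depart from the paper. You replace the explicit root of $t^4-(R^2+4)t^2+R^2=0$ by monotonicity of the numerator of the logarithmic derivative. This gives uniqueness for every $R$ and $f''(b^\star)<0$ directly. Your fixed-point rearrangement then recovers the same $O(R^{-2})$ asymptotics.
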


\begin{proof}
For every $b\in(-R,R)$,
\[
    \frac{1}{K\sqrt6}>0,
    \qquad
    \varphi(b)>0,
    \qquad
    \left(1-\frac{b^2}{R^2}\right)^{3/2}>0.
\]
Hence the sign of $f(b)$ is determined entirely by $-\tau\zeta b$:
\[
    f(b)>0
    \quad\Longleftrightarrow\quad
    -\tau\zeta b>0.
\]
Since $\tau\zeta\in\{\pm1\}$, it follows that
\[
    I=
    \begin{cases}
        (-R,0), & \tau\zeta=1,\\
        (0,R), & \tau\zeta=-1.
    \end{cases}
\]
Thus $I$ is connected.

For $b\in I$, set
\[
    t=-\tau\zeta b.
\]
The preceding characterization of $I$ shows that this is a bijective linear
change of variable from $I$ onto $(0,R)$. Since $\varphi$ is even and
$(\tau\zeta)^2=1$, we obtain
\[
    f(b)
    =
    \frac{1}{K\sqrt6}
    t\varphi(t)
    \left(1-\frac{t^2}{R^2}\right)^{3/2}.
\]
Therefore the critical points of $f$ on $I$ correspond exactly to the
critical points on $(0,R)$ of
\[
    t\longmapsto
    t\varphi(t)
    \left(1-\frac{t^2}{R^2}\right)^{3/2}.
\]

This function is strictly positive on $(0,R)$, so its critical points can be
found from its logarithmic derivative:
\[
    \frac{d}{dt}
    \log\left[
        t\varphi(t)
        \left(1-\frac{t^2}{R^2}\right)^{3/2}
    \right]
    =
    \frac1t-t-\frac{3t}{R^2-t^2}.
\]
Thus a critical point satisfies
\[
    \frac1t-t-\frac{3t}{R^2-t^2}=0.
\]
Multiplying by $t(R^2-t^2)$ gives
\[
    t^4-(R^2+4)t^2+R^2=0.
\]
Solving this quadratic equation in $t^2$ gives
\[
    t^2
    =
    \frac{R^2+4\pm\sqrt{R^4+4R^2+16}}{2}.
\]
The root with the plus sign is larger than $R^2$, whereas the root with the
minus sign lies in $(0,R^2)$. Hence there is exactly one critical point in
$(0,R)$, and it satisfies
\[
    t^2
    =
    \frac{
        R^2+4-\sqrt{R^4+4R^2+16}
    }{2}
    =
    1+O(R^{-2}).
\]
Since $t>0$,
\[
    t=1+O(R^{-2}).
\]
Returning to $b=-\tau\zeta t$ gives
\[
    b^\star
    =
    -\tau\zeta+O(R^{-2}).
\]

The function
\[
    t\varphi(t)
    \left(1-\frac{t^2}{R^2}\right)^{3/2}
\]
is positive on $(0,R)$ and tends to zero as $t\to0$ or $t\to R$.
Since it has exactly one critical point in $(0,R)$, this point is its
unique strict maximum. Hence $b^\star$ is the unique maximizer of $f$ on
$I$.

It remains to show that the corresponding point is a local maximum of the
full constrained objective $F$. Since the feasible set is compact, $F$
attains a global maximum. The point constructed above has positive objective
value, so the global maximum is positive. By
Lemma~\ref{lem:one-cluster-maxima}, every global maximizer must have exactly
one nonzero predictive coordinate and therefore lies on one of the
one-cluster feasible branches.

For every choice of the active cluster $c$ and orientation $\tau$, the
change of variable
\[
    t=-\tau\zeta b
\]
reduces the positive part of the corresponding one-cluster objective to
\[
    \frac{1}{K\sqrt6}
    t\varphi(t)
    \left(1-\frac{t^2}{R^2}\right)^{3/2}.
\]
Thus all one-cluster branches have the same maximal value, attained uniquely
at the point identified above. Consequently, every corresponding
$\omega^\star$ is a global maximizer of $\Psi_\zeta$.

There are only finitely many such maximizers, one for each
$c\in[K]$ and $\tau\in\{\pm1\}$. Hence each is isolated, and therefore each
$\omega^\star$ is a strict local maximum of $\Psi_\zeta$.

\end{proof}

\subsubsection{Random Initialization, Self-Selected Signs, and Cluster Coverage}

In the previous section, we considered the signed population-correlation objective $\Psi_{\zeta}$ for each neuron. In this section, we note that the sign $\zeta$ for each neuron is learned at very early times of training and does not need to be fixed. Consider
the teacher-only joint dynamics
\begin{equation}
    \dot u=\Phi({\omega}),
    \qquad
    {\dot\omega}
    =\tanh(u)\nabla_{\mathbb S^{d-1}}\Phi({\omega}).
\end{equation}
These equations arise from the two-layer parameterization when the
second-layer weight is initially zero. Crucially, they neglect interactions between neurons, which we will later show is a fine approximation because the network is small at initialization.

\begin{lemma}[Positive signed-correlation flows specialize]
\label{lem:positive-signed-flow}
Let ${\omega^0}$ have an absolutely continuous distribution on
$\mathbb S^{d-1}$, fix $\zeta\in\{\pm1\}$, and assume
$\Psi_\zeta({\omega^0})>0$. For all sufficiently large $R$, spherical gradient
ascent
\[
    {\dot\omega}
    =\nabla_{\mathbb S^{d-1}}\Psi_\zeta({\omega}),
    \qquad
    {\omega(0)}={\omega^0},
\]
converges almost surely to one of the specialized maxima in
Lemma~\ref{lem:specialized-maxima}.
\end{lemma}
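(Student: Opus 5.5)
The plan is to combine three ingredients: convergence of analytic gradient flows (Łojasiewicz), the classification of positive critical points from Lemmas~\ref{lem:one-cluster-maxima} and~\ref{lem:specialized-maxima}, and a strict-saddle avoidance (center-stable manifold) argument for the mixed critical points.

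\textbf{Step 1 (convergence).} By Lemma~\ref{lem:cubic-correlation-identity}, $\Psi_\zeta$ is the restriction to $\mathbb S^{d-1}$ of a real-analytic function, namely a polynomial in $\omega$ times Gaussian factors $\varphi(b_c(\omega))$. The sphere is a compact analytic manifold. The Łojasiewicz gradient inequality therefore applies, so every trajectory of $\dot\omega=\nabla_{\mathbb S^{d-1}}\Psi_\zeta(\omega)$ has finite length and converges to a single critical point $\omega^\infty$. Along the flow, $\Psi_\zeta$ is nondecreasing, so $\Psi_\zeta(\omega^\infty)\ge\Psi_\zeta(\omega^0)>0$. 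The limit is thus a critical point with positive value.

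\textbf{Step 2 (classifying positive critical points).} Using the coordinates $(b_c,\rho_c)$, I would redo the first part of the proof of Lemma~\ref{lem:one-cluster-maxima}, which uses only stationarity and not maximality. It gives $\lambda>0$, and $b_c=0$ whenever $\rho_c=0$.

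\begin{itemize}
\item If exactly one $\rho_c\neq0$, the point lies on a one-cluster branch and is critical for the one-dimensional $f$ of Lemma~\ref{lem:specialized-maxima}. The point $b_c=0$ gives value $0$, so it is excluded. By that lemma, for $R$ large the only remaining positive critical point is $b^\star$, which is exactly one of the specialized maxima $\omega^\star$.
\item If at least two $\rho_c,\rho_r$ are nonzero, the computation in Lemma~\ref{lem:one-cluster-maxima} yields a feasible curve $\gamma(t)$ through $\omega^\infty$ with $(\Psi_\zeta\circ\gamma)''(0)>0$. At a critical point the second derivative along any smooth curve on the sphere equals the Riemannian Hessian quadratic form evaluated at $\gamma'(0)$. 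Hence the Riemannian Hessian has a strictly positive eigenvalue, i.e.\ the point is a strict saddle for ascent.
\end{itemize}

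\textbf{Step 3 (avoiding strict saddles).} Let $\mathcal S$ be the set of positive mixed critical points. This set may be a continuum, not a finite set. I would apply the center-stable manifold theorem to the time-one flow map $g$, which is an analytic diffeomorphism of the sphere. Its differential at a point of $\mathcal S$ is $\exp(\mathrm{Hess})$, which has an eigenvalue $>1$.

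\begin{itemize}
\item Around each $p\in\mathcal S$ there is a neighborhood $B_p$ and a local center-stable manifold $W_p$ of positive codimension, hence measure zero. Every orbit that stays in $B_p$ for all large times lies in $W_p$.
\item By Lindelöf, countably many $B_{p_i}$ cover $\mathcal S$.
\item Any initial point whose trajectory converges into $\mathcal S$ lies in $\bigcup_i\bigcup_{n\ge0}g^{-n}(W_{p_i})$. This is a countable union of null sets, since $g^{-1}$ is a diffeomorphism and preserves null sets.
\end{itemize}

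This is the argument of Lee--Panageas--Piliouras et al., transported to the sphere through charts.

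\textbf{Conclusion.} Since $\omega^0$ is absolutely continuous, almost surely the limit is not in $\mathcal S$. By Step 2, on the event $\Psi_\zeta(\omega^0)>0$ the limit is therefore one of the specialized maxima.

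\textbf{Main obstacle.} I expect the main difficulty to be Step 3. The mixed critical set need not be isolated or nondegenerate, so one must use the center-stable version of the manifold theorem, which only needs one unstable direction, rather than a Morse-type argument. A secondary point is making rigorous that the feasible-curve curvature of Lemma~\ref{lem:one-cluster-maxima} is a genuine Hessian eigen-direction. That step needs only the vanishing Riemannian gradient at the critical point, which removes the curve-acceleration term.
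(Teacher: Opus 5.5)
Your proof is correct, but it handles the structure of the critical set differently from the paper. The paper proves directly that $\Psi_\zeta$ has only finitely many positive critical points:
\begin{itemize}
\item From the stationarity equations it gets $\lambda>0$, $b_c=0$ whenever $\rho_c=0$, and $(1-b_c^2)\rho_c^2=3b_c^2/R^2$ on every active cluster.
\item The quantity $|b_c|\varphi(b_c)|\rho_c|$ is the same on all active clusters, and $t\mapsto t^2\varphi(t)/\sqrt{1-t^2}$ is strictly increasing on $(0,1)$, so all active $|b_c|$ coincide.
\item The sphere constraint then determines $b_c^2$ uniquely for each active set and sign pattern.
\end{itemize}
With finiteness in hand, convergence is elementary. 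The squared gradient norm is integrable with bounded derivative, so it tends to zero. Accumulation points are therefore positive critical points, and there are only finitely many. The center-stable manifold theorem is then applied only at finitely many isolated mixed critical points.

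You instead get convergence from the {\L}ojasiewicz inequality, using analyticity of the cubic correlation formula. You avoid the mixed critical set with the countable-cover argument of Lee et al.\ and Panageas--Piliouras, so you never need finiteness. All you use is that every positive critical point is either a specialized maximum or has a direction of strictly positive Riemannian Hessian curvature. Your remark that the curve's acceleration term drops out at a critical point is exactly what turns the paper's $F''(0)>0$ computation into that statement; the paper leaves this implicit.

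Your route is more robust: it would carry over to perturbed, still analytic objectives whose mixed critical points remain strict saddles but are harder to enumerate. The paper's route is more elementary and gives extra information about the landscape, namely the finiteness and explicit form of the positive critical points.
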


\begin{proof}
Along spherical gradient ascent,
\[
    \frac{d}{dt}\Psi_\zeta({\omega(t)})
    =
    \left\|
        \nabla_{\mathbb S^{d-1}}\Psi_\zeta({\omega(t)})
    \right\|_2^2
    \geq 0.
\]
Since the sphere is compact, $\Psi_\zeta$ is bounded above. Hence
\[
    \int_0^\infty
    \left\|
        \nabla_{\mathbb S^{d-1}}\Psi_\zeta({\omega(t)})
    \right\|_2^2\,dt
    <\infty.
\]
Moreover, $\Psi_\zeta$ is smooth on the sphere, so its gradient and Hessian
are bounded. It follows that
$\|\nabla_{\mathbb S^{d-1}}\Psi_\zeta({\omega(t)})\|_2^2$ has bounded derivative.
A nonnegative integrable function with bounded derivative must converge to
zero, and therefore
\[
    \left\|
        \nabla_{\mathbb S^{d-1}}\Psi_\zeta({\omega(t)})
    \right\|_2
    \longrightarrow 0.
\]

We next note that $\Psi_\zeta$ has only finitely many positive critical
points. Indeed, at any positive critical point, the stationarity conditions
\eqref{eq:kkt-b}--\eqref{eq:kkt-rho} imply that $\lambda>0$, and hence
$b_c=0$ whenever $\rho_c=0$. For every active coordinate $\rho_c\neq0$,
eliminating $\lambda$ from the two stationarity equations gives
\[
    (1-b_c^2)\rho_c^2
    =
    \frac{3b_c^2}{R^2}.
\]
Thus $0<|b_c|<1$ and
\[
    \rho_c^2
    =
    \frac{3b_c^2}{R^2(1-b_c^2)}.
\]
Equation~\ref{eq:kkt-rho} further shows that
\[
    |b_c|\varphi(b_c)|\rho_c|
\]
has the same value for every active coordinate. Substituting the preceding
expression for $\rho_c^2$, this quantity is proportional to
\[
    \frac{|b_c|^2\varphi(|b_c|)}
         {\sqrt{1-|b_c|^2}},
\]
which is strictly increasing for $0<|b_c|<1$, since
\[
    \frac{d}{dt}
    \log\left(
        \frac{t^2\varphi(t)}{\sqrt{1-t^2}}
    \right)
    =
    \frac{2}{t}+\frac{t^3}{1-t^2}
    >0.
\]
Hence all active $|b_c|$ are equal. If there are $s$ active coordinates,
the sphere constraint then gives
\[
    s\,\frac{b_c^2(4-b_c^2)}
             {R^2(1-b_c^2)}
    =1,
\]
which has a unique solution for $b_c^2\in(0,1)$. Thus, for each choice of
the active coordinates and their signs, there is at most one positive
critical point. Since there are only finitely many such choices, the set of
positive critical points is finite.

Because $\Psi_\zeta({\omega(0)})>0$ and the objective is nondecreasing, every
accumulation point of the trajectory has positive objective value. By
compactness, accumulation points exist, and the convergence of the gradient
to zero implies that every accumulation point is a positive critical point.
Since there are only finitely many such points, the trajectory must converge
to one of them.

Finally, the argument in Lemma~\ref{lem:one-cluster-maxima} gives an
unstable tangent direction at every positive critical point with more than
one active predictive coordinate. Since there are only finitely many such
critical points, the center-stable manifold theorem implies that the set of
initial conditions converging to any of them has measure zero.
Lemma~\ref{lem:specialized-maxima} shows that the remaining positive critical
points are precisely the specialized maxima. Hence an absolutely continuous
initialization converges almost surely to a specialized maximum.
\end{proof}

For a single neuron, write
\[
    \omega(t)=\frac{w(t)}{\|w(t)\|_2}.
\]
ReLU homogeneity implies that gradient flow preserves
\[
    \|w(t)\|_2^2-a(t)^2
    =
    \|w(0)\|_2^2-a(0)^2
    =
    \varepsilon^2.
\]
We therefore define the scalar amplitude coordinate
\[
    u(t)
    :=
    \operatorname{arsinh}\!\left(\frac{a(t)}{\varepsilon}\right),
\]
so that, equivalently,
\[
    a(t)=\varepsilon\sinh u(t),
    \qquad
    \|w(t)\|_2=\varepsilon\cosh u(t).
\]
Under this parametrization, the target-only part of the directional dynamics
is
\begin{equation}
\label{eq:self-selected-teacher-flow}
    \dot u=\Phi(\omega),
    \qquad
    \dot\omega
    =
    \tanh(u)\nabla_{\mathbb S^{d-1}}\Phi(\omega).
\end{equation}

\begin{lemma}[Self-selected sign and almost-sure specialization]
\label{lem:self-selected-specialization}
Let $\omega^0$ have an absolutely continuous distribution on
$\mathbb S^{d-1}$, and let $(u(t),\omega(t))$ solve
\eqref{eq:self-selected-teacher-flow} with
\[
    u(0)=0,
    \qquad
    \omega(0)=\omega^0.
\]
For almost every $\omega^0$, define
\[
    \zeta
    =
    \operatorname{sign}(\Phi(\omega^0))
    \in\{\pm1\}.
\]
Then, for every $t>0$,
\[
    \zeta u(t)>0,
    \qquad
    \zeta\Phi(\omega(t))
    \geq
    |\Phi(\omega^0)|.
\]
Moreover, $\omega(t)$ is a positive time reparameterization of spherical
gradient ascent on $\Psi_\zeta=\zeta\Phi$, and, for all sufficiently large
$R$, it converges almost surely to one of the specialized maxima in
Lemma~\ref{lem:specialized-maxima}.
\end{lemma}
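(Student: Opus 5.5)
The plan is to exploit the product structure of the flow \eqref{eq:self-selected-teacher-flow}: the direction moves along $\nabla_{\mathbb S^{d-1}}\Phi$ scaled by the scalar $\tanh(u)$. Once $\tanh(u)$ has a definite sign $\zeta$, the direction is a time change of spherical gradient ascent on $\Psi_\zeta$. Lemma~\ref{lem:positive-signed-flow} then finishes the argument.

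\textbf{Step 1: the initial sign.} By Lemma~\ref{lem:cubic-correlation-identity}, $\Phi$ is a real-analytic function on the sphere, and it is not identically zero. For instance, at the specialized point of Lemma~\ref{lem:specialized-maxima} it is nonzero. Hence its zero set has surface measure zero, and for almost every $\omega^0$ we have $\Phi(\omega^0)\neq0$, so $\zeta$ is well defined. The vector field is smooth, so the solution exists globally and is unique; the sphere is compact and $|\dot u|\le\sup|\Phi|$.

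\textbf{Step 2: a forward-invariant sign.} Near $t=0$ we have $\dot u(0)=\Phi(\omega^0)$, so $\zeta u(t)>0$ for small $t>0$. While $\zeta u>0$, the quantity $\zeta\tanh(u)$ is positive. Then
\[
\frac{d}{dt}\zeta\Phi(\omega(t))=\zeta\tanh(u)\,\|\nabla_{\mathbb S^{d-1}}\Phi(\omega)\|^2\ge0,
\]
so $\zeta\Phi(\omega(t))\ge\zeta\Phi(\omega(0))=|\Phi(\omega^0)|$ on $[0,t]$. On this interval $\zeta\dot u=\zeta\Phi\ge|\Phi(\omega^0)|>0$, so $\zeta u$ keeps increasing.

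A continuity argument makes this global. Let $t^*=\sup\{t:\zeta u>0 \text{ on }(0,t]\}$ and suppose $t^*<\infty$. Then $\zeta u(t^*)=0$, but $\zeta u$ is increasing on $(0,t^*)$ and positive there, which is a contradiction. So both claims hold for every $t>0$. In fact $\zeta u(t)\ge|\Phi(\omega^0)|t\to\infty$.

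\textbf{Step 3: reparameterization.} Set $s(t)=\int_0^t\zeta\tanh(u(r))\,dr$. This is strictly increasing for $t>0$. Because $\zeta\tanh u\to1$, we have $s(t)\to\infty$, so $s$ is a bijection $[0,\infty)\to[0,\infty)$. The curve $\tilde\omega(s)=\omega(t(s))$ then satisfies $d\tilde\omega/ds=\zeta\nabla_{\mathbb S^{d-1}}\Phi=\nabla_{\mathbb S^{d-1}}\Psi_\zeta$ with $\tilde\omega(0)=\omega^0$. This is exactly spherical gradient ascent on $\Psi_\zeta$, started at a point where $\Psi_\zeta(\omega^0)>0$.

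\textbf{Step 4: convergence.} The main subtlety is that $\zeta$ depends on $\omega^0$, whereas Lemma~\ref{lem:positive-signed-flow} is stated for a fixed $\zeta$. To handle this, apply the lemma separately for $\zeta=+1$ and $\zeta=-1$. The exceptional set of initializations converging to a mixed critical point has measure zero in each case, and so does their union. On the event $\{\operatorname{sign}\Phi(\omega^0)=\zeta\}$, the reparameterized trajectory is the $\Psi_\zeta$-ascent flow, so it converges almost surely to a specialized maximum. Because $s(t)\to\infty$, $\omega(t)$ has the same limit. The only real care needed is Step 2's invariance argument; the rest is bookkeeping with earlier lemmas.
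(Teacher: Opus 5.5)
Your proposal is correct and matches the paper's proof step for step. Both arguments use the same sequence: the measure-zero zero set of the analytic $\Phi$, monotonicity of $\zeta\Phi(\omega(t))$ together with the bound $\zeta u(t)\geq|\Phi(\omega^0)|t$ to make the sign permanent, a time change whose speed $|\tanh u|\to1$ so it covers the whole ascent trajectory, and then Lemma~\ref{lem:positive-signed-flow}. Your continuity argument and your handling of the $\omega^0$-dependence of $\zeta$ (a union of two null sets, one for each sign) make explicit details the paper leaves implicit.
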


\begin{proof}
Since $\Phi$ is real analytic and not identically zero, its zero set has
spherical measure zero. Thus, for almost every ${\omega^0}$, $\Phi({\omega^0})\neq0$.
Fix such a ${\omega^0}$ and let
\[
    \zeta=\operatorname{sign}(\Phi({\omega^0})).
\]

At $t=0$,
\[
    \frac{d}{dt}\bigl(\zeta u(t)\bigr)\bigg|_{t=0}
    =
    \zeta\Phi({\omega^0})
    =
    |\Phi({\omega^0})|
    >0,
\]
so $\zeta u(t)>0$ for all sufficiently small $t>0$.

As long as $\zeta u(t)>0$, $\tanh(u(t))$ has sign $\zeta$, and hence
\[
    \zeta\tanh(u(t))=|\tanh(u(t))|.
\]
Therefore
\[
    \frac{d}{dt}\bigl(\zeta\Phi({\omega(t)})\bigr)
    =
    |\tanh(u(t))|
    \left\|
        \nabla_{\mathbb S^{d-1}}\Phi({\omega(t)})
    \right\|_2^2
    \geq0.
\]
It follows that
\[
    \zeta\Phi({\omega(t)})
    \geq
    \zeta\Phi({\omega^0})
    =
    |\Phi({\omega^0})|.
\]
Consequently,
\[
    \frac{d}{dt}\bigl(\zeta u(t)\bigr)
    =
    \zeta\Phi({\omega(t)})
    \geq
    |\Phi({\omega^0})|,
\]
and therefore
\[
    \zeta u(t)\geq |\Phi({\omega^0})|t.
\]
In particular, $\zeta u(t)$ cannot return to zero, so the preceding
inequalities hold for every $t>0$. This proves
\[
    \zeta u(t)>0,
    \qquad
    \zeta\Phi({\omega(t)})
    \geq
    |\Phi({\omega^0})|.
\]

Since $\Psi_\zeta=\zeta\Phi$ and $\tanh(u(t))$ has sign $\zeta$, the
directional dynamics satisfy
\[
    {\dot\omega(t)}
    =
    |\tanh(u(t))|
    \nabla_{\mathbb S^{d-1}}\Psi_\zeta({\omega(t)}).
\]
Thus ${\omega(t)}$ follows the same orbit as spherical gradient ascent on
$\Psi_\zeta$, up to a positive reparameterization of time. Moreover,
$\zeta u(t)\geq|\Phi({\omega^0})|t$ implies $|u(t)|\to\infty$, and hence
$|\tanh(u(t))|\to1$. In particular,
\[
    \int_0^\infty |\tanh(u(t))|\,dt=\infty,
\]
so this reparameterization covers the entire forward gradient-ascent
trajectory.

Finally,
\[
    \Psi_\zeta({\omega^0})
    =
    \zeta\Phi({\omega^0})
    =
    |\Phi({\omega^0})|
    >0.
\]
Lemma~\ref{lem:positive-signed-flow} therefore implies that ${\omega(t)}$
converges almost surely to one of the specialized maxima in
Lemma~\ref{lem:specialized-maxima}.
\end{proof}

\begin{lemma}[Uniform selected labels and coverage]
\label{lem:uniform-cluster-coverage}
Let ${\omega_1^0,\ldots,\omega_m^0}$ be independent and uniform on
$\mathbb S^{d-1}$, and let $J_j$ be the cluster selected by the self-selected
flow in Lemma~\ref{lem:self-selected-specialization}. Then
$J_1,\ldots,J_m$ are independent and uniform on $[K]$. Consequently,
\[
    \mathbb P\bigl(\{J_1,\ldots,J_m\}\neq[K]\bigr)
    \leq
    K\left(1-\frac1K\right)^m
    \leq
    Ke^{-m/K}.
\]
\end{lemma}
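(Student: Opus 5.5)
The plan is to exploit the symmetry of the problem under permutations of the clusters. For a permutation $\pi$ of $[K]$, let $Q_\pi$ be the orthogonal map of $\mathbb R^d$ sending $\mu_c/R\mapsto\mu_{\pi(c)}/R$ and $v_c\mapsto v_{\pi(c)}$. Since the $\mu_c/R$ and $v_c$ form an orthonormal basis, $Q_\pi$ is orthogonal. Lemma~\ref{lem:cubic-correlation-identity} shows that $\Phi$ is a symmetric function of the pairs $(b_c,\rho_c)$. Because $b_c(Q_\pi\omega)=b_{\pi^{-1}(c)}(\omega)$ and $\rho_c(Q_\pi\omega)=\rho_{\pi^{-1}(c)}(\omega)$, this gives $\Phi(Q_\pi\omega)=\Phi(\omega)$.

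\textbf{Equivariance of the dynamics.} The Riemannian gradient on the sphere is equivariant under orthogonal maps that preserve $\Phi$, so $\nabla_{\mathbb S^{d-1}}\Phi(Q_\pi\omega)=Q_\pi\nabla_{\mathbb S^{d-1}}\Phi(\omega)$. The sign $\zeta=\operatorname{sign}\Phi(\omega^0)$ is therefore unchanged under $\omega^0\mapsto Q_\pi\omega^0$. By uniqueness of solutions of \eqref{eq:self-selected-teacher-flow}, the trajectory started at $Q_\pi\omega^0$ is $(u(t),Q_\pi\omega(t))$. Its limit is $Q_\pi$ applied to the limit of the original trajectory. By Lemma~\ref{lem:specialized-maxima}, that limit is the specialized maximum attached to $(c,\tau)$, and $Q_\pi$ maps it to the maximum attached to $(\pi(c),\tau)$. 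Hence the label map satisfies $J(Q_\pi\omega^0)=\pi(J(\omega^0))$ on the full-measure set where Lemma~\ref{lem:self-selected-specialization} applies. This set is itself $Q_\pi$-invariant: the zero set of $\Phi$ is invariant, and so is the union of the stable sets of the unstable critical points, because $Q_\pi$ permutes those critical points.

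\textbf{Uniformity and independence.} The uniform measure on $\mathbb S^{d-1}$ is invariant under $Q_\pi$, so $J(\omega^0)$ and $\pi(J(\omega^0))$ have the same distribution for every $\pi$. Taking $\pi$ to be a transposition $(c\,c')$ gives $\mathbb P(J=c)=\mathbb P(J=c')$. Since $J$ is almost surely defined and takes values in $[K]$, each probability equals $1/K$. The labels $J_j=J(\omega_j^0)$ are a fixed measurable function applied to independent inputs, so they are independent. Measurability of $J$ holds because it is a pointwise limit of continuous flow maps composed with a nearest-maximum assignment.

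\textbf{Coverage bound.} Apply a union bound over clusters:
\[
\mathbb P\bigl(\{J_1,\ldots,J_m\}\neq[K]\bigr)\le\sum_{c=1}^K\mathbb P(J_j\neq c\ \forall j)=K(1-1/K)^m\le Ke^{-m/K},
\]
using $1-x\le e^{-x}$.

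The main obstacle is the bookkeeping in the equivariance step. One has to confirm that the exceptional null set is permutation-invariant, or else simply work on the intersection of its images under all $K!$ maps $Q_\pi$, which is still a full-measure set. One also has to confirm that $Q_\pi$ sends the specialized maximum of cluster $c$ to that of cluster $\pi(c)$; this follows directly from the explicit formula for $\omega^\star$ in Lemma~\ref{lem:specialized-maxima}.
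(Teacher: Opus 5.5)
Your proof is correct and follows the paper's argument. The orthogonal cluster-permuting maps $P_\pi$ leave the uniform spherical measure invariant and make the self-selected flow equivariant, so all selection probabilities coincide and equal $1/K$; independence comes from applying one fixed label map to independent initializations, and the coverage bound is the union bound with $1-x\le e^{-x}$. The extra details you supply are all sound and spell out what the paper asserts in one line: invariance of $\Phi$ via the cubic identity, ODE uniqueness, the action of $P_\pi$ on the explicit maxima $\omega^\star$, handling of the exceptional null set, and measurability of the label map.
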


\begin{proof}
For a permutation $\pi$ of $[K]$, let $P_\pi$ be the orthogonal map satisfying
\[
    P_\pi{\frac{\mu_c}{R}}
    =
    {\frac{\mu_{\pi(c)}}{R}},
    \qquad
    P_\pi{v_c}
    =
    {v_{\pi(c)}}.
\]
The data distribution, target, and self-selected vector field are equivariant
under $P_\pi$. Hence, if ${\omega^0}$ selects cluster $c$, then $P_\pi {\omega^0}$ selects
cluster $\pi(c)$. Uniform spherical measure is invariant under $P_\pi$, so
all selection probabilities are equal. By
Lemma~\ref{lem:self-selected-specialization}, they sum to one, and each is
therefore $1/K$. Independence follows because each teacher-only label is a
deterministic function of an independent initial direction.

A fixed cluster is missed with probability $(1-1/K)^m$. A union bound yields
the coverage estimate.
\end{proof}

\subsubsection{Tracking by the Full Small-Initialization Dynamics}

Finally, we put the above ingredients analyzing the trajectories of individual neurons together, and show that they describe the trajectory of the neurons in a neural network (up to rescaling) when the network is initialized small.
The proof strategy of studying feature learning through
effectively independent neuron dynamics in early-time or
small-initialization regimes has been put forward in prior work
\citep{abbe2022merged,min2024early,glasgow2024sgd}.
The population loss is
\[
    \mathcal L(\theta)
    =
    \frac12
    \mathbb E\left[
        \bigl(f_\theta({x})-{y}\bigr)^2
    \right],
\]
and its gradient-flow equations are
\begin{align*}
    \dot a_j
    &=
    \mathbb E\left[
        ({y}-f_\theta({x}))
        \phi(w_j^\top{x})
    \right],\\
    \dot w_j
    &=
    a_j
    \mathbb E\left[
        ({y}-f_\theta({x}))
        \mathbf 1_{\{w_j^\top{x}>0\}}{x}
    \right].
\end{align*}

\begin{lemma}[Early-time tracking from a zero output layer]
\label{lem:early-time-tracking}
Initialize
\[
    a_j(0)=0,
    \qquad
    w_j(0)=\varepsilon {\omega_j^0},
    \qquad
    {\omega_j^0}\in\mathbb S^{d-1}.
\]
Let $(\bar u_j,{\bar\omega_j})$ solve the self-selected teacher-only system
\[
    \dot{\bar u}_j
    =
    \Phi({\bar\omega_j}),
    \qquad
    {\dot{\bar\omega}_j}
    =
    \tanh(\bar u_j)
    \nabla_{\mathbb S^{d-1}}\Phi({\bar\omega_j}),
\]
with
\[
    (\bar u_j(0),{\bar\omega_j(0)})
    =
    (0,{\omega_j^0}).
\]
For every fixed $T<\infty$, there are constants $L_T<\infty$ and
$\varepsilon_T>0$ such that, for $0<\varepsilon\leq\varepsilon_T$,
\[
    \sup_{0\leq t\leq T}
    \left(
        |u_j(t)-\bar u_j(t)|
        +
        \|{\omega_j(t)-\bar\omega_j(t)}\|_2
    \right)
    \leq
    L_Tm\varepsilon^2,
\]
where
\[
    a_j(t)=\varepsilon\sinh(u_j(t)),
    \qquad
    {\omega_j(t)}=\frac{w_j(t)}{\|w_j(t)\|_2}.
\]
\end{lemma}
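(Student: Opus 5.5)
The plan is to rewrite the exact full-network gradient flow in the same coordinates $(u_j,\omega_j)$ as the teacher-only system. The full flow should then equal the teacher-only vector field plus a perturbation that is explicitly $O(m\varepsilon^2)$ while the amplitudes stay bounded, and Grönwall plus a bootstrap closes the estimate.

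\emph{Step 1 (exact coordinates).} First check that the conservation law $\|w_j\|_2^2-a_j^2=\varepsilon^2$ holds for the full coupled flow, not just the isolated neuron. This follows from $\tfrac{d}{dt}\|w_j\|_2^2 = 2a_j\,\mathbb E[(y-f_\theta)\phi(w_j^\top x)] = \tfrac{d}{dt}a_j^2$, using $\phi(s)=s\,\mathbf 1_{\{s>0\}}$. Hence $a_j=\varepsilon\sinh u_j$ and $\|w_j\|_2=\varepsilon\cosh u_j$ exactly.

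Next, use $\dot a_j=\varepsilon\cosh(u_j)\dot u_j$ and $1$-homogeneity $\phi(w_j^\top x)=\|w_j\|_2\phi(\omega_j^\top x)$ to get
\[
\dot u_j=\Phi(\omega_j)-\mathbb E\bigl[f_\theta(x)\phi(\omega_j^\top x)\bigr].
\]
Then use $\dot\omega_j=(I-\omega_j\omega_j^\top)\dot w_j/\|w_j\|_2$ and $a_j/\|w_j\|_2=\tanh u_j$ to get
\[
\dot\omega_j=\tanh(u_j)\Bigl(\nabla_{\mathbb S^{d-1}}\Phi(\omega_j)-(I-\omega_j\omega_j^\top)\,\mathbb E\bigl[f_\theta(x)\mathbf 1_{\{\omega_j^\top x>0\}}x\bigr]\Bigr).
\]
Here I use that $\nabla\Phi(\omega)=\mathbb E[y\mathbf 1_{\{\omega^\top x>0\}}x]$. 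This holds because the Gaussian component of $x$ smooths the ReLU kink, so $\Phi$ is differentiable with this gradient. Its explicit real-analytic form comes from Lemma~\ref{lem:cubic-correlation-identity}, extended $1$-homogeneously off the sphere.

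\emph{Step 2 (perturbation bound).} Since $|f_\theta(x)|\le\sum_k|a_k|\|w_k\|_2\|x\|_2=\varepsilon^2\|x\|_2\sum_k\sinh|u_k|\cosh u_k$, Cauchy--Schwarz with $\mathbb E\|x\|_2^2=R^2+d<\infty$ gives a bound on both perturbation terms. If $|u_k|\le M$ for all $k$, the bound is $C_M m\varepsilon^2$ with $C_M=(R^2+d)\cosh^2 M$. This is the only place the ReLU indicator appears, and it only needs to be bounded, not Lipschitz.

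\emph{Step 3 (Grönwall and bootstrap).} The teacher vector field $(u,\omega)\mapsto(\Phi(\omega),\tanh(u)\nabla_{\mathbb S^{d-1}}\Phi(\omega))$ is smooth by the explicit cubic formula. It is therefore Lipschitz, with some constant $\Lambda$, on $\{|u|\le M\}\times\mathbb S^{d-1}$. Set $M_T:=\max_j\sup_{[0,T]}|\bar u_j|\le T\max_{\mathbb S^{d-1}}|\Phi|$ and $M=M_T+1$. Let $\tau$ be the first time any $|u_j|$ exceeds $M$. On $[0,\min(\tau,T)]$, Grönwall gives
\[
|u_j-\bar u_j|+\|\omega_j-\bar\omega_j\|_2\le C_M m\varepsilon^2\,T e^{\Lambda T}=:L_T m\varepsilon^2 .
\]
Choose $\varepsilon_T$ so that $L_T m\varepsilon_T^2<1$. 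Then $|u_j|<M_T+1$ on that interval, so $\tau>T$ by continuity and the bound holds on all of $[0,T]$. Since $m$ is fixed, the $m$-dependence can be absorbed into $\varepsilon_T$.

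\emph{Expected obstacle.} The delicate point is Step 1: justifying that the directional equation really reduces to the teacher field plus a bounded error despite the nonsmooth ReLU. This rests on identifying $\mathbb E[y\mathbf 1_{\{\omega^\top x>0\}}x]$ with the gradient of the smooth $\Phi$, which I would verify by differentiating under the Gaussian expectation. A second check is that the spherical projection is well defined, which holds because $\|w_j\|_2\ge\varepsilon>0$ always.
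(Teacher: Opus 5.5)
Your proposal is correct and takes essentially the paper's route. Both arguments use the same four steps: the conserved quantity $\|w_j\|_2^2-a_j^2=\varepsilon^2$; rewriting the coupled flow in $(u_j,\omega_j)$ coordinates as the teacher-only field plus interaction terms; bounding those terms by $O(m\varepsilon^2)$ through the smallness of $f_\theta$; and closing with Gr\"onwall. Your explicit stopping-time bootstrap, with $M=M_T+1$ and $\varepsilon_T$ chosen so that $L_Tm\varepsilon_T^2<1$, makes rigorous a step the paper only asserts somewhat circularly, namely that the $u_j$ stay uniformly bounded on $[0,T]$ for small $\varepsilon$.
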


\begin{proof}
ReLU homogeneity gives the invariant
\[
    \frac{d}{dt}
    \left(
        \|w_j\|_2^2-a_j^2
    \right)
    =
    0.
\]
Under the stated initialization,
\[
    \|w_j(t)\|_2^2-a_j(t)^2
    =
    \varepsilon^2.
\]
Since $a_j=\varepsilon\sinh u_j$, it follows that
\[
    \|w_j\|_2
    =
    \varepsilon\cosh u_j.
\]

Substituting
\[
    w_j
    =
    \varepsilon\cosh(u_j){\omega_j}
\]
into the gradient-flow equations and using ReLU homogeneity gives
\begin{align*}
    \dot u_j
    ={}&
    \Phi({\omega_j})
    -
    \mathbb E\left[
        f_\theta({x})
        \phi({\omega_j}^\top{x})
    \right],\\
    {\dot\omega_j}
    ={}&
    \tanh(u_j)
    \Bigg(
        \nabla_{\mathbb S^{d-1}}\Phi({\omega_j})\\
    &\hspace{25mm}
        -
        \bigl(I-{\omega_j\omega_j^\top}\bigr)
        \mathbb E\left[
            f_\theta({x})
            \mathbf 1_{\{{\omega_j}^\top
              {x}>0\}}{x}
        \right]
    \Bigg).
\end{align*}

Lemma~\ref{lem:cubic-correlation-identity} shows that the teacher-only
vector field is smooth on bounded $u$-intervals and on the sphere. On every
fixed interval $[0,T]$, the quantities $u_j(t)$ remain uniformly bounded for
sufficiently small $\varepsilon$. Consequently, there is a constant
$C_T<\infty$ such that
\[
    |a_j(t)|+\|w_j(t)\|_2
    \leq
    C_T\varepsilon,
    \qquad
    0\leq t\leq T.
\]

Since ${x}$ has finite moments of every order,
\[
    \|f_{\theta(t)}\|_{L^2(P_{{x}})}
    \leq
    C_T
    \sum_{j=1}^m
    |a_j(t)|\|w_j(t)\|_2
    \leq
    C_Tm\varepsilon^2.
\]
Cauchy--Schwarz therefore bounds both interaction terms in the
$(u_j,{\omega_j})$ equations by $C_Tm\varepsilon^2$.

The full and teacher-only systems have the same initial conditions, and
their vector fields are Lipschitz on the relevant compact set. Gr\"onwall's
inequality yields
\[
    \sup_{0\leq t\leq T}
    \left(
        |u_j(t)-\bar u_j(t)|
        +
        \|{\omega_j(t)-\bar\omega_j(t)}\|_2
    \right)
    \leq
    L_Tm\varepsilon^2.
\]
\end{proof}

\paragraph{Perturbations of the cubic link function.}
The Hermite link function isolates the specialization mechanism and makes the
correlation landscape exact. More generally, one could perturb the induced
correlation objective and expect specialization to persist.
In particular, the specialized maxima and their attraction basins should be
stable under sufficiently small perturbations near both the maxima and the
strict-saddle regions.

\subsection{Proof of Theorem~\ref{thm:rfm-mlp-gap}}
\label{app:proof-mlp-rfm-separation}

We first prove the positive MLP result in a more general fixed-dimensional
multi-index setting. We then recover the single-index MLP result in
Theorem~\ref{thm:rfm-mlp-gap} as a corollary and prove the kernel and RFM lower bounds in that
single-index setting.

\subsubsection{A General Multi-Index Specialized MLP Construction}

Fix an integer $r_0\geq1$ and a constant $R\geq1$. For each $K$, let
$d=K+Kr_0$. Let $E_c\in\mathbb R^{Kr_0\times r_0}$ embed
$\mathbb R^{r_0}$ into the $c$th block of $\mathbb R^{Kr_0}$, so that
\[
    E_c^\top E_{c'}=\mathbf 1_{\{c=c'\}}I_{r_0}.
\]
Define
\[
    \mu_c=
    \begin{bmatrix}
        Re_c\\
        0
    \end{bmatrix}
    \in\mathbb R^d,
    \qquad
    V_c=
    \begin{bmatrix}
        0\\
        E_c
    \end{bmatrix}
    \in\mathbb R^{d\times r_0}.
\]
The data distribution is
\[
    C\sim\operatorname{Unif}([K]),
    \qquad
    X=\mu_C+
    \begin{bmatrix}
        0\\
        Z
    \end{bmatrix},
    \qquad
    Z\sim\mathcal N(0,I_{Kr_0}).
\]
Let $g:\mathbb R^{r_0}\to\mathbb R$ be fixed, bounded, and Lipschitz, with
\[
    \|g\|_\infty\leq G_0,
    \qquad
    |g(u)-g(u')|\leq L\|u-u'\|_2,
\]
and define
\[
    f_{r_0}^\star(X)=g(V_C^\top X).
\]
Thus, within cluster $c$, the target may depend on all $r_0$ coordinates in
the cluster-specific predictive block $V_c^\top X$.

For a two-layer ReLU network, define the path norm
\[
    \|f\|_{\mathcal P}
    =\inf\left\{
        \sum_j|a_j|\|w_j\|_2:
        f(x)=\sum_j a_j\phi(w_j^\top x)
    \right\}.
\]
For a width-$m$ parameterization
$\theta=((a_j,w_j))_{j=1}^m$, define
\[
    \|\theta\|_{\mathrm F}^2
    :=\frac12\sum_{j=1}^m
        \left(a_j^2+\|w_j\|_2^2\right),
\]
and let
\[
    \mathcal F_{B,m}
    :=
    \left\{
        f_\theta(x)=\sum_{j=1}^m a_j\phi(w_j^\top x):
        \|\theta\|_{\mathrm F}^2\leq B
    \right\}.
\]

Define the ramp function
\[
    \psi(t):=\phi(t)-\phi(t-1).
\]
Every occurrence of $\psi$ below is implemented by two ordinary ReLU
neurons; the network itself has neither hidden biases nor output clipping.

\begin{lemma}[Ridge-ramp approximation]
\label{lem:ridge-ramp-density}
Let $\gamma_{r_0}=\mathcal N(0,I_{r_0})$. Finite ridge-ramp expansions of the form
\[
    p(u)=b+\sum_{j=1}^J a_j\psi(q_j^\top u-t_j)
\]
are dense in $L^2(\gamma_{r_0})$. Consequently, there is a sequence
$(p_s)_{s\geq1}$ of such functions satisfying
\[
    \|p_s-g\|_{L^2(\gamma_{r_0})}\longrightarrow0.
\]
\end{lemma}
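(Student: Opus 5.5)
The plan is to show density of ridge-ramp expansions in $L^2(\gamma_{r_0})$ by reducing to a classical universality statement. The sequence $(p_s)$ is then immediate, since $g$ is bounded and therefore lies in $L^2(\gamma_{r_0})$.

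\emph{Step 1: reduce to compactly supported continuous functions.} Because $\gamma_{r_0}$ is a finite Borel measure on $\mathbb R^{r_0}$, continuous compactly supported functions are dense in $L^2(\gamma_{r_0})$. Given $h\in L^2(\gamma_{r_0})$ and $\epsilon>0$, I would first pick a continuous $h_0$ supported in a ball $B_M$ with $\|h-h_0\|_{L^2(\gamma_{r_0})}<\epsilon$.

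\emph{Step 2: uniform approximation on a slightly larger ball.} The ramp $\psi$ is continuous, bounded, and nonconstant: it equals $0$ for $t\le 0$ and $1$ for $t\ge1$. It is therefore a sigmoidal, non-polynomial activation. By the universal approximation theorem (Cybenko, or Leshno--Lin--Pinkus--Schocken), finite sums $b+\sum_j a_j\psi(q_j^\top u-t_j)$ approximate $h_0$ uniformly on the compact ball $B_{M'}$ for any $M'>M$.

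\emph{Step 3: control the tail, which is the main obstacle.} A uniform approximation on $B_{M'}$ says nothing outside the ball, while $L^2(\gamma_{r_0})$ integrates over all of $\mathbb R^{r_0}$. The saving feature is that each ridge-ramp is bounded by $1$, so a finite expansion $p$ is bounded globally by $|b|+\sum_j|a_j|$. The difficulty is that this coefficient sum may be large and need not be controlled by the uniform error. I would fix this with an explicit truncation inside the class:
\begin{itemize}
\item First approximate $h_0$ by a function with a controlled global bound.
\item Use the fact that $\mathbf 1_{B_{M'}}$-type cutoffs can be built from products, which is not allowed in a sum of ridges.
\end{itemize}
Because products are not available, I would instead argue by duality, which avoids the tail problem altogether. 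Suppose the closed span $S$ of $\{1\}\cup\{\psi(q^\top u-t)\}$ were a proper subspace. Then some nonzero $k\in L^2(\gamma_{r_0})$ is orthogonal to every ridge-ramp, so the signed finite measure $d\nu=k\,d\gamma_{r_0}$ annihilates all $\psi(q^\top u-t)$.

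\emph{Step 4: show the annihilating measure vanishes.} Consider the dilations $\psi(\lambda(q^\top u-t))$. As $\lambda\to\infty$ they converge boundedly and pointwise to $\mathbf 1\{q^\top u>t\}$ (plus a value on the hyperplane itself). Dominated convergence applies because $\nu$ is finite and $\psi$ is bounded. So $\nu$ annihilates all half-space indicators, up to hyperplanes that are $\nu$-null for all but countably many $t$, and hence by right-continuity for every $t$. Each projection $q_\#\nu$ therefore has zero distribution function and vanishes. By Fourier/Cramér--Wold, $\nu=0$, so $k=0$ $\gamma_{r_0}$-a.e., a contradiction.

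With the duality route, Step 3 becomes unnecessary, and density holds directly.
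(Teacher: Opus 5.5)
Your duality argument is correct and is essentially the paper's proof. As in the paper, an $L^2(\gamma_{r_0})$ function orthogonal to all ridge-ramps defines a finite signed measure, the dilations $\psi(\lambda(q^\top u-t))$ converge boundedly to half-space indicators, and half-spaces determine the measure (you justify this last step via Cram\'er--Wold, where the paper simply asserts it). The universal-approximation detour in Steps 1--3 is not needed, and the hyperplane caveat is moot: since $\psi(s)=0$ for $s\le 0$, the pointwise limit is exactly $\mathbf 1\{q^\top u>t\}$.
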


\begin{proof}
Suppose $h\in L^2(\gamma_{r_0})$ is orthogonal to every ridge-ramp function,
and define the finite signed measure
\[
    \nu(A):=\int_A h(u)\,d\gamma_{r_0}(u).
\]
For every $q\neq0$, $t\in\mathbb R$, and $\lambda>0$,
\[
    \int h(u)\psi\!\left(\lambda(q^\top u-t)\right)
        \,d\gamma_{r_0}(u)=0.
\]
Because $0\leq\psi\leq1$, dominated convergence gives, as
$\lambda\to\infty$,
\[
    \nu\bigl(\{u:q^\top u>t\}\bigr)=0.
\]
Halfspaces determine finite Borel measures on $\mathbb R^{r_0}$, so
$\nu=0$ and therefore $h=0$ in $L^2(\gamma_{r_0})$. This proves density.
\end{proof}

\begin{lemma}[Routing a multi-index approximant]
\label{lem:routed-multi-index-approximant}
\label{lem:exact-specialized-mlp}
Let
\[
    p(u)=b+\sum_{j=1}^J a_j\psi(q_j^\top u-t_j).
\]
There is a two-layer ReLU network
\[
    F_{K,p}=\sum_{c=1}^K F_{K,p,c}
\]
with no hidden biases such that, on an input from cluster $c_0$,
\[
    F_{K,p,c}(x)
    =\mathbf 1_{\{c=c_0\}}p(V_c^\top x).
\]
In particular, each cluster-specific group cancels on all other clusters.
Every hidden weight in $F_{K,p,c}$ lies in
\[
    \operatorname{span}\bigl(\{\mu_c\}\cup\operatorname{range}(V_c)\bigr).
\]
The network has width at most $K(2J+1)$ and path norm at most
\[
    K\,\mathcal C_R(p),
\]
where
\begin{align*}
    \mathcal C_R(p)
    &:={|b|\over R}\\
    &\quad+
    \sum_{j=1}^J |a_j|
    \left[
        \sqrt{\|q_j\|_2^2+{t_j^2\over R^2}}
        +
        \sqrt{\|q_j\|_2^2+{(t_j+1)^2\over R^2}}
    \right].
\end{align*}
It also admits a parameterization satisfying
\[
    \|\theta\|_{\mathrm F}^2\leq K\,\mathcal C_R(p).
\]
\end{lemma}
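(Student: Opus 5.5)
The plan is to build each cluster group $F_{K,p,c}$ neuron by neuron. The idea is to use the routing coordinate of cluster $c$ as a cluster-dependent bias, so that the two ReLUs forming a ramp have different thresholds on cluster $c$ and identical thresholds, hence cancel, on every other cluster.

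\textbf{Step 1 (routing features).} Write $\hat e_c:=\mu_c/R$, a unit vector. For an input from cluster $c_0$, the routing block is noiseless, so $\hat e_c^\top x=R\,\mathbf 1_{\{c=c_0\}}$. Set $u_c:=V_c^\top x\in\mathbb R^{r_0}$. Note that $u_c$ is the $c$th predictive block, which is generally nonzero on other clusters; this is why cancellation, rather than vanishing, is needed.

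\textbf{Step 2 (routed ramp).} For a ramp term $a_j\psi(q_j^\top u-t_j)$, use the hidden weights
\[
w_{j,c}^{(1)}=V_cq_j-\frac{t_j}{R}\hat e_c,\qquad w_{j,c}^{(2)}=V_cq_j-\frac{t_j+1}{R}\hat e_c .
\]
Form $a_j\bigl[\phi(w^{(1)\top}_{j,c}x)-\phi(w^{(2)\top}_{j,c}x)\bigr]$ and evaluate it in the two cases.
\begin{itemize}
\item On cluster $c$ we have $\hat e_c^\top x=R$, so the pair equals $\phi(q_j^\top u_c-t_j)-\phi(q_j^\top u_c-t_j-1)=\psi(q_j^\top u_c-t_j)$.
\item On a cluster $c_0\neq c$ we have $\hat e_c^\top x=0$, so both arguments equal $q_j^\top u_c$ and the pair is exactly zero.
\end{itemize}
By orthonormality of $\hat e_c$ and the columns of $V_c$, the weight norms are $\sqrt{\|q_j\|_2^2+t_j^2/R^2}$ and $\sqrt{\|q_j\|_2^2+(t_j+1)^2/R^2}$. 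All weights lie in $\operatorname{span}(\{\mu_c\}\cup\operatorname{range}(V_c))$, and no hidden biases are used.

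\textbf{Step 3 (constant term).} Add one neuron $(b/R)\,\phi(\hat e_c^\top x)$. It equals $b$ on cluster $c$ and $0$ on every other cluster, with path-norm cost $|b|/R$.

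\textbf{Step 4 (assembly and bounds).} Summing Steps 2 and 3 gives $F_{K,p,c}(x)=\mathbf 1_{\{c=c_0\}}p(V_c^\top x)$, using $2J+1$ neurons per cluster. Summing over $c$ gives width $K(2J+1)$ and path norm at most $K\,\mathcal C_R(p)$.

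\textbf{Step 5 (Frobenius bound).} By positive homogeneity, rescale each neuron as $(a,w)\mapsto(a/\lambda,\lambda w)$ with $\lambda$ chosen so that $|a|=\|w\|_2$. This does not change the network function, and each neuron then contributes $\tfrac12(a^2+\|w\|_2^2)=|a|\|w\|_2$. Hence $\|\theta\|_{\mathrm F}^2$ equals the path norm of this representation, which is at most $K\,\mathcal C_R(p)$.

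The only delicate point is Step 2: the cancellation must be exact on the unbounded Gaussian predictive block of the other clusters. It works only because the routing coordinates are noiseless, so $\hat e_c^\top x$ is exactly $0$ off cluster $c$. The remaining steps are bookkeeping.
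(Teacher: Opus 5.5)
Your proposal is correct and is essentially the paper's own construction. It uses the same pair of routed ReLUs with hidden weights $V_cq_j-\frac{t_j}{R^2}\mu_c$ and $V_cq_j-\frac{t_j+1}{R^2}\mu_c$, which cancel exactly off cluster $c$ because the noiseless routing coordinate vanishes there, plus one routing neuron for the constant $b$, and the balancing $|a|=\|w\|_2$ for the Frobenius bound. The only difference is cosmetic: you normalize the routing direction to $\mu_c/R$ and move the factor $1/R$ into the output weight of the constant neuron, which by positive homogeneity is the paper's neuron $b\,\phi(\mu_c^\top x/R^2)$ with the same path-norm cost $|b|/R$.
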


\begin{proof}
Define
\begin{align*}
    F_{K,p,c}(x)
    :=\;&
        b\,\phi\!\left({\mu_c^\top x\over R^2}\right)\\
    &+\sum_{j=1}^J a_j
        \phi\!\left(
            \left(V_cq_j-{t_j\over R^2}\mu_c\right)^\top x
        \right)\\
    &-\sum_{j=1}^J a_j
        \phi\!\left(
            \left(V_cq_j-{t_j+1\over R^2}\mu_c\right)^\top x
        \right).
\end{align*}
On the support of the data distribution,
\[
    {\mu_c^\top x\over R^2}=\mathbf1_{\{C=c\}}.
\]
Suppose $x$ belongs to cluster $c_0$. If $c\neq c_0$, the two ReLUs in
each pair have the same argument and cancel, while the routing neuron is
zero. If $c=c_0$, then
\[
    F_{K,p,c_0}(x)
    =b+\sum_{j=1}^J a_j
      \left[
        \phi(q_j^\top V_{c_0}^\top x-t_j)
        -\phi(q_j^\top V_{c_0}^\top x-t_j-1)
      \right]
    =p(V_{c_0}^\top x).
\]
This proves the specialization claim.

The routing weight has norm $1/R$. The two hidden weights associated with
$(a_j,q_j,t_j)$ have norms
\[
    \sqrt{\|q_j\|_2^2+{t_j^2\over R^2}}
    \quad\text{and}\quad
    \sqrt{\|q_j\|_2^2+{(t_j+1)^2\over R^2}}.
\]
Summing the corresponding path-norm contributions over all clusters gives
the stated bound. Finally, positive homogeneity allows each neuron to be
balanced so that $|a_j|=\|w_j\|_2$. For this parameterization, its
contribution to $\|\theta\|_{\mathrm F}^2$ equals its contribution to the
path norm.
\end{proof}

The finite-network path norm is the discrete analogue of the Barron or
variation norm for two-layer networks
\citep{bach2017breaking,e2022barron}.
The following sample-dependent estimate is a version of standard
path-norm Rademacher-complexity bounds
\citep{e2022barron,golowich2018size}.

\begin{lemma}[Path-norm generalization without clipping]
\label{lem:path-rademacher}
For any deterministic sample $x_1,\ldots,x_n\in\mathbb R^d$,
\[
    \widehat{\mathfrak R}_n(\mathcal F_{B,m})
    \leq
    {2B\over n}
    \left(\sum_{i=1}^n\|x_i\|_2^2\right)^{1/2}.
\]
For the multi-index distribution above, let
$\widehat f_{\mathrm{MLP}}^{(r_0)}$ be an empirical squared-loss minimizer
over $\mathcal F_{B,m}$. There is a universal constant $C<\infty$ such that
\begin{align*}
    \mathbb E\left[
        \|\widehat f_{\mathrm{MLP}}^{(r_0)}-f_{r_0}^\star
        \|_{L^2(P_x)}^2
    \right]
    &\leq
    \inf_{f\in\mathcal F_{B,m}}
    \|f-f_{r_0}^\star\|_{L^2(P_x)}^2\\
    &\quad+
    CB^2\sqrt{
        {(Kr_0+R^2)(Kr_0+R^2+\log(en))\over n}
    }\\
    &\quad+
    CBG_0\sqrt{{Kr_0+R^2\over n}}.
\end{align*}
\end{lemma}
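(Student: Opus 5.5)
The proof has two parts. First we bound the empirical Rademacher complexity by reducing the Frobenius ball to a path-norm ball. Then we run a squared-loss ERM argument without clipping, using Gaussian concentration to control the predictions.

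\textbf{Rademacher complexity.} By AM--GM,
\[
\sum_j|a_j|\|w_j\|_2\le \tfrac12\sum_j\bigl(a_j^2+\|w_j\|_2^2\bigr)=\|\theta\|_{\mathrm F}^2\le B,
\]
so $\mathcal F_{B,m}$ lies in the path-norm ball of radius $B$. By positive homogeneity, write $f=\sum_j a_j\|w_j\|_2\,\phi(\bar w_j^\top x)$ with $\bar w_j$ unit. Then
\[
\widehat{\mathfrak R}_n(\mathcal F_{B,m})\le B\,\mathbb E_\sigma\sup_{\|w\|\le1}\Bigl|\tfrac1n\sum_i\sigma_i\phi(w^\top x_i)\Bigr|.
\]
Since $\phi$ is $1$-Lipschitz with $\phi(0)=0$, the contraction inequality (absolute-value version, factor $2$) bounds this by
\[
\frac{2B}{n}\,\mathbb E\Bigl\|\sum_i\sigma_ix_i\Bigr\|_2\le \frac{2B}{n}\Bigl(\sum_i\|x_i\|_2^2\Bigr)^{1/2}
\]
by Jensen's inequality, which is the first claim.

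\textbf{ERM decomposition.} Let $f_0\in\mathcal F_{B,m}$ be any competitor. Write $\widehat f$ for the ERM and $y_i=f^\star(x_i)$ (the model is noiseless). The basic inequality $\widehat L_n(\widehat f)\le\widehat L_n(f_0)$ gives
\[
\|\widehat f-f^\star\|^2_{L^2}\le \|f_0-f^\star\|^2_{L^2}+2\sup_{f\in\mathcal F_{B,m}}\bigl|(L-\widehat L_n)(f)\bigr|,
\]
and we take the infimum over $f_0$ at the end. Expand the loss as $f^2-2ff^\star+(f^\star)^2$. The $(f^\star)^2$ terms cancel in the difference. The cross term is handled by multiplying by the bounded function $f^\star$, $|f^\star|\le G_0$. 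Symmetrization then gives a bound $CG_0\,\mathbb E\widehat{\mathfrak R}_n$. Since $\mathbb E\|x\|_2^2=R^2+Kr_0$, this yields $CBG_0\sqrt{(Kr_0+R^2)/n}$.

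\textbf{The quadratic term.} The main obstacle is the $f^2$ term, because predictions are unbounded: we only know $|f(x)|\le B\|x\|_2$. I would truncate at the level
\[
M:=B\bigl(\sqrt{Kr_0+R^2}+C\sqrt{\log(en)}\bigr).
\]
Note $\|x\|_2$ is $R$ plus the norm of a $Kr_0$-dimensional Gaussian, hence $1$-Lipschitz in that Gaussian. On the event $\max_i\|x_i\|\le M/B$, whose complement has probability $\le n e^{-C\log(en)}$ by Gaussian concentration and a union bound, the function $t\mapsto t^2$ is $2M$-Lipschitz on $[-M,M]$. Contraction then gives
\[
\sup_f\bigl|(P-P_n)\min(f^2,M^2)\bigr|\lesssim M\,\widehat{\mathfrak R}_n\lesssim B^2\sqrt{(Kr_0+R^2)(Kr_0+R^2+\log(en))/n}.
\]
The truncation bias $\mathbb E[f^2\mathbf 1\{\|x\|>M/B\}]\le B^2\,\mathbb E[\|x\|^2\mathbf 1\{\cdot\}]$ is $O(B^2/n)$ by the Gaussian tail, and it is absorbed.

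\textbf{Conclusion.} On the bad event we cannot use the population bound for $\widehat f$ directly. Instead we bound its contribution to the expectation crudely: $\|\widehat f-f^\star\|^2\le 2B^2\,\mathbb E\|x\|^2+2G_0^2$ times a probability of $n^{-c}$. With $C$ chosen large enough, this is dominated by the main terms. Collecting the three pieces proves the stated inequality.
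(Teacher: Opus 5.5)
Your Rademacher bound is the paper's argument. Truncating the quadratic term at $M=B\bigl(\sqrt{Kr_0+R^2}+C\sqrt{\log(en)}\bigr)$ on the good event $E=\{\max_i\|x_i\|_2\le M/B\}$ is a legitimate alternative route, but the last step has a gap.

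On $E^c$ you bound $\|\widehat f-f^\star\|_{L^2}^2\le 2B^2(Kr_0+R^2)+2G_0^2$ and assert that this times $\mathbb P(E^c)\le n^{-c}$ is dominated by the main terms. The $B^2$ piece is, but $2G_0^2n^{-c}$ is not. The stated bound has no $G_0^2$ term. If $n,K,R$ are fixed and $B/G_0\to0$, both $CB^2\sqrt{\cdots}$ and $CBG_0\sqrt{\cdots}$ become negligible relative to $G_0^2n^{-c}$, for any universal $C$ and any fixed truncation constant. The term also cannot be charged to $\inf_f\|f-f^\star\|^2$ without losing the coefficient $1$ there.

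The fix is to control only the excess risk on $E^c$. With $L(f):=\|f-f^\star\|_{L^2(P_x)}^2$ and any $f,f_0\in\mathcal F_{B,m}$,
\[
L(f)-L(f_0)=\|f\|_{L^2}^2-\|f_0\|_{L^2}^2-2\langle f-f_0,f^\star\rangle_{L^2}\leq B^2(Kr_0+R^2)+4BG_0\sqrt{Kr_0+R^2},
\]
because $|f(x)|\le B\|x\|_2$ and $\mathbb E\|x\|_2^2=Kr_0+R^2$. Hence
\[
\mathbb E\,L(\widehat f)\le L(f_0)+\mathbb E[\mathbf 1_E(\text{good-event terms})]+\bigl(B^2(Kr_0+R^2)+4BG_0\sqrt{Kr_0+R^2}\bigr)\mathbb P(E^c),
\]
and $\mathbb P(E^c)\le n^{-1/2}$ puts the last term inside the main terms.

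There are two smaller slips. First, $\|x\|_2=\sqrt{R^2+\|Z\|_2^2}$, not $R+\|Z\|_2$. It is still $1$-Lipschitz in $Z$ with mean at most $\sqrt{Kr_0+R^2}$, so your concentration step survives. Second, the truncation bias is of order $B^2(Kr_0+R^2+\log(en))(en)^{-C^2/2}$ rather than $B^2/n$; it is still absorbed.

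For comparison, the paper avoids truncation altogether. Conditional on the sample, every $f\in\mathcal F_{B,m}$ satisfies $|f(x_i)|\le BM_n$ with $M_n=\max_i\|x_i\|_2$, so the squared loss is $2(BM_n+G_0)$-Lipschitz on the attained range. The paper applies contraction with this sample-dependent constant, then uses Cauchy--Schwarz with $\mathbb E M_n^2\le C(Kr_0+R^2+\log(en))$. It works purely in expectation, via one-sided symmetrization and $\mathbb E\widehat L_n(f_0)=L(f_0)$. So there is no bad event and no truncation bias, and the coefficient $1$ on the approximation error is automatic. Your route is more explicit about controlling the population term over fresh, unbounded inputs; in the paper's argument the ghost sample must implicitly be folded into $M_n$. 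The cost is the event bookkeeping where the error above arose.
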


\begin{proof}
By the arithmetic--geometric mean inequality, every
$f\in\mathcal F_{B,m}$ has path norm at most $B$. By positive homogeneity,
every such function can be represented as
\[
    f(x)=\sum_j c_j\phi(\omega_j^\top x),
    \qquad
    \|\omega_j\|_2=1,
    \qquad
    \sum_j|c_j|\leq B.
\]
Thus $\mathcal F_{B,m}$ is contained in $B$ times the absolutely convex hull
of unit-norm ReLU atoms. Contraction and Cauchy--Schwarz give
\begin{align*}
    \widehat{\mathfrak R}_n(\mathcal F_{B,m})
    &\leq
    {B\over n}\mathbb E_\epsilon
    \sup_{\|\omega\|_2\leq1}
    \left|
        \sum_{i=1}^n\epsilon_i\phi(\omega^\top x_i)
    \right|\\
    &\leq
    {2B\over n}\mathbb E_\epsilon
    \left\|\sum_{i=1}^n\epsilon_i x_i\right\|_2\\
    &\leq
    {2B\over n}
    \left(\sum_{i=1}^n\|x_i\|_2^2\right)^{1/2}.
\end{align*}

For the squared-loss bound, let
\[
    M_n:=\max_{1\leq i\leq n}\|x_i\|_2.
\]
The path-norm bound implies
\[
    |f(x_i)|\leq BM_n
    \qquad\text{for every }f\in\mathcal F_{B,m}.
\]
Since $|y_i|\leq G_0$, the squared loss is
$2(BM_n+G_0)$-Lipschitz in the prediction over the range attained on the
sample. The standard ERM symmetrization and contraction argument therefore
gives
\begin{align*}
    \mathbb E\left[
        \|\widehat f_{\mathrm{MLP}}^{(r_0)}-f_{r_0}^\star
        \|_{L^2(P_x)}^2
    \right]
    &\leq
    \inf_{f\in\mathcal F_{B,m}}
    \|f-f_{r_0}^\star\|_{L^2(P_x)}^2\\
    &\quad+
    C\,\mathbb E\left[
        {B(BM_n+G_0)\over n}
        \left(\sum_{i=1}^n\|x_i\|_2^2\right)^{1/2}
    \right].
\end{align*}
Under the multi-index distribution,
\[
    \|x_i\|_2^2=R^2+\|Z_i\|_2^2,
    \qquad
    Z_i\sim\mathcal N(0,I_{Kr_0}).
\]
Hence
\[
    \mathbb E\sum_{i=1}^n\|x_i\|_2^2=n(Kr_0+R^2),
\]
and Gaussian norm concentration gives
\[
    \mathbb E[M_n^2]
    \leq C\bigl(Kr_0+R^2+\log(en)\bigr).
\]
The result follows from Cauchy--Schwarz.
\end{proof}

\begin{theorem}[Specialized multi-index MLP approximation and consistency]
\label{thm:multi-index-mlp-consistency}
Fix $r_0$, $R$, and a bounded Lipschitz function
$g:\mathbb R^{r_0}\to\mathbb R$ as above. There is a sequence of
cluster-specialized two-layer ReLU networks $(F_K)$ satisfying
\[
    m(F_K)\leq \left\lceil K^{9/8}\right\rceil,
    \qquad
    \|\theta(F_K)\|_{\mathrm F}^2\leq K^{9/8},
\]
and
\[
    \|F_K-f_{r_0}^\star\|_{L^2(P_x)}^2\longrightarrow0.
\]
Moreover, if
\[
    m_K=\left\lceil K^{9/8}\right\rceil,
    \qquad
    B_K=K^{9/8},
    \qquad
    n_K=\left\lceil K^7\right\rceil,
\]
then any empirical squared-loss minimizer over
$\mathcal F_{B_K,m_K}$ satisfies
\[
    \lim_{K\to\infty}
    \mathbb E\left[
        \|\widehat f_{\mathrm{MLP}}^{(r_0)}-f_{r_0}^\star
        \|_{L^2(P_x)}^2
    \right]
    =0.
\]
\end{theorem}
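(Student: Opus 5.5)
Our approach has two halves. First we build the approximating sequence $(F_K)$ from Lemmas~\ref{lem:ridge-ramp-density} and~\ref{lem:routed-multi-index-approximant}. Then we plug $B_K,m_K,n_K$ into the oracle inequality of Lemma~\ref{lem:path-rademacher} and check that every term vanishes.

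\textbf{Approximation.} Lemma~\ref{lem:ridge-ramp-density} gives ridge-ramp expansions $p_s$ with $\|p_s-g\|_{L^2(\gamma_{r_0})}\to0$. Each $p_s$ has a finite number $J_s$ of units and a finite constant $\mathcal C_R(p_s)$; both depend only on $s$, $g$, and $R$, not on $K$. Since within cluster $c$ we have $V_c^\top X=Z$ restricted to block $c$, which is distributed as $\gamma_{r_0}$, the network $F_{K,p_s}$ of Lemma~\ref{lem:routed-multi-index-approximant} satisfies
\[
\|F_{K,p_s}-f^\star_{r_0}\|_{L^2(P_x)}^2=\frac1K\sum_{c}\|p_s-g\|_{L^2(\gamma_{r_0})}^2=\|p_s-g\|^2_{L^2(\gamma_{r_0})},
\]
because the groups for other clusters cancel exactly. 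The same lemma gives width at most $K(2J_s+1)$ and $\|\theta\|_{\mathrm F}^2\le K\,\mathcal C_R(p_s)$. We then choose $s=s(K)$ as the largest $s\le K$ with $\max_{s'\le s}\max\{2J_{s'}+1,\mathcal C_R(p_{s'})\}\le K^{1/8}$ (for small $K$, where no such $s$ exists, take $F_K=0$). Because the left side is a fixed finite number for each $s$, we get $s(K)\to\infty$. Hence the width is at most $K^{9/8}$, the squared Frobenius norm is at most $K^{9/8}$, and the approximation error tends to $0$. Padding with zero neurons puts $F_K\in\mathcal F_{B_K,m_K}$, so $\inf_{f\in\mathcal F_{B_K,m_K}}\|f-f^\star_{r_0}\|^2\to0$.

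\textbf{Estimation.} Lemma~\ref{lem:path-rademacher} applies with $r_0,R$ fixed, so $Kr_0+R^2=O(K)$ and $\log(en_K)=O(\log K)$. The second term is
\[
CB_K^2\sqrt{O(K)\cdot O(K)/n_K}=O\!\left(K^{9/4}\cdot K\cdot K^{-7/2}\right)=O(K^{-1/4})\to0.
\]
The third term is $CB_KG_0\sqrt{O(K)/K^7}=O(K^{9/8-3})\to0$. Adding these to the vanishing approximation term gives consistency.

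\textbf{Main obstacle.} The delicate step is the diagonal choice $s(K)$. We must make sure the slowly growing accuracy index keeps width and norm within $K^{9/8}$ uniformly, including for the routing term $|b|/R$ and the threshold terms $t_j/R$, which are already absorbed into $\mathcal C_R$. The argument relies only on the constants of $p_s$ being independent of $K$, which holds because the construction in Lemma~\ref{lem:routed-multi-index-approximant} is per-cluster and uses only a fixed $R$. A minor additional check is that ERM need not be unique; the bound in Lemma~\ref{lem:path-rademacher} covers any minimizer, so this causes no difficulty.
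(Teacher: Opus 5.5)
Your proposal is correct and follows essentially the same route as the paper. You choose $s_K$ diagonally so that $\max_{s\le s_K}\max\{2J_s+1,\mathcal C_R(p_s)\}\le K^{1/8}$, then route $p_{s_K}$ through Lemma~\ref{lem:routed-multi-index-approximant} to get width and squared Frobenius norm at most $K^{9/8}$ with approximation error $\|p_{s_K}-g\|_{L^2(\gamma_{r_0})}^2\to0$, and finally apply Lemma~\ref{lem:path-rademacher} to obtain estimation terms of order $K^{-1/4}$ and $K^{-15/8}$; your remark that zero-padding places $F_K$ in $\mathcal F_{B_K,m_K}$ makes explicit a step the paper leaves implicit.
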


\begin{proof}
By Lemma~\ref{lem:ridge-ramp-density}, choose ridge-ramp networks
$(p_s)_{s\geq1}$ such that
\[
    \|p_s-g\|_{L^2(\gamma_{r_0})}\longrightarrow0.
\]
Write $J_s$ for the number of ridge-ramp units in $p_s$, and define
\[
    D_s:=\max\left\{2J_s+1,\mathcal C_R(p_s),1\right\},
    \qquad
    A_s:=\max_{1\leq j\leq s}D_j.
\]
Let $p_0=0$ and set
\[
    s_K
    :=\max\left(
        \{0\}\cup
        \{s\leq K:A_s\leq K^{1/8}\}
      \right).
\]
Every $A_s$ is finite, so $s_K\to\infty$.

Apply Lemma~\ref{lem:routed-multi-index-approximant} to $p_{s_K}$. The
resulting network $F_K$ has width at most
\[
    K(2J_{s_K}+1)\leq K^{9/8}
\]
and admits a parameterization satisfying
\[
    \|\theta(F_K)\|_{\mathrm F}^2
    \leq K\mathcal C_R(p_{s_K})
    \leq K^{9/8}.
\]
It is cluster-specialized in the sense of
Lemma~\ref{lem:routed-multi-index-approximant}, and
\[
    \|F_K-f_{r_0}^\star\|_{L^2(P_x)}^2
    =\|p_{s_K}-g\|_{L^2(\gamma_{r_0})}^2
    \longrightarrow0.
\]

Applying Lemma~\ref{lem:path-rademacher} with
$B_K=K^{9/8}$ and $n_K=\lceil K^7\rceil$ gives
\begin{align*}
    \mathbb E\left[
        \|\widehat f_{\mathrm{MLP}}^{(r_0)}-f_{r_0}^\star
        \|_{L^2(P_x)}^2
    \right]
    &\leq
    \|F_K-f_{r_0}^\star\|_{L^2(P_x)}^2\\
    &\quad+
    CB_K^2\sqrt{
        {(Kr_0+R^2)(Kr_0+R^2+\log(en_K))\over n_K}
    }\\
    &\quad+
    CB_KG_0\sqrt{{Kr_0+R^2\over n_K}}.
\end{align*}
Because $r_0$ and $R$ are fixed, the two estimation terms are respectively
$O(K^{-1/4})$ and $O(K^{-15/8})$. Both vanish, proving consistency.
\end{proof}

\begin{corollary}[Single-index specialized MLP upper bound]
\label{cor:single-index-mlp-consistency}
Under \cref{data:rfm-mlp-gap}, the target $f^\star$ admits approximation by
cluster-specialized two-layer ReLU networks with polynomially growing width
and squared Frobenius norm and with approximation error tending to zero.
Moreover, the MLP estimator with
\[
    m_K=\left\lceil K^{9/8}\right\rceil,
    \qquad
    B_K=K^{9/8},
    \qquad
    n_K=\left\lceil K^7\right\rceil
\]
is consistent.
\end{corollary}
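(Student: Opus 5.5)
The plan is to derive the corollary by specializing Theorem~\ref{thm:multi-index-mlp-consistency} to $r_0=1$ and $R=1$. The only real work is to check that \cref{data:rfm-mlp-gap} is exactly the $r_0=1$, $R=1$ instance of the general multi-index distribution.

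\textbf{Matching the two settings.} With $r_0=1$ we have $d=K+K=2K$, and each $E_c\in\mathbb R^{K\times1}$ is the standard basis vector $e_c$. Hence $V_c=[0;e_c]=v_c$ and $\mu_c=[Re_c;0]=[e_c;0]$ when $R=1$. The general construction writes $X=\mu_C+[0;Z]$ with $Z\sim\mathcal N(0,I_K)$. This is precisely $x\mid c\sim\mathcal N(\mu_c,\Sigma_c)$ with the block covariance $\Sigma_c=\operatorname{diag}(0_{K\times K},I_K)$ of \cref{data:rfm-mlp-gap}: the routing block is noiseless and the predictive block is standard Gaussian. The link $g:\mathbb R\to\mathbb R$ is fixed, bounded by $G_0$ and $L$-Lipschitz, as the general theorem requires. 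Therefore $f^\star(x)=g(\langle x,v_c\rangle)=g(V_C^\top X)=f^\star_{1}(X)$ as functions on the support of $P_x$. It follows that the $L^2(P_x)$ errors coincide, and the ERM classes $\mathcal F_{B,m}$ of \cref{train:mlp-gap} and of the general setting coincide as well.

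\textbf{Invoking the theorem.} Theorem~\ref{thm:multi-index-mlp-consistency} then provides cluster-specialized networks $F_K$ with width at most $\lceil K^{9/8}\rceil$ and $\|\theta(F_K)\|_{\mathrm F}^2\le K^{9/8}$, whose approximation error tends to $0$. This gives the polynomial width and squared-norm approximation claim. The same theorem, with $m_K=\lceil K^{9/8}\rceil$, $B_K=K^{9/8}$ and $n_K=\lceil K^7\rceil$, gives consistency of any empirical minimizer. In Lemma~\ref{lem:path-rademacher} the estimation terms read $B_K^2\sqrt{(K+1)(K+1+\log(en_K))/n_K}=O(K^{9/4}\cdot K\cdot K^{-7/2})=O(K^{-1/4})$ and $B_KG_0\sqrt{(K+1)/n_K}\to0$, so the constants behave as in the general proof.

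\textbf{Main obstacle.} The only subtle point is the identification of the target off-support. The general construction's routing uses $\mu_c^\top x/R^2=\mathbf 1_{\{C=c\}}$, and this identity holds only because the routing block is noiseless, which \cref{data:rfm-mlp-gap} guarantees. Since all risks are measured in $L^2(P_x)$ and the ERM uses only samples from the support, no further argument is needed. One should also note that the corollary asserts consistency for $n_K=\lceil K^7\rceil$, whereas Theorem~\ref{thm:rfm-mlp-gap} allows $n_K=\Omega(K^7)$. Larger $n_K$ only shrinks the estimation terms in Lemma~\ref{lem:path-rademacher}, because the $\log(en)/n$ dependence is decreasing, so the result extends to any such sequence.
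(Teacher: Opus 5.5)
Your proposal is correct and matches the paper's proof, which likewise sets $r_0=1$ and $R=1$ in Theorem~\ref{thm:multi-index-mlp-consistency} and observes that $V_c$ reduces to $v_c$, so that the multi-index distribution becomes \cref{data:rfm-mlp-gap}. Your additional checks are correct elaborations of that same argument: the estimation terms are $O(K^{-1/4})$ and $O(K^{-15/8})$, and since the bound decreases in $n$, it also covers $n_K=\Omega(K^7)$.
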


\begin{proof}
Take $r_0=1$ and $R=1$ in
Theorem~\ref{thm:multi-index-mlp-consistency}. Then $V_c$ reduces to the
single predictive direction $v_c$, and the multi-index distribution reduces
to \cref{data:rfm-mlp-gap}.
\end{proof}

\subsubsection{The Single-Index Kernel and RFM Lower Bounds}

For the remainder of the proof, we return to the single-index setting.
Throughout, $d=2K$ and $R\geq1$ is fixed. We use
\[
    \mu_c=[Re_c;0],
    \qquad
    v_c=[0;e_c],
    \qquad
    \Sigma_c=\operatorname{diag}(0_{K\times K},I_K).
\]
Equivalently,
\[
    c\sim\operatorname{Unif}([K]),
    \qquad
    x=\mu_c+
    \begin{bmatrix}
        0\\
        z
    \end{bmatrix},
    \qquad
    z\sim\mathcal N(0,I_K),
\]
and
\[
    f^\star(x)=g(\langle x,v_c\rangle),
\]
where $g:\mathbb R\to\mathbb R$ is fixed, nonconstant, bounded, and
Lipschitz.

\subsubsection{The Ground-Truth AGOP Removes the Routing Block}

Define the ground-truth population AGOP across all input
coordinates by
\[
    M
    :=\mathbb E\left[
        \nabla_x f^\star(x)\nabla_x f^\star(x)^\top
      \right].
\]

\begin{lemma}[The AGOP kernel loses the cluster identity]
\label{lem:ground-truth-agop-loses-routing}
Let $P_V:=\sum_{c=1}^K v_cv_c^\top$ be the orthogonal projector onto the
predictive subspace, let $G\sim\mathcal N(0,1)$, and set
\[
    \alpha:=\mathbb E[g'(G)^2].
\]
Then $0<\alpha\leq L^2$, and
\[
    M=\frac{\alpha}{K}P_V.
\]
Moreover, every measurable function $h$ of $\sqrt Mx$ satisfies
\[
    \mathbb E\left[\bigl(h(\sqrt Mx)-f^\star(x)\bigr)^2\right]
    \geq
    \left(1-\frac1K\right)\operatorname{Var}(g(G)).
\]
\end{lemma}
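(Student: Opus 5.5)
The plan has three steps: compute the AGOP directly from the almost-everywhere gradient formula in the footnote to Training Setup~3, bound $\alpha$ using Lipschitzness and non-constancy of $g$, and then show that $\sqrt M x$ forgets the cluster index, so no function of it can do better than the cluster-mixture conditional mean.

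\textbf{Computing $M$.} For an input from cluster $c$ we have, almost everywhere,
\[
\nabla_x f^\star(x)=g'(\langle x,v_c\rangle)\,v_c .
\]
Conditional on $c$, the coordinate $\langle x,v_c\rangle=z_c$ is standard Gaussian. Hence
\[
\mathbb E\bigl[\nabla f^\star\nabla f^{\star\top}\mid c\bigr]=\alpha\, v_cv_c^\top .
\]
Averaging over the uniform $c$ gives $M=\frac{\alpha}{K}P_V$.

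\textbf{Bounding $\alpha$.} The upper bound $\alpha\le L^2$ holds because $|g'|\le L$ almost everywhere. For $\alpha>0$, argue by contradiction. If $g'=0$ Lebesgue-almost everywhere, then, since $g$ is Lipschitz and hence absolutely continuous, $g$ would be constant. This contradicts the assumption that $g$ is non-constant. The Gaussian density is positive everywhere, so $g'\neq0$ on a set of positive Lebesgue measure implies $\alpha=\mathbb E[g'(G)^2]>0$.

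\textbf{Lower bound on the error.} Since $\sqrt M=\sqrt{\alpha/K}\,P_V$, we get $\sqrt M x=\sqrt{\alpha/K}\,[0;z]$. Its law is the same for every cluster, and it is independent of $c$. Because $\alpha>0$, observing $\sqrt M x$ is equivalent to observing $z$. So any measurable $h(\sqrt M x)$ is a measurable function $\tilde h(z)$, and the best possible choice is the conditional expectation
\[
\mathbb E[f^\star\mid z]=\frac1K\sum_{c}g(z_c).
\]
The minimal error is therefore $\mathbb E\,\operatorname{Var}(g(z_C)\mid z)$, where $C$ is uniform and independent of $z$. Its expectation equals
\[
\mathbb E\Bigl[\frac1K\sum_c g(z_c)^2\Bigr]-\mathbb E\Bigl[\Bigl(\frac1K\sum_c g(z_c)\Bigr)^2\Bigr].
\]
The first term is $\mathbb E[g(G)^2]$. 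For the second, the $z_c$ are i.i.d., so the average has mean $\mathbb E g(G)$ and variance $\operatorname{Var}(g(G))/K$; its second moment is $(\mathbb E g(G))^2+\operatorname{Var}(g(G))/K$. Subtracting gives exactly
\[
\Bigl(1-\frac1K\Bigr)\operatorname{Var}(g(G)).
\]

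There is no serious obstacle. The two points needing care are the well-definedness of the AGOP convention, which uses the support-tangent gradient from the footnote, and justifying $\alpha>0$ via absolute continuity. For the first, I would also remark that $P_V$ annihilates the routing block, which is precisely why the cluster identity is lost.
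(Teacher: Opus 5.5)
Your proposal is correct and follows essentially the same argument as the paper. You compute $M$ from the almost-everywhere gradient formula, obtain $0<\alpha\le L^2$ via absolute continuity of the Lipschitz link, and lower-bound the risk by the expected conditional variance of $f^\star$ given $\sqrt M x$ (equivalently given $z$), which you evaluate explicitly where the paper does it in a single displayed line.
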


\begin{proof}
On the support of cluster $c$, the regression function is
$f^\star(x)=g(\langle x,v_c\rangle)$ and hence, almost everywhere,
\[
    \nabla_x f^\star(x)
    =g'(\langle x,v_c\rangle)v_c.
\]

Since $g$ is Lipschitz, $|g'|\leq L$ almost everywhere, so
$\alpha\leq L^2$. If $\alpha=0$, then $g'=0$ almost everywhere with respect
to Gaussian measure and therefore almost everywhere with respect to Lebesgue
measure. Since a Lipschitz function is absolutely continuous, this would make
$g$ constant. Hence $\alpha>0$.

Averaging the corresponding outer product over the input and the uniform
cluster index gives
\[
    M
    =\frac1K\sum_{c=1}^K
      \mathbb E\left[g'(\langle x,v_c\rangle)^2\mid c\right]v_cv_c^\top
    =\frac{\alpha}{K}\sum_{c=1}^K v_cv_c^\top
    =\frac{\alpha}{K}P_V.
\]
Therefore $\sqrt M=\sqrt{\alpha/K}\,P_V$. In particular, $\sqrt Mx$
contains the predictive coordinates
$S_j:=\langle x,v_j\rangle$, $j\in[K]$, but none of the routing coordinates.
Under the data distribution, $S=(S_1,\ldots,S_K)\sim\mathcal N(0,I_K)$ and is
independent of the uniform cluster index $c$, while
$f^\star(x)=g(S_c)$. Thus, conditional on $\sqrt Mx$,
\[
    \mathbb E[f^\star(x)\mid\sqrt Mx]
    =\frac1K\sum_{j=1}^K g(S_j).
\]
The conditional-expectation characterization of squared-loss regression
implies that every $h(\sqrt Mx)$ has risk at least
\begin{align*}
    \mathbb E\!\left[\operatorname{Var}
        \bigl(f^\star(x)\mid\sqrt Mx\bigr)\right]
    &=\mathbb E\!\left[
        \frac1K\sum_{j=1}^K g(S_j)^2
        -\left(\frac1K\sum_{j=1}^K g(S_j)\right)^2
      \right]\\
    &=\left(1-\frac1K\right)\operatorname{Var}(g(G)).
\end{align*}

Because $g$ is continuous and nonconstant and the Gaussian distribution has
full support, $\operatorname{Var}(g(G))>0$.

\end{proof}

\subsubsection{An All-Orders Lower Bound for Kernel Methods and Regularized RFM}

We next study the full family in \cref{train:rfm-gap}. Let
$P_R:=I_d-P_V$ denote the projector onto the routing subspace. By
Lemma~\ref{lem:ground-truth-agop-loses-routing},
\[
    M_\rho
    =\rho P_R+\left(\rho+\frac{\alpha}{K}\right)P_V.
\]
Thus both the standard metric $I_d$ and the regularized RFM metric $M_\rho$
commute with every rotation of the predictive subspace. The lower bound below
is uniform over $\rho\geq0$, applies to any rotationally invariant base kernel
$\mathcal K$, and is independent of the empirical ridge parameter. 

This degree-by-degree rotational-invariance obstruction is closely related to
high-dimensional results showing that rotationally invariant kernel methods
with polynomially many samples recover only bounded-degree components
\citep{ghorbani2021linearized,ghorbani2020outperform}.
Related group-based analyses of invariant kernel and random-feature models
appear in \citet{mei2021invariances}. More broadly,
\citet{abbe2022nonuniversality} develop symmetry-based limitations for
equivariant learning algorithms.

Let $\operatorname{He}_r$ denote the probabilists' Hermite polynomial and
$h_r=\operatorname{He}_r/\sqrt{r!}$ its normalized version. Write
\[
    g(t)=\sum_{r=0}^\infty\widehat g_r h_r(t),
    \qquad
    \widehat g_r=\mathbb E[g(G)h_r(G)].
\]

\begin{lemma}[Bounded nonconstant links have infinite Hermite support]
\label{lem:infinite-hermite-support}
\label{lem:clipped-ramp-hermite}
The Hermite coefficients of $g$ are not eventually zero. Equivalently,
\[
    \sum_{r>r_0}\widehat g_r^{\,2}>0
    \qquad\text{for every finite }r_0.
\]
\end{lemma}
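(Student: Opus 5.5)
We argue by contradiction. Suppose there is a finite $r_0$ with $\widehat g_r=0$ for every $r>r_0$. Since $g$ is bounded, $g\in L^2(\gamma_1)$ with $\gamma_1=\mathcal N(0,1)$. The normalized Hermite polynomials $(h_r)_{r\geq0}$ form an orthonormal basis of $L^2(\gamma_1)$, so the expansion $g=\sum_r\widehat g_rh_r$ converges in $L^2(\gamma_1)$. Under our assumption, $g$ equals the polynomial
\[
    P(t):=\sum_{r=0}^{r_0}\widehat g_rh_r(t)
\]
in $L^2(\gamma_1)$, that is, $g=P$ for $\gamma_1$-almost every $t$. The completeness of the Hermite system is the only external fact needed, and it is standard.

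Next we upgrade the almost-everywhere equality to a pointwise one. The Gaussian density is strictly positive, so $\gamma_1$-null sets coincide with Lebesgue-null sets. Hence $g=P$ off a Lebesgue-null set $N$. Both $g$ (which is Lipschitz by assumption) and $P$ are continuous. The complement of $N$ is dense, and two continuous functions that agree on a dense set agree everywhere. Therefore $g\equiv P$ on $\mathbb R$.

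Finally, $\|g\|_\infty\leq G_0$ makes $P$ a bounded polynomial on $\mathbb R$. A polynomial of degree at least one satisfies $|P(t)|\to\infty$ as $|t|\to\infty$, so $P$ must be constant. Then $g$ is constant, which contradicts the standing assumption that $g$ is nonconstant.

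This shows the Hermite coefficients are not eventually zero. The equivalence with the tail statement is immediate: if $\sum_{r>r_0}\widehat g_r^{\,2}=0$ for some $r_0$, then every $\widehat g_r$ with $r>r_0$ vanishes, which is exactly the case ruled out above. No step is a genuine obstacle. The only point needing care is the passage from $L^2(\gamma_1)$-equality to pointwise equality, which uses continuity of $g$ together with the full support of the Gaussian.
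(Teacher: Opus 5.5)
Your proposal is correct and follows the paper's proof essentially step for step. Both arguments assume the coefficients vanish beyond $r_0$, identify $g$ with a polynomial in $L^2(\mathcal N(0,1))$, upgrade this to pointwise equality via continuity and the full support of the Gaussian, conclude the polynomial is bounded and hence constant, and then use Parseval for the tail formulation.
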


\begin{proof}
Suppose instead that, for some finite $r_0$,
\[
    \widehat g_r=0
    \qquad\text{for all }r>r_0.
\]
Then, in $L^2(\mathcal N(0,1))$,
\[
    g(t)=\sum_{r=0}^{r_0}\widehat g_rh_r(t)=:p(t),
\]
where $p$ is a polynomial. Both $g$ and $p$ are continuous, and the Gaussian
distribution has full support. Hence equality almost everywhere implies
$g(t)=p(t)$ for every $t\in\mathbb R$. But $g$ is bounded, so $p$ is a
bounded polynomial and must be constant, contradicting the assumption that
$g$ is nonconstant. Parseval's identity then gives the stated positive-tail
property.
\end{proof}

For $r\geq2$, let $\mathcal V_{r,K}$ be the traceless order-$r$ Gaussian-chaos
space. Equivalently, under the standard isometry between the $r$th Gaussian
chaos and symmetric order-$r$ tensors, $\mathcal V_{r,K}$ corresponds to the
kernel of the tensor trace map. It is the irreducible $O(K)$ representation
of harmonic degree-$r$ polynomials.

\begin{lemma}[Harmonic dimension and directional energy]
\label{lem:harmonic-order-r}
The dimension of $\mathcal V_{r,K}$ is
\[
    H_{r,K}
    =\binom{K+r-1}{r}-\binom{K+r-3}{r-2}.
\]
Let $P_{r,K}$ be orthogonal projection onto $\mathcal V_{r,K}$. For every
unit vector $q\in\mathbb R^K$,
\[
    \left\|P_{r,K}h_r(q^\top z)\right\|_{L^2}^2
    =\alpha_{r,K},
\]
where, writing $r=2m$ or $r=2m+1$,
\[
    \alpha_{2m,K}
    =\prod_{j=0}^{m-1}
      \frac{K+2j-1}{K+2m+2j-2},
\]
and
\[
    \alpha_{2m+1,K}
    =\prod_{j=0}^{m-1}
      \frac{K+2j-1}{K+2m+2j}.
\]
For every fixed $r$, $H_{r,K}=\Theta_r(K^r)$ and
$\alpha_{r,K}\to1$ as $K\to\infty$.
\end{lemma}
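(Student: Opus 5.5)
The plan is to work throughout with the standard isometry between the $r$th Gaussian chaos of $z\sim\mathcal N(0,I_K)$ and homogeneous degree-$r$ polynomials on $\mathbb R^K$ equipped with the Fischer inner product $\langle p,s\rangle_F=p(\partial)s$. Under this isometry:
\[
    h_r(q^\top z)\;\longleftrightarrow\;\frac{(q^\top x)^r}{\sqrt{r!}},
    \qquad
    \langle (q^\top x)^r,s\rangle_F=r!\,s(q),
    \qquad
    \|(q^\top x)^r\|_F^2=r!\,\|q\|_2^{2r}.
\]
The tensor trace map corresponds to the Laplacian $\Delta$, so $\mathcal V_{r,K}$ is identified with the harmonic polynomials $\mathcal H_r$ of degree $r$, and $P_{r,K}$ becomes the $F$-orthogonal projection onto $\mathcal H_r$. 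I would first justify this identification by checking that the Hermite-to-monomial correspondence sends the chaos inner product to $\langle\cdot,\cdot\rangle_F$, since this is what makes all later norm computations legitimate.

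\textbf{Dimension.} Multiplication by $|x|^2$ is the $F$-adjoint of $\Delta$. This gives the Fischer decomposition
\[
    \mathcal P_r=\mathcal H_r\oplus |x|^2\mathcal P_{r-2},
\]
with $|x|^2$ injective. Therefore $H_{r,K}=\dim\mathcal P_r-\dim\mathcal P_{r-2}$, which is exactly the stated binomial difference. For fixed $r$ the leading term is $K^r/r!$, and the subtracted term is $O(K^{r-2})$, so $H_{r,K}=\Theta_r(K^r)$.

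\textbf{Directional energy.} Since $O(K)$ acts transitively on unit vectors and commutes with $P_{r,K}$, the quantity $\alpha_{r,K}$ does not depend on $q$. I would iterate the Fischer decomposition to write
\[
    (q^\top x)^r=\sum_{j=0}^{\lfloor r/2\rfloor}|x|^{2j}H_{r-2j},
    \qquad H_{r-2j}\in\mathcal H_{r-2j},
\]
which is $F$-orthogonal. The coefficients are determined by applying $\Delta^j$ and using
\[
    \Delta\bigl(|x|^{2k}H\bigr)=2k\,(2k+K+2\deg H-2)\,|x|^{2k-2}H
    \qquad (H\text{ harmonic}).
\]
From this, the norm of each piece is
\[
    \bigl\||x|^{2j}H_{r-2j}\bigr\|_F^2
    =\Bigl[\prod_{k=1}^{j}2k\bigl(K+2(r-2j)+2k-2\bigr)\Bigr]\,\|H_{r-2j}\|_F^2 .
\]
This yields $\alpha_{r,K}=\|H_r\|_F^2/r!$.

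The cleanest route to $\|H_r\|_F^2$ is the reproducing property. We have $\|H_r\|_F^2=\langle (q^\top x)^r,H_r\rangle_F=r!\,H_r(q)$, and $H_r(q)$ equals the evaluation at $q$ of the zonal harmonic, i.e.\ a normalized Gegenbauer polynomial $C_r^{(K/2-1)}(1)$ times the known constant relating $(q^\top x)^r$ to its harmonic projection. Equivalently, $H_r(q)$ is the leading Gegenbauer coefficient in the expansion of $t^r$.

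\textbf{Main obstacle.} The hard part is turning this into the exact products $\prod_{j=0}^{m-1}\frac{K+2j-1}{K+2m+2j-2}$ (even $r$) and $\prod_{j=0}^{m-1}\frac{K+2j-1}{K+2m+2j}$ (odd $r$). My first attempt would be a two-step recursion in $r$. Multiplying by $q^\top x$ and projecting gives $H_{r}$ in terms of $(q^\top x)H_{r-1}$ minus a correction $\propto |x|^2\nabla$-term. The correction coefficient is $\frac{r-1}{K+2r-4}$, obtained by forcing the result to be harmonic. This would produce the ratio $\alpha_{r,K}/\alpha_{r-1,K}$ as an explicit rational function of $K$ and $r$. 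I would then telescope, checking the base cases $\alpha_{0}=\alpha_1=1$ and $\alpha_{2,K}=(K-1)/K$; the last matches the stated formula.

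\textbf{Asymptotics.} For fixed $r$, $\alpha_{r,K}$ is a finite product of factors of the form $(K+a)/(K+b)$ with $a,b$ fixed. Each factor tends to $1$, so $\alpha_{r,K}\to1$ as $K\to\infty$.
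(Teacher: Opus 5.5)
Your proposal is correct, and for the directional energy it takes a genuinely different route from the paper.

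The dimension count is the same argument in adjoint form. The paper uses surjectivity of the tensor trace; you use injectivity of its adjoint, multiplication by $|x|^2$.

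For $\alpha_{r,K}$, both arguments reduce to the single number $H_r(q)$. The paper phrases it as the coefficient of $x_1^r$ in the harmonic projection of $x_1^r$, i.e.\ $\langle e_1^{\otimes r},P_{r,K}e_1^{\otimes r}\rangle$; you obtain it from the reproducing identity. The paper then quotes the closed-form projection $\mathcal H_r[x_1^r]=\sum_j c_j\|x\|^{2j}\Delta^j x_1^r$ and asserts, without showing it, that the resulting alternating Gamma-function sum collapses to the two products. Your recursion avoids that summation entirely.

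One point you should state precisely. Enforcing harmonicity gives
\[
    H_r=(q^\top x)H_{r-1}-\frac{|x|^2}{K+2r-4}\,(q\cdot\nabla)H_{r-1}.
\]
Two facts justify this:
\begin{itemize}
\item $(q\cdot\nabla)H_{r-1}$ is harmonic of degree $r-2$, so $\Delta\bigl(|x|^2(q\cdot\nabla)H_{r-1}\bigr)=2(K+2r-4)(q\cdot\nabla)H_{r-1}$.
\item $(q^\top x)^r-(q^\top x)H_{r-1}\in|x|^2\mathcal P_{r-2}$, so this harmonic polynomial is indeed the projection of $(q^\top x)^r$.
\end{itemize}
Thus the coefficient of the gradient term is $1/(K+2r-4)$, not $(r-1)/(K+2r-4)$. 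The factor $r-1$ appears only after evaluating at $x=q$, where Euler's identity gives $(q\cdot\nabla H_{r-1})(q)=(r-1)H_{r-1}(q)$. This is a one-step recursion:
\[
    \alpha_{r,K}=\alpha_{r-1,K}\,\frac{K+r-3}{K+2r-4},
    \qquad\text{hence}\qquad
    \alpha_{r,K}=\prod_{n=2}^{r}\frac{K+n-3}{K+2n-4}.
\]
Cancelling the numerator factors $K,K+2,\dots$ against the first denominator factors gives exactly the stated even and odd products. Write this regrouping out, since it is what produces the displayed formulas.

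Compared with the paper, your route is more elementary and self-contained. Unlike the Gegenbauer shortcut you also mention, it needs no special handling at $K=2$, where $\lambda=K/2-1=0$. The paper's route is shorter on the page but rests on a quoted projection formula and an unshown hypergeometric simplification. The Fischer norms of the lower pieces $|x|^{2j}H_{r-2j}$ are not needed for your argument.
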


\begin{proof}
The dimension of symmetric order-$r$ tensors is
$\binom{K+r-1}{r}$. The trace map onto symmetric order-$(r-2)$ tensors is
surjective, and its kernel is the traceless subspace. This gives the stated
dimension.

Under the Gaussian-chaos/tensor isometry,
$h_r(q^\top z)$ corresponds to $q^{\otimes r}$. By rotational
invariance, take $q=e_1$. The orthogonal projection of the homogeneous
polynomial $x_1^r$ onto harmonic degree $r$ is
\[
    \mathcal H_r[x_1^r]
    =\sum_{j=0}^{\lfloor r/2\rfloor}
      \frac{(-1)^j\Gamma(r-j+K/2-1)}
           {4^j j!\,\Gamma(r+K/2-1)}
      \|x\|^{2j}\Delta^j x_1^r.
\]
Since
\[
    \Delta^j x_1^r=\frac{r!}{(r-2j)!}x_1^{r-2j},
\]
the coefficient of $x_1^r$ in this harmonic projection is
\[
    \sum_{j=0}^{\lfloor r/2\rfloor}
      \frac{(-1)^j r!\,\Gamma(r-j+K/2-1)}
           {4^j j!(r-2j)!\,\Gamma(r+K/2-1)}.
\]
In the symmetric-tensor inner product, this coefficient equals
$\langle e_1^{\otimes r},P_{r,K}e_1^{\otimes r}\rangle$, which equals the
squared norm of the projection. Simplifying the finite sum gives the two
product formulas displayed above. The asymptotic statements follow
immediately from those formulas and the dimension expression.
\end{proof}

\begin{lemma}[Invariant random-subspace bound]
\label{lem:random-subspace-bound}
Let $V$ be a $D$-dimensional irreducible orthogonal representation of a compact
group. Let $S\subseteq V$ be a random subspace with invariant law and
$\dim S\leq N$ almost surely. Then, for every fixed $v\in V$,
\[
    \mathbb E\operatorname{dist}(v,S)^2
    \geq\left(1-\frac ND\right)_+\|v\|_2^2.
\]
\end{lemma}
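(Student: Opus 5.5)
We reduce to a statement about the expected projector onto $S$. Let $\Pi_S$ be orthogonal projection onto $S$, so that
\[
    \operatorname{dist}(v,S)^2=\|v\|_2^2-\langle v,\Pi_S v\rangle .
\]
Taking expectations gives
\[
    \mathbb E\operatorname{dist}(v,S)^2=\|v\|_2^2-\langle v,\bar\Pi v\rangle,
    \qquad
    \bar\Pi:=\mathbb E[\Pi_S].
\]
The expectation is well defined because $\Pi_S$ has operator norm at most one and $V$ is finite dimensional. The whole argument therefore consists of identifying $\bar\Pi$.

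\textbf{Step 1 (equivariance of the averaged projector).} Let $\rho$ denote the representation. For any group element $h$, the projector onto $\rho(h)S$ is $\rho(h)\Pi_S\rho(h)^{-1}$. The law of $S$ is invariant, so $\rho(h)S$ has the same law as $S$. Hence
\[
    \rho(h)\bar\Pi\rho(h)^{-1}=\bar\Pi
    \qquad\text{for every } h,
\]
that is, $\bar\Pi$ commutes with the representation.

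\textbf{Step 2 (Schur's lemma).} We would like to conclude that $\bar\Pi=\lambda I_D$. The point needing care is that $V$ is a \emph{real} orthogonal representation. Over $\mathbb R$, Schur's lemma says only that the commutant of an irreducible representation is a division algebra: $\mathbb R$, $\mathbb C$, or $\mathbb H$. To handle this we use that $\bar\Pi$ is symmetric positive semidefinite. Each of its eigenspaces is invariant under the representation, since $\bar\Pi$ commutes with it. By irreducibility, each eigenspace is therefore $\{0\}$ or all of $V$. Consequently $\bar\Pi$ has a single eigenvalue, so $\bar\Pi=\lambda I_D$ with $\lambda\geq0$. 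This symmetric-eigenspace argument is the only potential obstacle, and it avoids the division-algebra issue completely.

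\textbf{Step 3 (trace identity).} Taking traces,
\[
    \lambda D=\operatorname{tr}\bar\Pi=\mathbb E\operatorname{tr}\Pi_S=\mathbb E\dim S\leq N .
\]
Thus $\lambda\leq N/D$. Substituting $\bar\Pi=\lambda I_D$ into the identity from the first paragraph gives
\[
    \mathbb E\operatorname{dist}(v,S)^2=(1-\lambda)\|v\|_2^2\geq\left(1-\frac ND\right)\|v\|_2^2 .
\]
The distance is nonnegative, so the bound also holds with the positive part $\left(1-\frac ND\right)_+$. This completes the proof.

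Measurability is standard. The map $S\mapsto\Pi_S$ into the compact set of projection matrices is how the law of $S$ is interpreted, so $\mathbb E[\Pi_S]$ is defined entrywise. No further obstacles arise.
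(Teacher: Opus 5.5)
Your proof is correct and follows the same route as the paper: you average the projector, use invariance to show $\mathbb E\Pi_S$ commutes with the group action, conclude $\mathbb E\Pi_S=\beta I_V$, and bound $\beta$ with the trace identity $\beta D=\mathbb E\dim S\leq N$. Your eigenspace argument, which uses that $\mathbb E\Pi_S$ is symmetric positive semidefinite, tightens the paper's bare appeal to Schur's lemma; over $\mathbb R$, Schur's lemma alone gives only a division-algebra commutant unless self-adjointness is used.
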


\begin{proof}
Let $\Pi_S$ be orthogonal projection onto $S$. Invariance implies that
$\mathbb E\Pi_S$ commutes with the group action. Schur's lemma gives
$\mathbb E\Pi_S=\beta I_V$. Taking traces yields
$\beta D=\mathbb E\dim S\leq N$. Therefore
\[
    \mathbb E\operatorname{dist}(v,S)^2
    =\|v\|_2^2-\mathbb E\|\Pi_Sv\|_2^2
    =(1-\beta)\|v\|_2^2
    \geq\left(1-\frac ND\right)_+\|v\|_2^2.
\]
\end{proof}

\begin{lemma}[All-orders lower bound for regularized AGOP kernel methods]
\label{lem:all-orders-krr-lower-bound}
For every rotationally invariant kernel $\mathcal K$, every sample size $n$,
every $\rho\geq0$, and every empirical ridge parameter $\lambda_n\geq0$,
both estimators from \cref{train:rfm-gap} satisfy
\[
    \mathbb E\left[
        \|\widehat f-f^\star\|_{L^2(P_x)}^2
    \right]
    \geq
    \sum_{r=2}^\infty
    \widehat g_r^{\,2}\alpha_{r,K}
    \left(1-\frac{n}{H_{r,K}}\right)_+,
    \qquad
    \widehat f\in
    \left\{\widehat f_{\mathrm{Kernel}},\widehat f_{\mathrm{RFM}}\right\}.
\]
\end{lemma}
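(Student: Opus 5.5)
Both kernel estimators are kernel ridge regressions whose kernel $\mathcal K_{M}(x,x')=\mathcal K(\sqrt M x,\sqrt M x')$ uses a metric $M\in\{I_d,M_\rho\}$. Each such $M$ commutes with every orthogonal map $Q$ acting on the predictive block and fixing the routing block, by Lemma~\ref{lem:ground-truth-agop-loses-routing} and the decomposition $M_\rho=\rho P_R+(\rho+\alpha/K)P_V$. By the representer theorem, for any $\lambda_n\geq0$ (taking minimum-norm interpolation when $\lambda_n=0$) we have $\widehat f\in\operatorname{span}\{\mathcal K_M(x_i,\cdot)\}_{i\le n}=:S_n$, a random subspace of $L^2(P_x)$ of dimension at most $n$. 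The plan is to lower bound $\|\widehat f-f^\star\|^2\ge\operatorname{dist}(f^\star,S_n)^2$ and then decompose this distance using orthogonal pieces of $L^2(P_x)$ on which $O(K)$ acts irreducibly.

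\textbf{Decomposition.} On the support, write $x=(Re_c,z)$, so that $L^2(P_x)=\bigoplus_{c}L^2(\gamma_K)$ with weights $1/K$. In cluster $c$ the target is $g(z_c)=\sum_r\widehat g_r h_r(e_c^\top z)$. For each $r\ge2$ and each $c$, consider the subspace $W_{r,c}$ of functions supported on cluster $c$ and lying in $\mathcal V_{r,K}$ in the $z$ variable. These subspaces are mutually orthogonal across $(r,c)$, and each is an irreducible $O(K)$-representation of dimension $H_{r,K}$, where $O(K)$ acts by $z\mapsto Qz$ and fixes the cluster.

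\textbf{Invariance.} Since $P_x$ and $\mathcal K_M$ are invariant under $Q$, the law of $S_n$ is $Q$-invariant. Hence the projected subspace $\Pi_{W_{r,c}}S_n$ is a random subspace of $W_{r,c}$ with invariant law and dimension at most $n$.

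\textbf{Per-piece bound.} For any subspace $S$ and any closed subspace $W$,
\[
\operatorname{dist}(f,S)^2\ \ge\ \sup_{w\in W,\ \|w\|\le 1}\langle f-\Pi_S f,\,w\rangle^2 .
\]
The naive per-piece inequality $\operatorname{dist}(f,S)^2\ge\sum_{r,c}\operatorname{dist}(\Pi_{W_{r,c}}f,\Pi_{W_{r,c}}S)^2$ is false in general. A single vector of $S$ can spread its mass across many pieces, so this summation is the main obstacle.

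\textbf{Handling the obstacle.} First I would try the following: apply Lemma~\ref{lem:random-subspace-bound} to the whole isotypic component $\bigoplus_c W_{r,c}\cong W_r\otimes\mathbb R^K$, using Schur's lemma on $\mathbb E\Pi_{S_n}$ restricted to that component. By invariance, $\mathbb E\Pi_{S_n}$ acts as $I_{W_r}\otimes B_r$ for some PSD $K\times K$ matrix $B_r$. The targets of different orders are orthogonal, and distinct isotypic components are orthogonal under $\mathbb E\Pi_{S_n}$ by Schur. Therefore
\[
\mathbb E\|\Pi_{S_n}f^\star\|^2=\sum_r\widehat g_r^2\,\alpha_{r,K}\,\tfrac1K\textstyle\sum_c (B_r)_{cc}+(\text{non-harmonic part}),
\]
using $\|P_{r,K}h_r(e_c^\top z)\|^2=\alpha_{r,K}$ from Lemma~\ref{lem:harmonic-order-r}. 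Taking traces gives $H_{r,K}\operatorname{tr}B_r\le \mathbb E\dim(\Pi S_n)$ restricted to that component.

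I expect the per-order bound to come out as $\tfrac1K\operatorname{tr}B_r\le n/H_{r,K}$, possibly after an additional symmetrization over cluster permutations, which makes $B_r$ of the form $aI+b\mathbf 1\mathbf 1^\top$. Combined with $\Pi_{S_n}\preceq I$ and the identity
\[
\operatorname{dist}^2=\|f^\star\|^2-\|\Pi_{S_n}f^\star\|^2,
\]
this yields the captured energy per order. Subtracting from the total harmonic energy $\widehat g_r^2\alpha_{r,K}$ gives exactly $\sum_r\widehat g_r^2\alpha_{r,K}(1-n/H_{r,K})_+$, after discarding the nonnegative remaining components.
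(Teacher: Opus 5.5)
\textbf{Gap: the step you reject is true, and it is the step that closes the proof.}

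Your setup is right and matches the paper. By the representer theorem, $\widehat f\in S_n$. Both $I_d$ and $M_\rho$ commute with rotations of the predictive block, so the law of $S_n$ is invariant under them. Each $\Pi_{W_{r,c}}S_n$ is then an invariant random subspace of the irreducible $W_{r,c}\cong\mathcal V_{r,K}$, of dimension at most $n$.

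The per-piece inequality you call false holds. For mutually orthogonal closed subspaces $W_k$ and any $s\in S$, Bessel's inequality gives
\[
\|f-s\|^2\ \ge\ \sum_k\|\Pi_{W_k}f-\Pi_{W_k}s\|^2\ \ge\ \sum_k\operatorname{dist}\bigl(\Pi_{W_k}f,\Pi_{W_k}S\bigr)^2 ,
\]
because $\Pi_{W_k}s\in\Pi_{W_k}S$ for every $k$ at once. Now take $s=\widehat f$, or the infimum over $s$. A vector of $S$ spreading its mass over many pieces is a \emph{constraint} on the approximant. Decoupling the pieces only enlarges the feasible set, so the decoupled sum is a valid lower bound.

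With this the proof is finished. Apply Lemma~\ref{lem:random-subspace-bound} with $D=H_{r,K}$, $N=n$ and $\|\Pi_{W_{r,c}}f^\star\|_{L^2(P_x)}^2=\widehat g_r^{\,2}\alpha_{r,K}/K$. Each $(r,c)$ term is then at least $(1-n/H_{r,K})_+\,\widehat g_r^{\,2}\alpha_{r,K}/K$ in expectation. Summing over $c$, then over $r$ by monotone convergence, gives the lemma. This is exactly the paper's proof, written cluster by cluster with $S_{c,r}=\operatorname{span}\{P_{r,K}\psi^A_{c,i}\}$.

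\textbf{The replacement argument does not prove the bound as written.} Its last step is only conjectured, and the bookkeeping fails in two places.

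First, the $\mathcal V_{r,K}$-isotypic component of $L^2(P_x)$ is not $\bigoplus_c W_{r,c}$. The order-$q$ chaos contains harmonic degrees $q,q-2,\dots$, so $\mathcal V_{r,K}$ also appears at chaos orders $r+2,r+4,\dots$. The target $f^\star$ has components there, coming from $h_{r+2j}(e_c^\top z)$, and $\mathbb E\Pi_{S_n}$ can couple them to the order-$r$ piece. So $B_r$ acts on an infinite-dimensional multiplicity space, and the order-$r$ harmonic energy does not split off from a ``non-harmonic part'' leaving a separately nonnegative remainder.

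Second, even inside $\bigoplus_c W_{r,c}$, your formula keeps only the diagonal entries $(B_r)_{cc}$. For even $r$ the cluster targets are not orthogonal in $\mathcal V_{r,K}$; for $r=2$, $\langle P_{2,K}h_2(e_c^\top z),P_{2,K}h_2(e_{c'}^\top z)\rangle_{L^2(\gamma_K)}=-1/K$. So the off-diagonal entries of $B_r$ contribute.

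\textbf{How to close the isotypic route if you prefer it.} Use an operator inequality rather than a diagonal formula. Let $f^\star_{(r)}$ be the component of $f^\star$ in the full $\mathcal V_{r,K}$-isotypic subspace, viewed as an operator $F_r$ from the multiplicity space into $\mathcal V_{r,K}$. Then
\[
\mathbb E\|\Pi_{S_n}f^\star_{(r)}\|^2=\operatorname{tr}(F_rB_rF_r^{*})\le\|F_r\|_{\mathrm{op}}^2\operatorname{tr}B_r\le\|f^\star_{(r)}\|^2\,n/H_{r,K},
\]
using $H_{r,K}\operatorname{tr}B_r\le\mathbb E\dim S_n\le n$ and $\|F_r\|_{\mathrm{op}}\le\|F_r\|_{\mathrm{HS}}=\|f^\star_{(r)}\|$. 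Schur's lemma removes the cross-type terms, and $\|f^\star_{(r)}\|^2\ge\widehat g_r^{\,2}\alpha_{r,K}$, which gives the stated bound.
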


\begin{proof}
Let $A=I_d$ for $\widehat f_{\mathrm{Kernel}}$ and $A=M_\rho$ for
$\widehat f_{\mathrm{RFM}}$.
By the representer theorem, for a fixed training sample the predictor belongs
to the span of the $n$ transformed-kernel sections centered at the training
inputs. For a test point in cluster $c$, restrict these sections to that
cluster's support and write them as functions of its predictive argument
$\xi\in\mathbb R^K$:
\[
    \psi_{c,i}^{A}(\xi)
    :=\mathcal K\!\left(
        \sqrt A x_i,
        \sqrt A
        \left(\mu_c+\begin{bmatrix}0\\\xi\end{bmatrix}\right)
      \right),
    \qquad 1\leq i\leq n.
\]
At Hermite order $r\geq2$, define
\[
    S_{c,r}
    :=\operatorname{span}\left\{
        P_{r,K}\psi_{c,i}^{A}:1\leq i\leq n
      \right\}
    \subseteq\mathcal V_{r,K}.
\]
Then $\dim S_{c,r}\leq n$. The predictive coordinates of the training inputs
are standard Gaussian, $A$ commutes with predictive rotations, and
$\mathcal K$ is rotationally invariant. Hence the law of $S_{c,r}$ is
invariant under the $O(K)$ action on $\mathcal V_{r,K}$ for both estimators
and every $\rho\geq0$.

The order-$r$ target component in cluster $c$ is
\[
    \gamma_{c,r}
    =\widehat g_rP_{r,K}h_r(e_c^\top\xi),
\]
whose squared norm is $\widehat g_r^{\,2}\alpha_{r,K}$ by
Lemma~\ref{lem:harmonic-order-r}. The projected predictor lies in $S_{c,r}$.
Since distinct Gaussian-chaos orders are orthogonal, for every finite $L_0$,
\[
    \|\widehat f_c-g(e_c^\top\!\cdot)\|_{L^2}^2
    \geq
    \sum_{r=2}^{L_0}\operatorname{dist}(\gamma_{c,r},S_{c,r})^2.
\]
Apply Lemma~\ref{lem:random-subspace-bound} and then let $L_0\to\infty$ by
monotone convergence. This gives, for every cluster $c$,
\[
    \mathbb E\left[
        \|\widehat f_c-g(e_c^\top\!\cdot)\|_{L^2}^2
    \right]
    \geq
    \sum_{r=2}^\infty
    \widehat g_r^{\,2}\alpha_{r,K}
    \left(1-\frac{n}{H_{r,K}}\right)_+.
\]
Averaging over the uniform test cluster proves the result.
\end{proof}

\begin{lemma}[Polynomial sample budgets leave a nonzero kernel/RFM error]
\label{lem:polynomial-kernel-rfm-failure}
Fix $A<\infty$, let $n_K=O(K^A)$, and let $\rho_K\geq0$ be any sequence.
Then, for each
\[
    \widehat f\in
    \left\{\widehat f_{\mathrm{Kernel}},\widehat f_{\mathrm{RFM}}\right\},
\]
\[
    \liminf_{K\to\infty}
    \mathbb E\left[
        \|\widehat f-f^\star\|_{L^2(P_x)}^2
    \right]
    \geq
    \sum_{\substack{r\geq2\\r>A}}\widehat g_r^{\,2}>0.
\]
\end{lemma}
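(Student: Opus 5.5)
The plan is to start from the all-orders bound of Lemma~\ref{lem:all-orders-krr-lower-bound}, applied with $n=n_K$ and $\rho=\rho_K$. That bound is uniform in $\rho\geq0$ and in the ridge parameter, so the sequence $\rho_K$ plays no role beyond this application. We get, for every $K$,
\[
    \mathbb E\left[\|\widehat f-f^\star\|_{L^2(P_x)}^2\right]
    \geq
    \sum_{r=2}^\infty \widehat g_r^{\,2}\alpha_{r,K}\left(1-\frac{n_K}{H_{r,K}}\right)_+ .
\]
Every summand is nonnegative. So for any finite $L_0$ we may discard the terms with $r\leq A$ or $r>L_0$. This leaves a finite sum over $r\in\{r\geq2: r>A,\ r\leq L_0\}$, and its $\liminf$ can be computed term by term.

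For each fixed $r>A$ with $r\geq2$, we use Lemma~\ref{lem:harmonic-order-r}. It gives $H_{r,K}=\Theta_r(K^r)$, and since $n_K\leq C K^A$ for large $K$, we have $n_K/H_{r,K}=O(K^{A-r})\to0$ because $r>A$. The same lemma gives $\alpha_{r,K}\to1$. Hence each retained term converges to $\widehat g_r^{\,2}$, and
\[
    \liminf_{K\to\infty}\mathbb E\left[\|\widehat f-f^\star\|^2\right]\geq\sum_{\substack{2\leq r\leq L_0\\ r>A}}\widehat g_r^{\,2}.
\]
Letting $L_0\to\infty$ by monotone convergence yields the claimed bound $\sum_{r\geq2,\,r>A}\widehat g_r^{\,2}$.

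Positivity comes from Lemma~\ref{lem:infinite-hermite-support}, applied with the finite index $r_0=\max\{1,\lceil A\rceil\}$. It says the Hermite tail beyond any finite order has positive energy, and this tail is contained in the index set $\{r\geq2, r>A\}$. Hence the sum is strictly positive.

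The only point needing care is the exchange of $\liminf$ and the infinite sum. Truncating to finitely many nonnegative terms before taking the limit avoids needing dominated convergence, and this is where I expect the argument to be most delicate, though it is routine. Some bookkeeping is also needed for non-integer $A$ and for $r=2$ edge cases, where $H_{r,K}$ is defined only for $r\geq2$; but the sum already starts at $r=2$.
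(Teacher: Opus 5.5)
Your proposal is correct and follows the paper's proof essentially step for step. You apply Lemma~\ref{lem:all-orders-krr-lower-bound} with $n=n_K$ and $\rho=\rho_K$, take the $\liminf$ over finitely many orders $r>A$ using $H_{r,K}=\Theta_r(K^r)$ and $\alpha_{r,K}\to1$ from Lemma~\ref{lem:harmonic-order-r}, enlarge the finite collection by monotone convergence, and get positivity from Lemma~\ref{lem:infinite-hermite-support}; your explicit choice $r_0=\max\{1,\lceil A\rceil\}$ just spells out an index-set containment that the paper leaves implicit.
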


\begin{proof}
Fix an order $r>A$. By Lemma~\ref{lem:harmonic-order-r},
$H_{r,K}=\Theta_r(K^r)$ and $\alpha_{r,K}\to1$, so
\[
    \frac{n_K}{H_{r,K}}\longrightarrow0.
\]
Apply Lemma~\ref{lem:all-orders-krr-lower-bound}. For any finite collection
of orders satisfying $r>A$, each corresponding summand converges to
$\widehat g_r^{\,2}$. Taking the lower limit and then increasing the finite
collection gives, by monotone convergence,
\[
    \liminf_{K\to\infty}
    \mathbb E\left[
        \|\widehat f-f^\star\|_{L^2(P_x)}^2
    \right]
    \geq
    \sum_{\substack{r\geq2\\r>A}}\widehat g_r^{\,2}.
\]

The right-hand side is strictly positive by
Lemma~\ref{lem:infinite-hermite-support}.

\end{proof}

\begin{proof}[Proof of Theorem~\ref{thm:rfm-mlp-gap}]
The specialized MLP approximation and consistency conclusions follow from
Corollary~\ref{cor:single-index-mlp-consistency}. In particular, the choices
\[
    m_K=\left\lceil K^{9/8}\right\rceil,
    \qquad
    B_K=K^{9/8},
    \qquad
    n_K=\left\lceil K^7\right\rceil
\]
are polynomial in $K$ and give
\[
    \mathbb E\left[
        \|\widehat f_{\mathrm{MLP}}-f^\star\|_{L^2(P_x)}^2
    \right]
    \longrightarrow0.
\]

For the lower bound, fix any $A<\infty$, any sequence $n_K=O(K^A)$, and any
sequence $\rho_K\geq0$. Lemma~\ref{lem:polynomial-kernel-rfm-failure} gives
\[
    \liminf_{K\to\infty}
    \mathbb E\left[
        \|\widehat f-f^\star\|_{L^2(P_x)}^2
    \right]
    \geq
    \sum_{\substack{r\geq2\\r>A}}\widehat g_r^{\,2}>0,
    \qquad
    \widehat f\in
    \left\{\widehat f_{\mathrm{Kernel}},\widehat f_{\mathrm{RFM}}\right\}.
\]
The lower bound is uniform over $\rho_K$ and over the empirical ridge
parameters allowed in \cref{train:rfm-gap}. This proves the theorem.
\end{proof}

\end{document}